\documentclass{article}


\usepackage{natbib}

\usepackage[a4paper, total={6in, 8in}]{geometry}




\usepackage[utf8]{inputenc} 
\usepackage[T1]{fontenc}    
\usepackage{hyperref}       
\usepackage{url}            
\usepackage{booktabs}       
\usepackage{amsfonts}       
\usepackage{nicefrac}       
\usepackage{microtype}      
\usepackage{xcolor}         
\usepackage{graphicx}
\usepackage{dsfont}
\usepackage{subfigure}
\usepackage{amsmath}
\usepackage{amssymb}
\usepackage{mathtools}
\usepackage{amsthm}
\usepackage[capitalize,noabbrev]{cleveref}
\usepackage[textsize=tiny]{todonotes}
\usepackage{kky}
\usepackage{algorithmic}
\usepackage{algorithm}

\renewcommand{\algorithmiccomment}[1]{\bgroup\hfill//~#1\egroup}



\theoremstyle{plain}
\newtheorem{theorem}{Theorem}[section]
\newtheorem{proposition}[theorem]{Proposition}
\newtheorem{lemma}[theorem]{Lemma}

\theoremstyle{definition}

\newtheorem{assumption}[theorem]{Assumption}
\theoremstyle{remark}

\newcommand{\MMAB}{\operatorname{MMAB}}
\newcommand{\support}{\operatorname{support}}

\usepackage{authblk}


\title{Constant or logarithmic regret in asynchronous multiplayer bandits}

%

\author[1, 3]{Hugo Richard}
\author[2]{Etienne Boursier}
\author[1, 3]{Vianney Perchet}
\affil[1]{ENSAE, Crest, France}
\affil[2]{INRIA, Université Paris Saclay, LMO, Orsay, France}
\affil[3]{Criteo AI Labs, France}

\date{}

\begin{document}

\maketitle

\begin{abstract}
    Multiplayer bandits have recently been extensively studied because of their application to cognitive radio networks. 
    While the literature mostly considers synchronous players, radio networks (e.g. for IoT) tend to have asynchronous devices. This motivates the harder, asynchronous multiplayer bandits problem, which was first tackled with an explore-then-commit (ETC) algorithm  \citep[see][]{dakdoukMassiveMultiplayerMultiarmed2022}, with a regret upper-bound in $\Ocal(T^{\frac{2}{3}})$. Before even considering decentralization, understanding the centralized case was still a challenge as it was unknown whether getting a regret smaller than $\Omega(T^{\frac{2}{3}})$ was possible. 
    We answer positively this question, as a natural extension of UCB exhibits a $\Ocal(\sqrt{T\log(T)})$ minimax regret. 
    More importantly, we introduce Cautious Greedy, a centralized algorithm that yields constant instance-dependent regret if the optimal policy assigns at least one player on each arm (a situation that is proved to occur when arm means are close enough). Otherwise, its regret increases as the sum of $\log(T)$ over some sub-optimality gaps. We provide lower bounds showing that Cautious Greedy is optimal in the data-dependent terms.
   Therefore, we set up a strong baseline for asynchronous multiplayer bandits and suggest that learning the optimal policy in this problem might be easier than thought, at least with centralization.
\end{abstract}

\section{Introduction}

\label{sec:introduction}
In the classical multi-armed bandits (MAB) problem, a single player sequentially pulls  arms $k_t \in \{ 1, \dots, K \} \defeq [K]$, and receives a reward $X_{k_t}$ sampled from some unknown sub-Gaussian distribution of mean~$\mu_{k_t}$.
This process undergoes repetition for a total of $T$ rounds and the performance of the sampling policy is measured by its regret, the difference between the total expected reward obtained by choosing the best arm $k^*$ at each round 
and the total expected reward of the player's actual choices.
This setting has been extensively studied \citep[see][for a survey]{lattimoreBanditAlgorithms2020}.
A fundamental component of MAB is the exploration and exploitation trade-off, as a good policy should balance between both. Exploration involves trying out different arms to gather information, while exploitation uses the acquired knowledge to favor arms more likely to be the best. Optimal policies are known to have a regret scaling as $\Ocal(\sum_{k\neq k^*}\frac{\log(T)}{\mu_{k^*}-\mu_k})$~\citep{auer2002finite}.

Classical applications of MAB include clinical trials, recommendation systems or ad placements. For many applications, the MAB framework however does not fit the problem at hand. Consider for instance cognitive radios \cite{laiMediumAccessCognitive2008, anandkumarDistributedAlgorithmsLearning2011, mitola1999cognitive,jouini2010upper} where arms correspond to communication channels available to radio devices. What differs from standard MAB is that if two radios choose the same communication channel, they interfere. This example motivates the multiplayer multi-armed bandits ($\MMAB$) setting introduced in \cite{liu2010distributed}. In $\MMAB$, $M$ players simultaneously pull arms. When a player pulls arm $k$, it receives the reward $\eta_k X_k$ where $\eta_k = 0$ if two or more players collide, meaning they pull the same arm $k$, and $\eta_k=1$ if a single player pulls $k$. In the centralized setting with $M \leq K$, this setting is equivalent to bandits with multiple plays \citep{komiyamaOptimalRegretAnalysis, anantharamAsymptoticallyEfficientAllocation1987, chenCombinatorialMultiarmedBandit2013, gopalanThompsonSamplingComplex2013}, where a central entity decides on the behalf of agents and trivially avoids collisions. Optimal algorithms are then known to yield an asymptotic regret $\sum_{k =1}^{K-M}\frac{\log(T)}{\mu_{(K - M + 1)} - \mu_{(k)}}$ \citep{komiyamaOptimalRegretAnalysis}, where $\mu_{(i)}$ is the $i$-th smallest mean reward.

Motivated by Internet of Things networks, we focus on the asynchronous multiplayer multi-armed setting (AMMAB) where each round is decomposed into three successive steps \citep[see][]{dakdoukMassiveMultiplayerMultiarmed2022, bonnefoiMultiArmedBanditLearning2017}. First, all players decide which arm they would like to play. Second, the environment activates independently player $i$ with probability $p_i$. In the third step, activated players pull the arm they chose in the first step. In this model, players  correspond to communicating devices, arms to available channels and $p_i$ is the activation probability of the communicating device $i$. 

We focus on the centralized setting (equivalently, agents' communication is free). Although the decentralized setting might be more fitted to communication network applications, the centralized setting is already challenging enough to warrant a study (furthermore, centralized algorithm performances are benchmarks for decentralized ones, hence it is crucial to investigate the former). Possible extensions, including the decentralized case, are discussed in \Cref{sec:conclusion}.

\paragraph{Notations.} Vectors are denoted in bold. If $\ub \in \RR^n$, $u_i$ is the $i$-th coordinate of $\ub$ while $u_{(i)}$ the $i$-th smallest coordinate of $u$ and $\support(\ub) = \{ i \in [n], u_i \neq 0 \}$. We denote for $\ub, \vb \in \RR^n$, $\langle \ub, \vb \rangle = \sum_{i=1}^n u_i v_i$, $\|\ub\|_\infty = \max_{i \in [n]} |u_i|$, $\|\ub\|_1 = \sum_{i \in [n]} |u_i|$. For a function $f:\RR \rightarrow \RR$ and $\ub \in \RR^n$, $f(\ub) \in \RR^n$ is defined by $f(\ub)_i = f(u_i)$. 
Lastly, $\overline{E}$ denotes the complementary event of $E$.

\paragraph{Setting and assumptions.}
For simplicity, we follow \citet{bonnefoiMultiArmedBanditLearning2017} and assume that the probability of being active is the same for all players: $p_i = p$, for all $i \in [M]$.
This makes players exchangeable and allows for a simplified description of the AMMAB setting. 
At each round $t$, a central entity chooses a (possibly random) assignment $\Mb(t) = (M_1(t), \dots, M_K(t))$ where $M_k(t)$ is the number of players assigned to arm $k$ at round~$t$; the environment then activates each player with probability\footnote{The central entity does not know beforehand which players will be active, making collisions unavoidable.}~$p$ and active players pull the arm to which they are assigned, each receiving reward $\eta_k(M_k(t)) X_k$ where $\eta_k = \ind{\text{exactly one player is active on arm $k$}}$. 
Active players communicate to the central entity their earned reward $X_k \eta_k$. Additionally, the central entity observes the collision events $(\eta_k)_{k \in [K]}$, and the parameters $M$, $K$ and $p$ are assumed to be known beforehand.

At any time $t$, the assignment $\Mb(t)$ satisfies the budget constraint $\sum_{k=1}^K M_k(t)= M$ and we assume:
\begin{assumption}
$M \geq K$ and for all $k \in [K]$ and at all stages $M_k(t) \leq \frac{-1}{\log(1 - p)}\simeq \frac{1}{p}$.
\end{assumption}
 The second condition is not restrictive, as assigning more than $\frac{-1}{\log(1 - p)}$ players on the same arm only decreases the obtained reward and amount of information on that arm. A better policy would then have some players not play at all instead, or equivalently assign players to a dummy arm whose reward is known to be $0$.
The set of valid assignments is thus denoted by $$\mathcal{M} = \{ \Mb \in [M]^K \mid \sum_{k=1}^K M_k= M, M_k \leq \frac{-1}{\log(1 -p)} \}.$$
The goal is  to minimize the expected regret defined by:
\begin{equation}
\EE[R] = \sum_{t=1}^T \sum_{k=1}^K  \EE[\eta_k(M^*_k) X_k] - \EE[\eta_k(M_k(t)) X_k]
\label{eq:regret}
\end{equation}
where $\Mb^* = (M_1^*, \dots, M^*_K)$ is the optimal assignment:
\begin{equation}
\Mb^* = \argmax_{\Mb \in \mathcal{M}}\EE\left[ \sum_{k=1}^K \eta_k(M_k) X_k\right].
\label{eq:maxproblem}
\end{equation}
\citet{bonnefoiMultiArmedBanditLearning2017} designed an algorithm solving \Cref{eq:maxproblem} with known $\mu_k$, 
%
and \citet{dakdoukMassiveMultiplayerMultiarmed2022} later proposed a simpler sequential algorithm. 
In combination with some explore-then-commit policy, it yields a regret scaling in $\Ocal(T^{\frac23})$. Additionally, \cite{dakdoukMassiveMultiplayerMultiarmed2022} show there is no random assignment yielding a strictly larger expected reward than the deterministic optimal assignment $\Mb^*$.

\paragraph{Contributions.}
We prove that an adapted version of UCB exhibits a regret in $\Ocal(\sqrt{TK \log(T) \min(Mp, K)})$ where $\Ocal(\cdot)$ hides universal constant factors. This result largely outperforms the $\Ocal(T^{\frac23})$ regret bound of the ETC algorithm by~\citet{dakdoukMassiveMultiplayerMultiarmed2022}.
Contrary to the lower bound by~\citet[][Theorem 2]{wang2017improving}, the $1/p$ term does not appear in this bound, as the two settings are slightly different: although rewards are observed with probability~$p$, rewards are also scaled by $p$ in our case (thus canceling out the terms in $p$). More surprisingly, our main contribution shows that achieving a constant regret (in $T$) is sometimes possible with an algorithm called Cautious Greedy. The analysis of centralized UCB is thus postponed to \Cref{app:ucb} and the main text solely focuses on Cautious Greedy.
In essence, it is a standard greedy algorithm that estimates $\mu_k$ via empirical means, but it is cautious as it avoids assigning zero players to an arm unless, with high confidence, assigning no players to it is optimal. More precisely, Cautious Greedy maintains a lower bound $\nu$ of the number of arms that should be assigned zero players and stops assigning players to the $\nu$ worst arms when confident enough.

The regret of Cautious Greedy depends on several data-dependent quantities  defined in \Cref{sec:analysis}: \begin{itemize}
    \item 
$\nu^*$ the number of arms that are assigned zero players in the optimal assignment,  
\item $\Delta_{(j)} = \mu_{(\nu^*+1)} - \mu_{(j)}$, \item $\Mb_{\nu}^*$ the optimal assignment when $\nu$ arms are assigned zero players and 
\item $r$ the infinity norm of the minimum perturbation of the true arm means $\mub$ that would modify the sequence $(\Mb^*_{\nu})_{\nu=1}^{\nu^{*}}$.\end{itemize}
\Cref{prop:greedy:up} together with \Cref{lemma:third term} show that the regret of Cautious Greedy is upper bounded by $\Ocal\left(\frac1{r}+ \sum_{j \leq \nu^*} \frac{\log(T)}{\Delta_{(j)}}\right)$, where $\Ocal$ hides terms depending on $K, p$ and $M$.

In particular, Cautious Greedy achieves constant regret if $\nu^*=0$, i.e., when each arm is assigned at least one player by the optimal policy.  As shown by the lower bound in \Cref{greedy:lb:constant}, under mild conditions, the dependency in $\frac1{r}$ cannot be improved. In \Cref{lemma:greedy:necessary condition elimination}, we give a sufficient condition on the dispersion of arm means to get $\nu^{*}=0$.
In general, Cautious Greedy suffers an additional dependency in $\sum_{j \leq \nu^*} \frac{\log(T)}{\Delta_{(j)}}$. This dependency also appears in bandits with multiple plays~\citep{komiyamaOptimalRegretAnalysis} and as shown by \Cref{greedy:lb:gap} cannot be removed. This makes Cautious Greedy optimal with respect to $T$ and with respect to the data-dependent quantities $r$ and $\Delta_{(j)}$. 

The main difficulty of the problem comes from the fact that $\nu^{*}$ is unknown. A classical Greedy algorithm yields a linear regret when $\nu^{*} > 0$, while a traditional bandits algorithm never reaches constant regret when $\nu^{*} = 0$. On the other hand, Cautious Greedy performs optimally in both cases.

\Cref{sec:experiments} benchmarks Cautious Greedy against our UCB algorithm and the centralized ETC algorithm of~\citet{dakdoukMassiveMultiplayerMultiarmed2022} on synthetic data and show that Cautious Greedy and UCB perform both significantly better than ETC. Cautious Greedy outperforms UCB when no arms should be assigned zero players while UCB tends to be better when at least one arm should be assigned zero players.

\section{Related work}

\paragraph{Centralized setting: multiplay, combinatorial and structured bandits.} As already highlighted, when $M \leq K$ and $p=1$, AMMAB is equivalent to bandits with multiple plays. A lower bound in $\sum_{j=1}^{\nu^*} \frac{\log(T)}{\Delta_{(j)}}$ where $\nu^* = K - M$ is shown in \cite{anantharamAsymptoticallyEfficientAllocation1987}. This lower bound is reached by a Thompson sampling-based algorithm~\citep{komiyamaOptimalRegretAnalysis}. Bandits with multiple plays are an instance of combinatorial bandits~\citep{gai2012combinatorial, chen2013combinatorial, kveton2015tight, combes2015combinatorial, wang2018thompson, perrault2020statistical} where an agent chooses an action $\ab \in \Scal$ and receives reward $r(\mub, \ab)$. When $M \leq K$, AMMAB is an instance of combinatorial bandits with semi-bandit feedback and probabilistically triggered arms (meaning that chosen arms are triggered with some probability) \citep{wang2017improving, chen2016combinatorial} with the difference that in these works, rewards are scaled by $1/p$. More generally, AMMAB can be viewed as combinatorial bandits or structured bandits \cite{combes2017minimal} with semi-bandit feedback and $KM$ possible actions. 
None of these works yet allow to reach constant regret when $\nu^*=0$.

\vspace{-0.2cm}
\paragraph{Decentralized multiplayer bandits.} 
In decentralized multiplayer bandits,  players aim at speeding up the collective learning of the arm rewards, while avoiding collisions. Motivated by cognitive radio networks, the decentralized problem of multiplayer bandits recently received a lot of attention~\citep[we refer to][for a review]{boursier2022survey}, sometimes assuming a pre-agreement on the ranks of the players~\citep{anandkumar2010opportunistic,liu2010distributed} or using few collisions to communicate information between players \citep{avner2014concurrent, rosenski2016multi, besson2018multi}. However, \citet{bistritz2018distributed,boursierSICMMABSynchronisationInvolves2019, wangOptimalAlgorithmsMultiplayer2020} enforce collisions to send a significant number of bits between the players, allowing to reach optimal centralized performance. This idea is also used in many extensions of MMAB~\citep{mehrabian2020practical,shi2020decentralized, huang2021towards,boursier2020selfish,shi2021heterogeneous}. This communication through collision trick yet highly depends on the synchronicity of the players and becomes costly with a lot of players. In AMMAB, the players are asynchronous ($p<1$) and numerous ($M\geq K$), making both drawbacks significant. This work thus proposes a centralized asynchronous algorithm, leaving open for future work a possible decentralized adaptation (see \Cref{sec:conclusion} for a discussion). 

 In the multi-agent bandit problem considered by \citet{szorenyi2013gossip,landgren2016distributed,martinez2019decentralized}, no collision happens when several players pull the same arm. The problem is thus different in nature: the main objective of multi-agent bandits is to speed up learning using decentralized communication protocols (e.g. gossip), without consideration of collision.  
%
\vspace{-0.2cm}

\paragraph{Full information.} When each arm is assigned at least one player, it provides information with a strictly positive probability at each time step. Therefore in this regime, the central entity is almost in full information feedback, where information about all arms is received at every round. Bandits with expert advice are  examples of problems with full information feedback. The go-to algorithm in the adversarial setting is (variants of) exponential weights or Hedge \citep{mourtadaOptimalityHedgeAlgorithm2019}. However, in the stochastic setting, a constant regret is achieved by Greedy (aka Follow The Leader) which plays according to the empirical mean estimate of the rewards~\citep{degenneAnytimeOptimalAlgorithms2016}. \cite{huangFollowingLeaderFast2017} show Greedy achieves constant regret in a more structured setting. 
\vspace{-0.2cm}

\paragraph{Resource allocation.}
Our problem is also a particular instance of sequential resource allocation with concave utilities~\citep{lattimore2015linear,fontaine2020adaptive,zuo2021combinatorial}. Although general resource allocation algorithms could be used in our setting, much better solutions can be obtained by leveraging the very specific structure of the utilities. The utility functions are indeed exactly known up to the multiplicative factor $\mu_k$.
%
\vspace{-0.2cm}

\paragraph{Asynchronous multiplayer bandits.}
AMMAB was introduced by \cite{bonnefoiMultiArmedBanditLearning2017} in the context of cognitive radios. In~\cite{dakdoukMassiveMultiplayerMultiarmed2022}, players have heterogeneous activation probabilities. In addition, the authors keep track of the amount of communication between agents. They propose an explore and commit algorithm that reaches a sub-optimal $\Ocal(T^{\frac23})$ regret. In contrast, we show that under favorable conditions, a constant regret can be reached. Extending our results to the heterogeneous setting of~\citet{dakdoukMassiveMultiplayerMultiarmed2022} remains open for future work. 
Quite interestingly in AMMAB, the expected individual reward decreases as more players are assigned to the same arm. This relates the AMMAB model to more advanced collision models for MMAB, where a collision only decreases the reward instead of yielding a $0$ reward~\citep{tekin2012online, bande2019multi, magesh2019multi, boyarski2021distributed}. 
AMMAB is also related to the problem of online queuing systems~\citep{gaitonde2020stability,sentenac2021decentralized}, where packets arrive in a queue (player) with random rates. This setting yet differs from AMMAB, as players are active as long as they hold packets.

\section{Cautious Greedy, an efficient centralized algorithm for AMMAB}
\label{sec:cautious greedy}

Let us first define the function $g(x) = x p (1 - p)^{x-1}$, so that the regret in \Cref{eq:regret}  rewrites as  
\begin{equation}
\EE[R] = \sum_{t=1}^T \EE\big[ \big\langle \mub, g(\Mb^*) - g(\Mb(t))\big\rangle\big]
\label{eq:regret2}
\end{equation}
where $\Mb^*= \argmax_{\Mb \in  \mathcal{M}} \langle \mub, g(\Mb) \rangle$ is a rewritting of \Cref{eq:maxproblem}.

\subsection{Description}
\label{greedy:desc}
Cautious Greedy is based on a standard greedy strategy that plays the best policy according to the estimated mean rewards. Cautious Greedy therefore computes mean rewards estimates~$\hat{\mub}(t)$:
\begin{align}
\hat{\mu}_k(t) = \frac{\sum_{\rho=1}^t \eta_k^\rho(\Mb(\rho)) X_k^\rho}{T_k(t)}
\label{eq:mu_hat}
\end{align}
where $T_k(t) = \sum_{\rho=1}^t \eta_k^\rho(\Mb(\rho))$ is the number of samples gathered from arm $k$ at time $t$ or equivalently, the number of times that arm $k$ has been played by exactly one player up to time $t$. By convention, we set $\hat{\mu}_k(t) = 1$ if $T_k(t) = 0$.

A Greedy algorithm would then choose the assignment $\Mb(t) = \Mb^{\hat{\mu}(t)}_{\Mcal}$ where
\begin{equation}
\label{eq:mb nu:notation}
    \Mb^{\hat{\mu}}_{\Mcal} = \argmax_{\Mb \in  \mathcal{M}} \langle \hat{\mub}, g(\Mb) \rangle.
\end{equation}
Such a simple strategy would quickly stop exploring, at the risk of committing to a suboptimal policy. In order to maintain some level of exploration, a natural idea is to impose at least one player per arm.
However, in some settings, the optimal solution might assign no players to some arms. The challenging task of Cautious Greedy is then to identify which arms should be assigned zero players. We call such identified arms \emph{removed} while \emph{active arms} are those not removed yet.
Cautious Greedy can put a set of arms $\mathcal{S}$ \emph{under pressure}, meaning that these arms are temporally allowed to be assigned no player. Arms that are assigned at least one player are said to be \emph{played} and note that it is possible that an arm under pressure is played.
Formally, the constraints that apply to $\Mb$ in the assignment problem will be described by sets of the form:
\[
 \mathcal{M}_{\mathcal{K}} = \{ \Mb \in \mathcal{M},  \forall k \in \mathcal{K}, M_k \geq 1 \}
 \]
where $\mathcal{K} \subset [K]$. $\mathcal{M}_{\mathcal{K}}$ is the set of assignments that put under pressure $[K]\setminus\mathcal{K}$.
In order to identify the arms to remove, Cautious Greedy maintains confidence bounds on the mean of each arm.
The upper and lower bounds are given respectively by 
\begin{gather}
    \hat{\mub}^H(t) = \min(\hat{\mub}(t) + \zetab(t), 1) \label{eq:muh}\quad \text{and}\quad\hat{\mub}^L(t) = \max(\hat{\mub}(t) - \zetab(t), 0)\\
\text{where for all }k \in [K], \qquad   \zeta_k(t) = \sqrt{\frac{\log(2 T^2 K^2)}{2 T_k(t)}}.
  \label{eq:zeta}
 \end{gather}
These bounds are used to eliminate sub-optimal arms.
This could suggest a strategy that plays all active arms at each round until enough information is gathered to remove an arm. However, such a strategy yields high regret in the case where two arms that should be eliminated are very close to each other. Therefore, the elimination of several arms at once is allowed.
This is done in Cautious Greedy by computing an estimate $\nu$ of the number of arms to remove, which is a lower bound of $\nu^* = \left|\{ k, M_k^* = 0 \}\right|$ and can be used to eliminate several arms at once without ordering them first.
We therefore introduce $\Mcal_{\nu}$ the set of assignments where $\nu$ arms are under pressure:
\[
 \mathcal{M}_{\nu} = \{ \Mb \in \mathcal{M},  | \support(\Mb)| \geq K - \nu \}.
\]
The number of arms to remove $\nu$ is then increased when $\langle \hat{\mub}^L, g(\Mb^{\hat{\mub}^L}_{\Mcal}) \rangle > \langle \hat{\mub}^H, g(\Mb^{\hat{\mub}^H}_{\Mcal_{\nu}}) \rangle$, i.e. when a larger reward is guaranteed by removing more than $\nu$ arms.
Cautious Greedy then uses $\nu$ to build a set $\mathcal{A}$ of \emph{accepted arms} which are arms that are not likely to be among the $\nu$ worst arms. Cautious Greedy then puts under pressure a subset of arms among the arms that are not accepted yet. The set of arms put under pressure rotates in a \emph{round-robin} fashion. 
This mechanism ensures that all active arms are regularly played. After the round-robin rotation is completed, Cautious Greedy reevaluates $\nu$ and updates the sets of accepted arms and active arms. As $\nu$ increases, an arm can be removed from the set of accepted arms. However as $\nu$ never decreases, a removed arm is removed forever.
The exact procedure is described in \Cref{algo:greedy} below.
\begin{algorithm}
	\caption{Cautious Greedy}
  \label{algo:greedy}
	\begin{algorithmic}[1]
		\STATE {\bfseries Input :} $M$ (number of players), $p$ (probability that a player is active), $T$ (horizon)
    \STATE $\nu = 0$ \COMMENT{Estimate of $\nu^*$}
    \STATE Initialize the set of active arms $\mathcal{K} = [K]$; the set of accepted arms $\mathcal{A} = \emptyset $\\ Initialize the set of arms under pressure $\mathcal{U} = \emptyset $; the round-robin counter $n=0$
    \FOR{$t=1,\ldots,T$}
    \STATE Play $\Mb^{\hat{\mub}}_{\mathcal{M}_{\Ecal}}$ as defined in~\eqref{eq:mb nu:notation} where $\Ecal = \Kcal \setminus \mathcal{U}$
    \STATE Rotate $\Ucal$ in a round robin fashion over $\mathcal{K} \setminus \mathcal{A}$ (See \Cref{sec:RR} for details)
    \STATE Update $\hat{\mub}$ according to~\eqref{eq:mu_hat}; $n = n+1$
    \IF[end of round robin]{$n= |\mathcal{K} \setminus \mathcal{A}|$} \label{phase:condition}
    \STATE $n = 0$
    and compute $\Mb^{\hat{\mub}^L}_{\Mcal}$ and $\Mb^{\hat{\mub}^L}_{\Mcal_{\nu}}$ following \eqref{eq:mb nu:notation}
    \WHILE{$ \langle \hat{\mub}^L, g(\Mb^{\hat{\mub}^L}_{\Mcal}) \rangle > \langle \hat{\mub}^H, g(\Mb^{\hat{\mub}^H}_{\Mcal_{\nu}}) \rangle$}
    \label{algo:greedy:while}
    \STATE $\nu = \nu + 1$
    \ENDWHILE
    \STATE Update $\mathcal{A} = \{ k \in [K], \hat{\mu}_{(\nu)}^H < \hat{\mu}_k^L \}$ \hspace{10pt} and \hspace{10pt} $\mathcal{K} =[K] \setminus \{ k \in [K], \hat{\mu}_k^H < \hat{\mu}^L_{(\nu + 1)} \}$
    \STATE Let $\mathcal{U}$ be $\nu - |[K] \setminus \mathcal{K}|$ elements from $\mathcal{K} \setminus \mathcal{A}$
    \ENDIF
    \ENDFOR
	\end{algorithmic}
\end{algorithm}

\subsection{Regret bound}
\label{sec:analysis}
The main result of this section is an upper bound on the expected regret of Cautious Greedy. This bound depends on several data-dependent quantities that we now define precisely. $\Delta^{(\nu^*)}$ is the minimum simple regret achieved by an allocation removing exactly $\nu^*-1$ arms, while the number of arms removed by the optimal assignment is equal to $\nu^*$. Denoting $\Mb^*_\nu = \Mb^{\mub}_{\Mcal_{\nu}}$, $\Delta^{(\nu^*)}$ is defined as $\Delta^{(\nu^*)} = \langle \mub, g(\Mb^*) - g(\Mb^*_{\nu^*-1}) \rangle$. By convention, we set $\Delta^{(\nu^*)} = \infty$ if $\nu^*=0$. 
$\Delta_{(j)} = \mu_{(\nu^* + 1)} - \mu_{(j)}$ is the difference between the reward of the worst arm not eliminated in the optimal assignment and the reward of the $j$-th worst arm. Lastly, $r$ is the norm of the minimum perturbation of $\mub$ causing $\Mb^*_\nu$ to change for some value of $\nu$. More precisely, define $r_\nu = \min_{\hat{\mu}, \Mb^{\hat{\mub}}_{\Mcal_{\nu}} \neq \Mb^*_\nu} \|\hat{\mub} - \mub \|_\infty$, then $r = \min_{\nu \in [\nu^*]} r_{\nu}$. \Cref{prop:greedy:up} shows that the expected regret of Cautious Greedy is upper bounded by $\Ocal(\frac{\nu^* + 1}{r} + \nu^* \frac{\log(T)}{\Delta^{(\nu^*)}} + \sum_{j \leq \nu^*} \frac{\log(T)}{\Delta_{(j)}})$ where $\Ocal$ hides quantities independent of the data and $T$.

\begin{proposition}[Upper bound on the regret Cautious Greedy]
\label{prop:greedy:up}
The expected regret of Cautious Greedy satisfies
\begin{align*}
  \EE[R] \leq &\ 20 \frac{KM(\nu^* + 1)}{r} + \sum_{\nu=1}^{\nu^*} \frac{120 \log(2 T^2 K^2)}{\Delta_{(\nu)}} \\ &+ \bigg[72 M \min(Mp, K) (\nu^* + 1) + 120\bigg]\frac{\log(2K^2 T^2)}{\Delta^{(\nu^*)}} .
\end{align*}
\end{proposition}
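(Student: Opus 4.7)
My plan is to run the analysis on a high-probability good event and attribute the regret to three separate mechanisms: (i) the greedy assignment selection (first term), (ii) the round-robin that eliminates suboptimal arms from $\Acal$ and $\Kcal$ (second term), and (iii) the while loop that increments $\nu$ (third term). I first define the good event
$\mathcal{G} = \{\forall t \leq T,\ \forall k \in [K],\ \mu_k \in [\hat{\mu}^L_k(t), \hat{\mu}^H_k(t)]\}$.
The tuning $\zeta_k(t) = \sqrt{\log(2T^2K^2)/(2T_k(t))}$ combined with Hoeffding's inequality and a union bound over $t \leq T$ and $k \leq K$ yields $\Pr(\overline{\mathcal{G}}) \leq T^{-1}$, so $\overline{\mathcal{G}}$ contributes at most $\Ocal(M)$ to the expected regret. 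Under $\mathcal{G}$, a short argument shows that $\nu$ never exceeds $\nu^*$: if $\nu = \nu^*$ then $\Mb^* \in \Mcal_{\nu^*}$ and the confidence bounds straddle the true means, so
$\langle \hat{\mub}^H, g(\Mb^{\hat{\mub}^H}_{\Mcal_{\nu^*}}) \rangle \geq \langle \mub, g(\Mb^*) \rangle \geq \langle \hat{\mub}^L, g(\Mb^{\hat{\mub}^L}_\Mcal) \rangle$ and the while loop exits. In particular $|\Ucal(t)| \leq \nu^*$ at every round.

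I then invoke the definition of $r$ as a structural lemma: whenever $\|\hat{\mub}(t) - \mub\|_\infty < r$ and $|\Ucal(t)| \leq \nu^*$, the played assignment $\Mb^{\hat{\mub}}_{\Mcal_{\Ecal}}$ coincides with $\Mb^*_{|\Ucal|}$, so the per-round regret vanishes once all confidence widths of active arms are below $r$. It therefore suffices to bound the number of rounds before every active arm satisfies $\zeta_k \leq r/2$. The round-robin assigns every non-accepted active arm at least one player every $|\Kcal \setminus \Acal| \leq K$ rounds, and a played arm is actually sampled with probability at least $p(1-p)^{1/p-1}$, a constant depending only on $p$. A multiplicative Chernoff bound on the sample counts then shows that concentration is reached after $\Ocal(KM\log(T)/r^2)$ rounds per phase; multiplying by the maximal per-round regret $\Ocal(M)$ and summing over the $\nu^*+1$ phases separated by $\nu$-increments produces the first term $20\,KM(\nu^*+1)/r$.

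For the second term, I apply a standard elimination argument: under $\mathcal{G}$, an arm $k$ with $\mu_k \leq \mu_{(\nu^*)}$ is removed from $\Kcal$ as soon as $\zeta_k$ becomes comparable to $\Delta_{(\nu)}$, which happens after $\Ocal(\log(T)/\Delta_{(\nu)}^2)$ samples, so the cumulative regret charged to $k$ is $\Ocal(\log(T)/\Delta_{(\nu)})$. For the third term, the while-loop condition $\langle \hat{\mub}^L, g(\Mb^{\hat{\mub}^L}_\Mcal) \rangle > \langle \hat{\mub}^H, g(\Mb^{\hat{\mub}^H}_{\Mcal_\nu}) \rangle$ is guaranteed to trigger once the confidence bounds on the arms involved in $\Mb^*$ and $\Mb^*_\nu$ are tighter than roughly $\Delta^{(\nu^*)}/(M\min(Mp,K))$: perturbing a single mean by $\varepsilon$ changes $\langle \mub, g(\Mb) \rangle$ by at most $\Ocal(M\min(Mp,K)\,\varepsilon)$, using $\|g(\Mb)\|_1 \leq \min(Mp, K)$. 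Converting the required number of samples to regret through the round-robin sampling rate yields the $M\min(Mp,K)\log(T)/\Delta^{(\nu^*)}$ contribution, summed over the $\nu^*+1$ possible increments.

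The main obstacle will be tracking the coupled dynamics of $(\nu, \Kcal, \Acal, \Ucal)$ across phases: since $\Kcal \setminus \Acal$ itself shrinks over time, the effective round-robin length varies, and an arm under pressure accumulates information only intermittently. My intended remedy is to reduce everything to a per-phase analysis in which $(\nu, \Kcal, \Acal)$ are frozen during one full round-robin of length $|\Kcal \setminus \Acal|$, bound the cumulative regret and the sample counts inside each phase, and then sum over phases---noting that at each phase boundary either $|\Kcal|$ strictly decreases or $\nu$ strictly increases, so there are at most $K + \nu^* + 1$ phases in total, which prevents accumulation of spurious $K$ factors.
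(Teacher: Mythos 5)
Your overall architecture matches the paper's: a good event via Hoeffding plus a union bound, the invariant $\nu \leq \nu^*$ from the while-loop condition, a three-way attribution of the regret (assignment error, support/elimination error, $\nu$-increment error), the round-robin guaranteeing each active arm is played a constant fraction of rounds, and elimination/acceptance after $\Ocal(\log(T)/\Delta^2)$ samples. The second and third terms of your plan are essentially the paper's Lemmas on arm elimination, arm acceptance and the cost of a wrong $\nu$, though you gloss over a real difficulty in the second term: while $\nu < \nu^*$, an arm $(i)$ is eliminated relative to the \emph{current} threshold $\mu_{(\nu+1)}$, and $\mu_{(\nu+1)} - \mu_{(i)}$ can be far smaller than $\Delta_{(i)} = \mu_{(\nu^*+1)} - \mu_{(i)}$; recovering the $1/\Delta_{(i)}$ rate requires interleaving the elimination times with the $\nu$-increment times (this is where the extra $+120/\Delta^{(\nu^*)}$ term in the statement comes from).

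The genuine gap is in your derivation of the first term. You propose to bound the number of rounds until every active arm has confidence width $\zeta_k \leq r/2$ and multiply by the maximal per-round regret. Since $\zeta_k = \sqrt{\log(2T^2K^2)/(2T_k)}$, this takes $\Omega(\log(T)/(p r^2))$ rounds, so your method yields a contribution of order $M\log(T)/r^2$ --- not $KM(\nu^*+1)/r$, and crucially not constant in $T$. This breaks the headline claim of the proposition, namely constant regret when $\nu^* = 0$. (Your own arithmetic betrays this: $\Ocal(KM\log(T)/r^2)$ rounds times $\Ocal(M)$ per-round regret cannot equal $20KM(\nu^*+1)/r$.) The correct argument, used in the paper and borrowed from the Follow-the-Leader analyses of \citet{huangFollowingLeaderFast2017} and \citet{degenneAnytimeOptimalAlgorithms2016}, does not wait for deterministic concentration. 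Instead it bounds the per-round assignment regret by
\begin{equation*}
\langle \mub, g(\Mb^*_{\Ecal(t)}) - g(\Mb(t))\rangle \;\leq\; \langle \mub - \hat{\mub}, g(\Mb^*_{\Ecal(t)}) - g(\Mb(t))\rangle \;\leq\; 2Mp\,\|\mub - \hat{\mub}\|_\infty \,\ind{\|\mub - \hat{\mub}\|_\infty \geq r},
\end{equation*}
using the optimality of $\Mb(t)$ for $\hat{\mub}$ and the definition of $r$, and then integrates the sub-Gaussian tail of $\|\mub - \hat{\mub}\|_\infty$ both over $\varepsilon \geq r$ and over $t$, exploiting that the observation count grows linearly as $q(t) \gtrsim p\,t/(\nu^*+1)$. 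The resulting sum $\sum_t \big(r e^{-2q(t)r^2} + \int_r^\infty e^{-2q(t)\varepsilon^2}d\varepsilon\big)$ is $\Ocal((\nu^*+1)/(pr))$ with no logarithmic factor, which is what produces the $KM(\nu^*+1)/r$ term. Without this step your proof does not establish the stated bound.
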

The first term is reminiscent of the regret induced by Greedy with full information. The second one comes from the sample complexity of finding the $\nu^*$ worst arms. The third one is finally due to the sample complexity of  detecting that the optimal policy eliminates $\nu^*$ arms.
Interestingly, the two last terms are null when $\nu^*=0$, which corresponds to situations where the optimal policy assigns at least one player on every arm. This makes the regret of Cautious Greedy constant  in such situations, which happens when arm rewards have a similar order of magnitude (see \Cref{lemma:greedy:necessary condition elimination}).

At first sight, it seems like the third term in \Cref{prop:greedy:up} could be arbitrarily larger than the second term. Fortunately, this is untrue as shown in the following Lemma:
\begin{lemma}
	\label{lemma:third term}
	$
\Delta^{(\nu^{*})} \geq (g(M_{(\nu^*+1)}^* + 1)  - g(M_{(\nu^*+1)}^*)) \Delta_{{(\nu^{*})}}.$
\end{lemma}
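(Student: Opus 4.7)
The plan is to first pin down the structure of $\Mb^*_{\nu^*-1}$ sharply enough to write $\Delta^{(\nu^*)}$ in closed form, and then lower-bound that expression using the optimality of $\Mb^*$ and the concavity of $g$.

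A standard swap-on-means argument shows that any maximiser of $\langle \mub, g(\cdot)\rangle$ over $\mathcal{M}_\nu$ leaves unplayed exactly the arms of smallest means, so the single arm that $\Mb^*$ unplays while $\Mb^*_{\nu^*-1}$ plays is the one with mean $\mu_{(\nu^*)}$. Introduce $F(a) := \mu_{(\nu^*)} g(a) + V(a)$, where $V(a)$ is the optimum of the concave sub-problem on the remaining played arms with budget $M - a$; then $F$ is concave. The global optimality of $\Mb^*$ forces $V(0) - V(1) \geq \mu_{(\nu^*)} p$ (otherwise shifting one sub-problem player onto arm $(\nu^*)$ would beat $\Mb^*$), so $F(1) \leq F(0)$ and concavity sends the maximum of $F$ on $\{a \geq 1\}$ to $a = 1$. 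Thus $\Mb^*_{\nu^*-1}$ puts exactly one player on arm $(\nu^*)$ and, by the greedy optimality of concave budget allocation on the rest, removes a single player from the arm $k^\star$ minimising $\mu_k\bigl(g(M^*_k) - g(M^*_k - 1)\bigr)$, giving
\[
\Delta^{(\nu^*)} \;=\; \mu_{k^\star}\bigl(g(M^*_{k^\star}) - g(M^*_{k^\star}-1)\bigr) - \mu_{(\nu^*)} p.
\]

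Lower-bounding $\mu_{k^\star}\bigl(g(M^*_{k^\star}) - g(M^*_{k^\star}-1)\bigr)$ proceeds from two optimality perturbations of $\Mb^*$: moving one player from $k^\star$ to the smallest-mean played arm $(\nu^*+1)$ (support unchanged) gives $\mu_{(\nu^*+1)}\bigl(g(M^*_{(\nu^*+1)}+1) - g(M^*_{(\nu^*+1)})\bigr) \leq \mu_{k^\star}\bigl(g(M^*_{k^\star}) - g(M^*_{k^\star}-1)\bigr)$, and moving one player to the unplayed arm $(\nu^*)$ gives $\mu_{(\nu^*)} p \leq \mu_{k^\star}\bigl(g(M^*_{k^\star}) - g(M^*_{k^\star}-1)\bigr)$. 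Writing $\delta_+ := g(M^*_{(\nu^*+1)}+1) - g(M^*_{(\nu^*+1)})$, it suffices to upgrade these two bounds into the affine combination $\mu_{k^\star}\bigl(g(M^*_{k^\star}) - g(M^*_{k^\star}-1)\bigr) \geq \delta_+ \mu_{(\nu^*+1)} + (p - \delta_+) \mu_{(\nu^*)}$; subtracting $\mu_{(\nu^*)} p$ from both sides then yields exactly $\delta_+ \Delta_{(\nu^*)}$ as claimed.

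The hardest step will be precisely this upgrade: the two Lagrangian-style bounds above individually only imply $\mu_{k^\star}\bigl(g(M^*_{k^\star}) - g(M^*_{k^\star}-1)\bigr) \geq \max\bigl(\mu_{(\nu^*+1)}\delta_+,\; \mu_{(\nu^*)} p\bigr)$, which is strictly weaker than the desired affine combination whenever $\mu_{(\nu^*+1)} > \mu_{(\nu^*)}$. To close the gap I would combine a two-player perturbation at $\Mb^*$ (simultaneously shifting one player from $k^\star$ onto each of $(\nu^*)$ and $(\nu^*+1)$), which yields $\mu_{k^\star}\bigl(g(M^*_{k^\star}) - g(M^*_{k^\star}-2)\bigr) \geq \mu_{(\nu^*)} p + \mu_{(\nu^*+1)} \delta_+$, with the explicit form $g(x) = x p (1-p)^{x-1}$ to control the second difference $\bigl(g(M^*_{k^\star}) - g(M^*_{k^\star}-2)\bigr) - 2\bigl(g(M^*_{k^\star}) - g(M^*_{k^\star}-1)\bigr)$, and with the swap-to-unplayed optimality $\mu_{(\nu^*)} p \leq \mu_{(\nu^*+1)}\bigl(g(M^*_{(\nu^*+1)}) - g(M^*_{(\nu^*+1)} - 1)\bigr)$ coming from the $(\nu^*+1) \to (\nu^*)$ swap.
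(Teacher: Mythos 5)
The first half of your argument is sound and is essentially the paper's: both proofs identify $\Mb^*_{\nu^*-1}$ as obtained from $\Mb^*$ by moving one player from the arm $k^\star$ of smallest marginal value onto arm $(\nu^*)$, giving $\Delta^{(\nu^*)}=\mu_{k^\star}\bigl(g(M^*_{k^\star})-g(M^*_{k^\star}-1)\bigr)-\mu_{(\nu^*)}p$, and both extract the same two one-player perturbation bounds $\mu_{k^\star}\bigl(g(M^*_{k^\star})-g(M^*_{k^\star}-1)\bigr)\geq\mu_{(\nu^*+1)}\delta_+$ and $\geq\mu_{(\nu^*)}p$, where $\delta_+=g(M^*_{(\nu^*+1)}+1)-g(M^*_{(\nu^*+1)})$. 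You have also correctly located the crux that the paper passes over: the paper's next line, $\mu_{(\nu^*+1)}\delta_+-\mu_{(\nu^*)}p\geq\Delta_{(\nu^*)}\delta_+$, is asserted without justification and is equivalent to $\delta_+\geq p$, which fails whenever $M^*_{(\nu^*+1)}\geq 1$, since $g(x+1)-g(x)=p(1-p)^{x-1}\bigl(1-(x+1)p\bigr)<p=g(1)-g(0)$.

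However, your plan for closing this step does not work, and cannot. The two-player perturbation lower-bounds $\mu_{k^\star}\bigl(g(M^*_{k^\star})-g(M^*_{k^\star}-2)\bigr)$, but concavity of $g$ gives $g(m)-g(m-2)\geq 2\bigl(g(m)-g(m-1)\bigr)$, so you are lower-bounding a quantity that already dominates twice the one you need; converting this into a lower bound on the first difference requires an \emph{upper} bound on the second difference of $g$ relative to its first difference, and that ratio is unbounded because $g(m)-g(m-1)=p(1-p)^{m-2}(1-mp)$ vanishes as $m$ approaches $1/p$ while $g(m-1)-g(m-2)$ does not. More fundamentally, the affine combination you are aiming for, and hence the lemma as stated, fails on concrete instances: take $K=2$, $M=3$, $p=0.1$, $\mub=(0.62,1)$, so that $g(1)=0.1$, $g(2)=0.18$, $g(3)=0.243$, $g(4)=0.2916$. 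The optimal assignment is $\Mb^*=(0,3)$, hence $\nu^*=1$ and $\Mb^*_0=(1,2)$, giving $\Delta^{(1)}=g(3)-\bigl(0.62\,g(1)+g(2)\bigr)=0.243-0.242=0.001$, whereas $\bigl(g(4)-g(3)\bigr)\Delta_{(1)}=0.0486\times 0.38\approx 0.018$. The only inequality that optimality of $\Mb^*$ actually delivers is $\Delta^{(\nu^*)}\geq\mu_{(\nu^*+1)}\delta_+-\mu_{(\nu^*)}p\ (\geq 0)$, and the example shows this can be arbitrarily close to $0$ while $\delta_+\Delta_{(\nu^*)}$ stays bounded away from $0$; no refinement of the swap argument can recover the claimed bound.
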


Together with \Cref{prop:greedy:up}, \Cref{lemma:third term} shows that the regret of cautious Greedy is upper bounded by $\Ocal(\frac1{r} + \sum_{\nu=1}^{\nu^{*}} \frac{\log(T)}{\Delta_{(\nu)}})$ where $\Ocal$ hides terms in $M, p, K$.
The remainder of this section sketches the proof of \Cref{prop:greedy:up}. The precise statement of lemmas and their proofs are deferred to \Cref{app:proof}. 
\begin{proof}[Proof sketch of \Cref{prop:greedy:up}]
    Using classical concentration bounds (\Cref{lemma:concentration:mu}), we can assume that $\mub^H$ and $\mub^L$ (defined in \Cref{eq:muh}) verify $\mub^H \geq \mub \geq \mub^L$ without affecting the regret bound. 
    
    Consequently, \Cref{algo:greedy} ensures that $\nu$ is only increased if $\nu < \nu^*$ (\Cref{lemma:mu:ub}) and the update of the set of active arms ensures that optimal arms are never eliminated (\Cref{lemma:no good arms eliminated}).

    We then focus on bounding the number of times each arm is played. The round-robin procedure ensures that all active arms are assigned at least one player regularly, as proven by \Cref{lemma:tau}. However, because of collisions, assigning at least one player to an arm does not guarantee an observation. \Cref{lemma:q} makes this relation explicit.
    Now knowing how many observations are gathered on each arm, we can focus on upper bounding the regret.

    Denote $\Mb^*_\nu = \Mb^{\mub}_{\Mcal_{\nu}}$ the optimal assignment of players when at most $\nu$ arms can be assigned zero players. For $\mathcal{E}(t) \subset [K]$, call $\Mb^*_{\mathcal{E}(t)}= \Mb^{\mub}_{\Mcal_{\mathcal{E}(t)}}$ the optimal assignment of players when only arms not in $\mathcal{E}(t)$ can be assigned zero players. We can write the cost of the chosen assignment at time $t$ $\Mb(t)$ as the sum of three terms:
    \[
    \underbrace{\langle \mub,  g(\Mb^*) - g(\Mb^*_{\nu}) \rangle}_{(i)} + \underbrace{\langle \mu,  g(\Mb^*_{\nu}) - g(\Mb^*_{\mathcal{E}(t)})\rangle}_{(ii)} \\
       + \underbrace{\langle \mu, g(\Mb^*_{\mathcal{E}(t)}) - g(\Mb(t)) \rangle}_{(iii)} 
    \]

These three terms measure a different aspect of the regret:  $(i)$ measures the error due to $\nu$ the number of arms under pressure being different from $\nu^*$ the optimal number of players to eliminate; $(ii)$ measures the error due to $\mathcal{E}(t)$ being different from $\support(\Mb^*_\nu)$, the optimal set of arms that must be assigned at least one player by $\Mb^*_\nu$; $(iii)$ measures the error due to $\Mb(t)$ being different from $\Mb^*_{\mathcal{E}(t)}$, the optimal assignment of players  among possible assignments in $\mathcal{M}_{\mathcal{E}(t)}$.

Let us start with (i). As the number of samples seen increases, $\nu$ increases to get closer to $\nu^*$. \Cref{lemma:greedy:nu} bounds the number of samples seen before the algorithm increases $\nu$, which leads to an upper bound on the total regret due to this term shown in \Cref{lemma:greedy:bound:nu}.

Regarding (ii), for a given $\nu$, two things may prevent a sub-optimal choice of arms $\mathcal{E}$ on which at least one player must be assigned. Either an arm in $\mathcal{E}$ is eliminated or an arm in $[K] \setminus \Ecal$ is accepted. \Cref{lemma:greedy:arm elimination} provides a lower bound on the number of samples seen before a sub-optimal arm is eliminated while \Cref{lemma:greedy:arm acceptation} provides a lower bound on the number of samples seen before an optimal arm is accepted.
The two previous lemmas allow to quantify when arms are accepted or rejected. We then compute the cost of a sub-optimal choice of arms $\mathcal{E}$ in \Cref{lemma:greedy:cost:e} and combine these three lemmas to bound the total regret due to this term in \Cref{lemma:greedy:bound:e}.

Lastly, the third term $(iii)$ measures the mismatch between the chosen assignment $\Mb(t)$ and the best possible assignment with the same support. Crucially there is no support mismatch and therefore we are in a setting close to the full information setting which allows us to bound the regret due to these terms by a quantity independent of the horizon $T$ (see \Cref{lemma:rm bound}).

Adding the upper bounds due to the terms $(i)$, $(ii)$ and $(iii)$ and reorganizing concludes the proof.
\end{proof}

\vspace{-0.2cm}

\section{Lower bound}
The next lemma lower bounds the best possible constant term in the regret. Under mild conditions, it shows there exists a choice of rewards $\mub$ such that any algorithm has a regret scaling in $\Ocal(\frac1{r})$.

\begin{lemma}[Lower bound for $\nu^*=0$]
  \label{greedy:lb:constant}
  Consider $K=2$ arms and $M = 2N + 1$ players for some $N \in \NN^*$ and assume $p \leq \frac1{M+1}$, $r_0 < \frac{p}{12}$, $T \geq \frac{1}{16 g(M) r_0^2}$. For any algorithm $A$, there exists a choice of rewards $\mub$ such that $r(\mub) = r_0$ and  
  \[
  \EE[R_A] \geq \frac{1}{256 M r_0}.
  \]
\end{lemma}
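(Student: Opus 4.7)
The plan is a classical two-hypothesis lower bound \`a la Le Cam / Bretagnolle--Huber. I would construct two symmetric Bernoulli instances on the $K=2$ arms that differ only by a swap, namely $\mub_1 = (\tfrac12+r_0,\tfrac12-r_0)$ and $\mub_2 = (\tfrac12-r_0,\tfrac12+r_0)$, and first establish their structural properties. The condition $p\leq 1/(M+1)$ forces $(N+1)p\leq \tfrac12$, and a direct computation gives $g''(x) = p(1-p)^{x-1}\log(1-p)\,(2+x\log(1-p))$, whose sign shows that $g$ is strictly increasing and strictly concave on $\{0,\dots,M\}$. Combined with the smallness $r_0<p/12$, this implies that the unique maximizer of $\mu_1 g(M_1)+\mu_2 g(M_2)$ over $\Mcal$ is the almost balanced split $(N+1,N)$ under $P_1$ and $(N,N+1)$ under $P_2$, so $\nu^*=0$; moreover the optimal element of $\Mcal_0$ flips iff $\hat\mu_1-\hat\mu_2$ changes sign, so the minimal infinity-norm perturbation needed is exactly $r_0$, giving $r(\mub_i)=r_0$.

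Next I would quantify the cost of a mistake. Write $\delta = 2r_0(g(N+1)-g(N)) = 2r_0\,p(1-p)^{N-1}(1-(N+1)p)$. The hypothesis yields $(1-p)^{N-1}\geq \tfrac12$ and $1-(N+1)p\geq \tfrac12$, so $\delta \geq r_0 p/2$. Concavity of $g$ implies that for any $\Mb$ with $M_1\leq N$ the simple regret in $P_1$ is at least $\delta$ (the minimum being attained at $(N,N+1)$), and symmetrically for $P_2$. Denoting $S = \sum_{t=1}^{T'} \ind{M_1(t)\leq N}$ for an effective horizon $T'\leq T$, this gives $\EE_1[R]\geq \delta\,\EE_1[S]$ and $\EE_2[R]\geq \delta\,\EE_2[T'-S]$.

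The information-theoretic step is a Bretagnolle--Huber bound. Since arm $k$ is observed at round $t$ with probability $g(M_k(t))$, the KL between the $T'$-round laws decomposes as
\[
\mathrm{KL}(P_1^{T'},P_2^{T'}) = \mathrm{kl}(\tfrac12+r_0,\tfrac12-r_0)\cdot\EE_1\!\left[\sum_{t=1}^{T'}\sum_{k=1}^{2} g(M_k(t))\right].
\]
Bounding $g(M_1)+g(M_2)\leq 2g(M)$ by monotonicity of $g$ on $[0,M]$ and $\mathrm{kl}(\tfrac12+r_0,\tfrac12-r_0)\leq 16 r_0^2$ gives $\mathrm{KL}\leq 32\,T' g(M) r_0^2$. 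Choosing $T'$ close to $1/(16 g(M) r_0^2)$ (which is $\leq T$ by hypothesis; the total regret is at least the regret up to $T'$) makes $\mathrm{KL}\leq 2$, so that applying Bretagnolle--Huber to the event $\{S\geq T'/2\}$ yields $P_1(S\geq T'/2)+P_2(T'-S>T'/2)\geq e^{-2}/2$. At least one of $\EE_1[S]$, $\EE_2[T'-S]$ is therefore at least $T' e^{-2}/8$; multiplying by $\delta\geq r_0 p/2$ and using $g(M)\leq Mp$ gives a lower bound of order $1/(M r_0)$ on $\max_i \EE_i[R]$. Tightening the Bernoulli KL to the asymptotic $\sim 8r_0^2$ valid for small $r_0$, using concavity of $g$ to sharpen $g(M_1)+g(M_2)\leq 1.5\,g(M)$, and/or switching to Pinsker in the small-KL regime should yield the claimed explicit constant $256$.

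The main obstacle is tracking constants through the chain of bounds: $g(N+1)-g(N)\geq p/4$, $\sum_k g(M_k)\leq c\,g(M)$, $\mathrm{kl}\leq c' r_0^2$, and the Bretagnolle--Huber loss $e^{-\mathrm{KL}}/2$ each give up small multiplicative factors, and landing precisely at $1/(256 M r_0)$ requires some optimization over the effective horizon and over the threshold defining the bad event. A secondary obstacle is the clean verification of the structural properties $\nu^*=0$ and $r=r_0$: in principle an unbalanced assignment like $(M,0)$ could compete with $(N+1,N)$, and ruling this out is exactly where concavity of $g$ on $\{0,\dots,M\}$ (guaranteed by $p\leq 1/(M+1)$) and the smallness $r_0<p/12$ are used.
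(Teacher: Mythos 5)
Your proposal is correct and follows essentially the same route as the paper's proof: two symmetric instances differing by a swap of the arm means, verification that the optimum is the near-balanced split (so $\nu^*=0$ and $r=r_0$) using $p\leq\frac1{M+1}$ and $r_0<\frac{p}{12}$, a cost-per-mistake of order $r_0(g(N+1)-g(N))$, a Bretagnolle--Huber two-point bound with the KL scaled by the observation probabilities $g(M_k(t))$, and an effective horizon tuned to $\Theta(1/(g(M)r_0^2))$. The only cosmetic difference is that you bound $\sum_k g(M_k(t))\leq 2g(M)$ directly inside the KL decomposition rather than first "augmenting" the algorithm to receive feedback with probability $g(M)$ and restricting it to the two candidate assignments, as the paper does.
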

\begin{proof}[Proof sketch (see proof in \Cref{lemma:greedy:lb:constant}) ]
We take parameters $\mub_1$ and $\mub_2$ such that $r = \frac{\Delta}{2}$ and the optimal solution is $\Mb^* = (N, N+1)$ if $\mub = \mub_1$ and $\Mb^* = (N+1, N)$ if $\mub = \mub_2$. Moreover, we choose them so that the top two solutions are always $(N, N+1)$ and $(N+1, N)$.
First, we \textit{augment} $A$ so that each arm yields a sample $X_k$ with probability $g(M)$ instead of $g(M_k(t))$; moreover $A$ is forced to chose at each step between $\Mb(t) = (N, N+1)$ or $\Mb(t) = (N+1, N)$ (these two modifications only improve $A$).  Following the proof of Theorem~3 in \cite{wang2017improving}, we recast this setting as a 2-armed bandit problem where arm $k$ has reward $1$ with probability $g(M) \mu_k$, $0$ with probability $g(M)(1 - \mu_k)$ and $X_k = \perp$ with probability $1 - g(M)$. The rest of the proof follows closely the proof of Proposition~4 in~\cite{mourtadaOptimalityHedgeAlgorithm2019} and yields the lower bound $\EE[R_A] \geq \frac{\Delta (g(M+1) - g(M))}{2} \frac{T}{4} \exp(-4 T g(M) \Delta^2)$. As the regret increases with $T$, taking $T = \floor{\frac{1}{4 g(M) \Delta^2}}$ concludes. 
\end{proof}

The upper bound of Cautious Greedy when $\nu^* = 0$ is given by $ \EE[R] \leq \frac{20 K M}{r}$, i.e., the dependency in $r$ cannot be improved.
Next, we investigate the case $\nu^* > 0$ and show a lower bound inspired by the classical results of \cite{lai1985asymptotically}. Let us first introduce the notion of a consistent algorithm.
  Let $T_i$ be the number of times with at least one player on the $i$-th worst arm.
  An algorithm is consistent if $\forall \alpha > 0$, $\forall j > \nu^* \in \EE[T-T_j] = \Ocal(T^\alpha)$ and $\forall j \leq \nu^*, \EE[T_j] = \Ocal(T^\alpha)$.

\begin{lemma}[Lower bound for $\nu^* > 0$]
  \label{greedy:lb:gap}
 For any integers $M \geq 5, \nu^* > 0, p \leq \frac1{M + 1}$, any gaps $\Delta_{(1)}, \dots, \Delta_{(\nu^{*})} \leq \frac{p}{8 (M-4)}$,  and for any consistent algorithm $A$, there exists a set of parameters $(\mu_1,\ldots, \mu_{\nu^*+2})$ such that $\mu_{(\nu^{*}+1)} - \mu_{(\nu)}=  \Delta_{(\nu)}$ for all $\nu \in [\nu^*]$ and the regret of $A$ satisfies \[
 \liminf_{T \to \infty} \frac{\EE R_A}{\log(T)} \geq \sum_{\nu=1}^{\nu^*} \frac{c}{\Delta_{(\nu)}}
\]
for some universal constant $c>0$.
\end{lemma}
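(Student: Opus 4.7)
I would adapt the classical Lai-Robbins~\citep{lai1985asymptotically} lower bound to the AMMAB setting via an arm-by-arm change-of-measure argument, and then carefully convert the resulting bound on observation counts into a regret bound.

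\emph{Hard instance.} I would take $K = \nu^* + 2$ Bernoulli arms with $\mu_{(\nu^*+2)}$ slightly larger than $\mu_{(\nu^*+1)}$, $\mu_{(\nu)} = \mu_{(\nu^*+1)} - \Delta_{(\nu)}$ for $\nu \in [\nu^*]$, and $\mu_{(\nu^*+1)}$ chosen so that the unique optimal assignment $\Mb^*$ splits all $M$ players between the two best arms. The hypotheses $M \geq 5$, $p \leq 1/(M+1)$ and $\Delta_{(\nu)} \leq p/(8(M-4))$ are what make it feasible to verify the KKT condition $(g(M_k^*) - g(M_k^* - 1))\mu_k \geq p\mu_{(\nu)}$ with strict slack $\geq c_1 p \Delta_{(\nu)}$ for some universal $c_1 > 0$ and all active arms $k$ and $\nu \leq \nu^*$. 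By concavity of $g$ on $[0, -1/\log(1-p)]$, this slack extends additively into the per-round regret bound
\[
\big\langle \mub, g(\Mb^*) - g(\Mb(t)) \big\rangle \geq c_1\, p \sum_{\nu \leq \nu^*} M_{j(\nu)}(t)\, \Delta_{(\nu)},
\]
where $j(\nu)$ denotes the physical index of the $\nu$-th worst arm.

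\emph{Change of measure.} For each $\nu \in [\nu^*]$, I would consider the alternative instance $\mub^{(\nu)}$ which differs from $\mub$ only by raising $\mu_{j(\nu)}$ to $\mu_{(\nu^*+1)} + \epsilon$ for a small $\epsilon > 0$, so that arm $j(\nu)$ enters the optimal support under $\mub^{(\nu)}$. Let $S_j(T)$ denote the number of observations of arm $j$ up to time $T$ and $U_j(T)$ the number of rounds where at least one player is assigned to arm $j$. Consistency of $A$ gives $\EE_{\mub}[U_{j(\nu)}] = o(T^\alpha)$ and $\EE_{\mub^{(\nu)}}[T - U_{j(\nu)}] = o(T^\alpha)$ for every $\alpha > 0$. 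Since $\mub$ and $\mub^{(\nu)}$ agree on all arms but $j(\nu)$, the standard identity
\[
\mathrm{KL}\big(\mathbb{P}_{\mub}^T, \mathbb{P}_{\mub^{(\nu)}}^T\big) = \EE_{\mub}[S_{j(\nu)}(T)] \cdot \mathrm{KL}\big(\mu_{j(\nu)}, \mu_{(\nu^*+1)}+\epsilon\big),
\]
combined with the data-processing inequality applied to $\{U_{j(\nu)} \geq T/2\}$, taking $\alpha$ arbitrarily small and then $\epsilon \downarrow 0$, and using a Pinsker-type bound $\mathrm{KL}(p,q) \leq c_2^{-1}(p-q)^2$ for Bernoulli variables with means in a compact subset of $(0,1)$, yields
\[
\liminf_{T \to \infty} \frac{\EE_{\mub}[S_{j(\nu)}(T)]}{\log T} \geq \frac{1}{\mathrm{KL}(\mu_{j(\nu)}, \mu_{(\nu^*+1)})} \geq \frac{c_2}{\Delta_{(\nu)}^2}.
\]

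\emph{From observations to regret.} The inequality $g(m) \leq mp$ for $m \in \mathbb{N}$ gives $\EE_{\mub}[S_{j(\nu)}(T)] \leq p\, \EE_{\mub}[N_{j(\nu)}(T)]$, where $N_j(T) = \sum_{t=1}^T M_j(t)$. Hence $\EE_{\mub}[N_{j(\nu)}(T)] \geq (1 - o(1))\, c_2 \log T / (p\, \Delta_{(\nu)}^2)$. Summing the per-round regret bound over $t \leq T$ and plugging this in,
\[
\liminf_{T \to \infty} \frac{\EE_{\mub}[R_A]}{\log T} \geq \sum_{\nu=1}^{\nu^*} c_1\, p\, \Delta_{(\nu)} \liminf_T \frac{\EE_{\mub}[N_{j(\nu)}(T)]}{\log T} \geq \sum_{\nu=1}^{\nu^*} \frac{c_1 c_2}{\Delta_{(\nu)}},
\]
which is the claim with $c = c_1 c_2$. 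The factor $p$ coming from the per-player regret bound cancels exactly with the factor $1/p$ needed to turn observations into player-assignments, which is why the final $c$ can be made independent of $p$.

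The main technical obstacle will be the strictly positive KKT slack $\geq c_1 p \Delta_{(\nu)}$: on its own, KKT only guarantees nonnegativity of the difference $(g(M_k^*) - g(M_k^* - 1))\mu_k - p\mu_{(\nu)}$, which can degenerate at the boundary of the optimal support. The condition $\Delta_{(\nu)} \leq p/(8(M-4))$ is precisely what lets $\mu_{(\nu^*+1)}$ and $\mu_{(\nu^*+2)}$ be positioned so that this slack is controlled uniformly across $\nu$; checking this carefully for the specific construction, and handling the corner case where several bad arms have similar gaps, is the main bookkeeping of the proof.
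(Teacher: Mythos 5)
Your proposal is correct and follows essentially the same route as the paper's proof: the same $(\nu^*+2)$-arm hard instance in which every near-optimal assignment concentrates all players on the two good arms, the same Lai--Robbins change of measure raising a bad arm's mean above $\mu_{(\nu^*+1)}$ to force $\Omega(\log T/\Delta_{(\nu)}^2)$ observations, and the same key trade-off argument --- the cost of placing $m$ players on a bad arm is at least of order $g(m)\Delta_{(\nu)}$ while the observation rate is $g(m)$, so the cost per observation is $\Omega(\Delta_{(\nu)})$ uniformly in $m$ (your ``$p$ cancels'' remark is exactly the paper's $\Delta_k(m)\geq g(m)\Delta_{(k)}$ step). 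The bookkeeping you defer (positioning the means so the slack is uniform under $\Delta_{(\nu)}\leq p/(8(M-4))$) is precisely what the paper's conditions \eqref{eq:lb:cond1}--\eqref{eq:lb:cond3} carry out.
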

\begin{proof}[Proof sketch (see proof in \Cref{lemma:greedy:lb:constant2})]
%
Assume for the sketch of proof that $\nu^*=1$ and  that there are $3$ arms. We are considering two alternative mean parameters $(\mu_0,\mu_1,\mu_1+\Delta)$ and $(\mu_0,\mu_1,\mu_1-\Delta)$ chosen so that the optimal allocation is either $(M-1,1,0)$ or $(M-1,0,1)$. Moreover, we choose $\mu_0$ and $\mu_1$ such that in both worlds, the top two allocations are always the aforementioned ones. 
This might give the impression that there exists a trivial reduction to some standard 2-arm bandits (where those arms are the tentative two optimal allocations). A consistent algorithm would indeed need $N^*:=\Omega(\frac{\log(T)}{\Delta^2})$ samples of sub-optimal arms to distinguish between the two worlds. In particular, with the second set of parameters, this  requires putting one player on the third arm $N^*/p$ times (in expectation), each one incurring a cost of $p\Delta$. This would give the result for $\nu^*=1$ and this technique can be immediately generalized to $\nu^*>1$.

It is however not that simple, as putting more players on some (suboptimal) arm gives faster feedback, yet at a higher cost. We yet show that the best trade-off (in feedback received vs. suboptimality cost) for an algorithm to distinguish between the two worlds is indeed to allocate a single player on arm $2$ or $3$. The aforementioned intuition is thus actually correct but requires a cautious argument.
\end{proof}

\Cref{greedy:lb:gap} shows that the dependency in $\sum_{j \leq \nu^*} \frac{\log(T)}{\Delta_{(j)}}$ in the upper bound of \Cref{prop:greedy:up} cannot be improved.
The results in \Cref{greedy:lb:constant} and \Cref{greedy:lb:gap} show that Cautious Greedy is optimal with respect to its dependency in $r$, $\Delta_{(\nu)}$ and $T$. However, we do not claim that the dependency in $M, p$, or $K$ is optimal. Improving the dependency with respect to these parameters would be an interesting although challenging direction for future work.
\vspace{-0.2cm}

\section{Experiments}
\label{sec:experiments}
Our experiments compare the expected regret of Cautious Greedy (\Cref{algo:greedy}), UCB (\Cref{algo:ucb}), and ETC \citep[Algorithm 8]{dakdoukMassiveMultiplayerMultiarmed2022}. In all these algorithms, maximization problems of the form $\max_{\Mb \in \Mcal} \langle g(\Mb), \vb \rangle$ are solved using the sequential algorithm of \citet[Algorithm 5]{dakdoukMassiveMultiplayerMultiarmed2022}. In Cautious Greedy, the sequential algorithm is also adapted to solve $\max_{\Mb \in \Mcal_{\Ecal}} \langle g(\Mb), \vb \rangle$ for some set $\Ecal \subset [K]$. This is done by first assigning one player to each arm in $\Ecal$ and then running the sequential algorithm for the rest of the players. The optimality of this approach is detailed in \Cref{app:optimal:max}.

At each step $t \in [T]$, algorithms compute a player assignment $\Mb(t) \in \Mcal$ based on the rewards they have seen so far. We record $\sum_{\tau=1}^t \langle \mub, g(\Mb^*) - g(\Mb(\tau)) \rangle$ which we call cumulative regret (instead of pseudo regret). Each experiment is run $50$ times, we plot the mean value of the cumulative regret as a function of $t \in [T]$. Error bars represent the first and last decile.

The first experiment in \Cref{fig:all} (left), there are $M=30$ players, $K=2$ arms, $\mub = (0.8, 0.5)$, $p=0.01$ and $T=10^4$. The optimal assignment is $\Mb^* = (26, 4)$ and note that 
$\nu^* = 0$.
In this example, Cautious Greedy clearly outperforms the other methods as expected when $\nu^*=0$.

The second experiment in \Cref{fig:all} (right) highlights that Cautious Greedy takes a longer time than UCB to assign no player to a suboptimal arm. We have $M=3$ players, $K=2$ arms, $\mub=(0.99, 0.01)$, $p=0.1$ and $T=10^4$. The optimal solution is $\Mb^* = (3, 0)$ so that $\nu^* = 1$. In this example, UCB largely outperforms Cautious Greedy.
In both experiments, ETC incurs a much larger regret, which is consistent with its suboptimal $\Ocal(T^{\frac23})$ regret.

\begin{figure}
  \centering
  \includegraphics[width=.4\textwidth]{./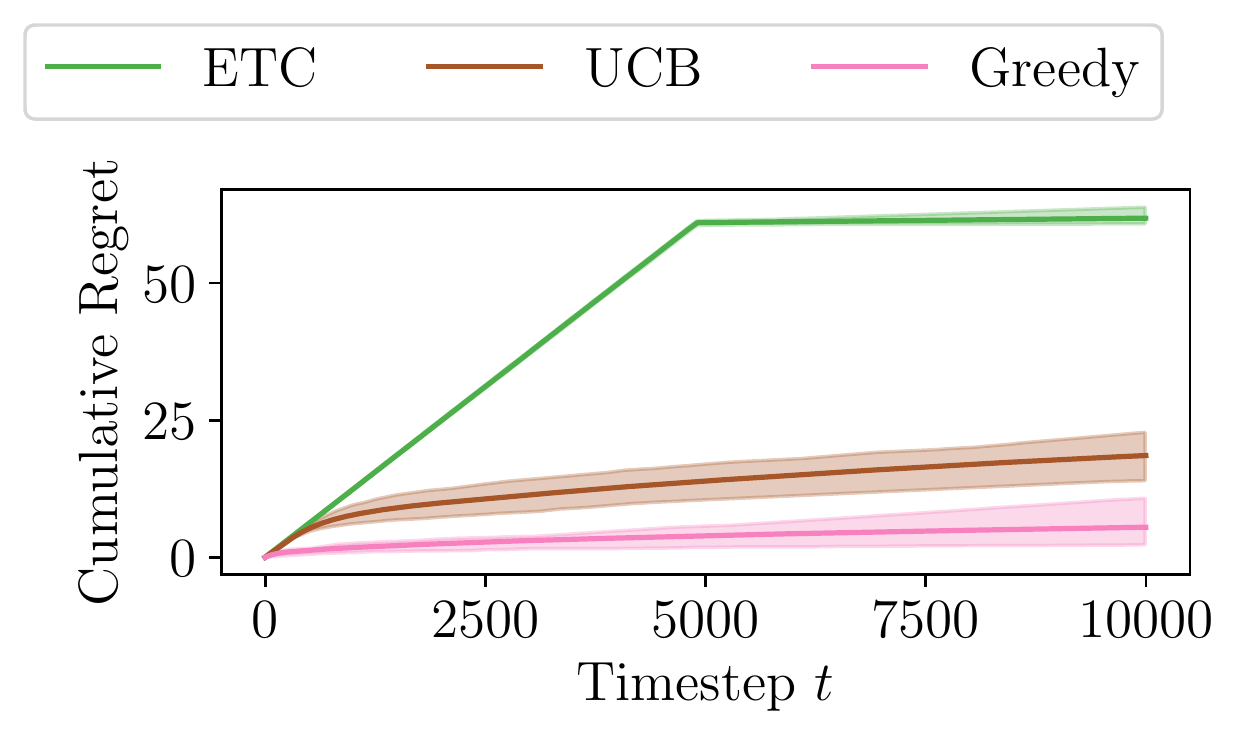}
  \includegraphics[width=.4\textwidth]{./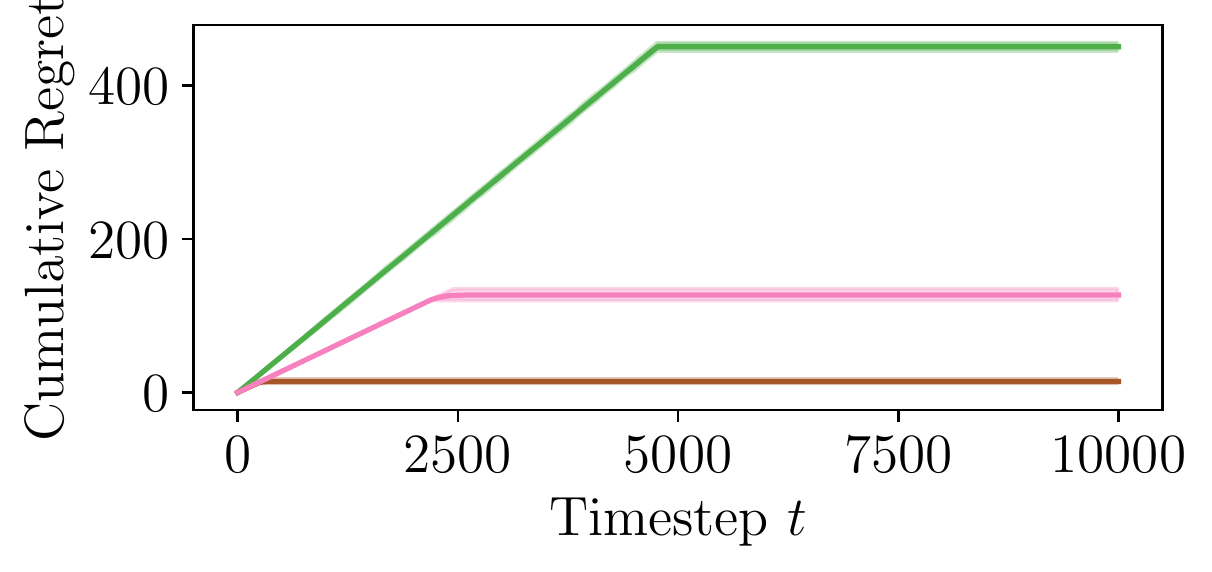}
  \vspace{-1em}
  \caption{\textbf{Benchmark of ETC, UCB and Cautious Cautious Greedy on synthetic data} \textit{(left)} $\nu^* = 0$ \textit{(right)} $\nu^* = 1$.}
  \vspace{-1em}
  \label{fig:all}
\end{figure}

\vspace{-0.2cm}

\section{Conclusion, open problems and future work}
\label{sec:conclusion}
We proposed an asynchronous multiplayer multi-armed bandits algorithm called Cautious Greedy, achieving a regret of order $1/r + \sum_{\nu=1}^{\nu^*} \log(T)/\Delta_{(\nu)}$. In particular, its regret does not scale with $T$ when $\nu^* = 0$. We also prove lower bounds suggesting that the dependency in both $r$ and $\sum_{\nu=1}^{\nu^*} \log(T)/\Delta_{(\nu))}$ is optimal.  A first remark is that the last term cannot be too large, since \Cref{lemma:greedy:necessary condition elimination} shows that $\nu^* = 0$ when the gaps $\Delta_{(j)}$ are small. 

An open question for future work is whether the dependency with respect to other parameters $K, M, p$ can be improved. In particular, the analysis in \cite{degenneAnytimeOptimalAlgorithms2016} shows that in the stochastic setting with full information, Greedy is upper bounded by $\Ocal(\frac{\log(K)}{\Delta})$ when all arms have sub-optimal gap $\Delta$. 
This suggests the dependency w.r.t. $M$ and $K$ of the first term in our bound can be improved.

Our algorithm requires several assumptions to perform properly. Most of them are actually very mild, while others would require an involved analysis to get discarded. 
Without prior knowledge of $T$, a doubling trick~\citep[Theorem 7]{besson2018doubling} can be used when the horizon $T$ is unknown. 
The activation probabilities of players might be heterogeneous in practice. However, the optimization algorithm of \citet{dakdoukMassiveMultiplayerMultiarmed2022} is only optimal in the homogeneous case. No efficient maximization scheme of the problem in \Cref{eq:maxproblem} in the heterogeneous case is currently known. If however we were given access to an oracle maximizing this problem, we believe that our algorithms and their bounds can be adapted. The analysis in the heterogeneous case would not need any change for centralized UCB. Concerning Cautious Greedy, adapting its analysis is more difficult and requires cautious work. 
Also, if $p$ was unknown beforehand, it can easily be estimated on the fly. However, additional errors would come from this estimation and should be handled carefully.

Another significant direction is to go beyond the centralized setting. Being able to handle the decentralized setting where agents are no longer allowed to communicate without cost remains a great challenge and the original motivation of asynchronous multiplayer bandits. A first possibility is to track the number of communications, similarly to \cite{dakdoukMassiveMultiplayerMultiarmed2022}. Although Cautious Greedy's number of communication steps is linear in $T$, a simple low-communication extension would proceed in epochs of doubling size, where communication and updates occur at the end of each epoch. This would make the number of communication steps sub-logarithmic in $T$, with the same regret bound (up to a $2$ factor). 
A second possibility is to directly use collisions to communicate as done for example in~\citep{bistritz2018distributed,boursierSICMMABSynchronisationInvolves2019}. These works however only tackle the synchronized case. 
In our case, communicating through collisions remains possible but the length of communication phases would be significantly increased. In the collision sensing setting, if players $i$ and $j$ need to propagate a bit through collision, they roughly need $\frac{\log(T)}{p^2}$ time-steps to send a single bit with high probability. Whether there exist quicker communication schemes (e.g. using random phase length) for the asynchronous case is an open problem.

\section{Acknowledgments}

Vianney Perchet acknowledges support from the French National Research Agency (ANR) under grant number (ANR-19-CE23-0026 as well as the support grant, as well as from the grant “Investissements d’Avenir” (LabEx Ecodec/ANR-11-LABX-0047).
This work was completed while E. Boursier was a member of TML Lab, EPFL, Lausanne, Switzerland.
\newpage
\bibliography{local-bib}

\begin{thebibliography}{52}
\providecommand{\natexlab}[1]{#1}
\providecommand{\url}[1]{\texttt{#1}}
\expandafter\ifx\csname urlstyle\endcsname\relax
  \providecommand{\doi}[1]{doi: #1}\else
  \providecommand{\doi}{doi: \begingroup \urlstyle{rm}\Url}\fi

\bibitem[Anandkumar et~al.(2010)Anandkumar, Michael, and
  Tang]{anandkumar2010opportunistic}
Anandkumar, A., Michael, N., and Tang, A.
\newblock Opportunistic spectrum access with multiple users: Learning under
  competition.
\newblock In \emph{2010 Proceedings IEEE INFOCOM}, pp.\  1--9. IEEE, 2010.

\bibitem[Anandkumar et~al.(2011)Anandkumar, Michael, Tang, and
  Swami]{anandkumarDistributedAlgorithmsLearning2011}
Anandkumar, A., Michael, N., Tang, A.~K., and Swami, A.
\newblock Distributed {{Algorithms}} for {{Learning}} and {{Cognitive Medium
  Access}} with {{Logarithmic Regret}}.
\newblock \emph{IEEE Journal on Selected Areas in Communications}, 29\penalty0
  (4):\penalty0 731--745, April 2011.
\newblock ISSN 1558-0008.
\newblock \doi{10.1109/JSAC.2011.110406}.

\bibitem[Anantharam et~al.(1987)Anantharam, Varaiya, and
  Walrand]{anantharamAsymptoticallyEfficientAllocation1987}
Anantharam, V., Varaiya, P., and Walrand, J.
\newblock Asymptotically efficient allocation rules for the multiarmed bandit
  problem with multiple plays-{{Part I}}: {{I}}.{{I}}.{{D}}. rewards.
\newblock \emph{IEEE Transactions on Automatic Control}, 32\penalty0
  (11):\penalty0 968--976, November 1987.
\newblock ISSN 0018-9286.
\newblock \doi{10.1109/TAC.1987.1104491}.

\bibitem[Auer et~al.(2002)Auer, Cesa-Bianchi, and Fischer]{auer2002finite}
Auer, P., Cesa-Bianchi, N., and Fischer, P.
\newblock Finite-time analysis of the multiarmed bandit problem.
\newblock \emph{Machine learning}, 47\penalty0 (2):\penalty0 235--256, 2002.

\bibitem[Avner \& Mannor(2014)Avner and Mannor]{avner2014concurrent}
Avner, O. and Mannor, S.
\newblock Concurrent bandits and cognitive radio networks.
\newblock In \emph{Joint European Conference on Machine Learning and Knowledge
  Discovery in Databases}, pp.\  66--81. Springer, 2014.

\bibitem[Bande \& Veeravalli(2019)Bande and Veeravalli]{bande2019multi}
Bande, M. and Veeravalli, V.~V.
\newblock Multi-user multi-armed bandits for uncoordinated spectrum access.
\newblock In \emph{2019 International Conference on Computing, Networking and
  Communications (ICNC)}, pp.\  653--657. IEEE, 2019.

\bibitem[Besson \& Kaufmann(2018{\natexlab{a}})Besson and
  Kaufmann]{besson2018doubling}
Besson, L. and Kaufmann, E.
\newblock What doubling tricks can and can't do for multi-armed bandits.
\newblock \emph{arXiv preprint arXiv:1803.06971}, 2018{\natexlab{a}}.

\bibitem[Besson \& Kaufmann(2018{\natexlab{b}})Besson and
  Kaufmann]{besson2018multi}
Besson, L. and Kaufmann, E.
\newblock Multi-player bandits revisited.
\newblock In \emph{Algorithmic Learning Theory}, pp.\  56--92. PMLR,
  2018{\natexlab{b}}.

\bibitem[Bistritz \& Leshem(2018)Bistritz and Leshem]{bistritz2018distributed}
Bistritz, I. and Leshem, A.
\newblock Distributed multi-player bandits-a game of thrones approach.
\newblock \emph{Advances in Neural Information Processing Systems}, 31, 2018.

\bibitem[Bonnefoi et~al.(2017)Bonnefoi, Besson, Moy, Kaufmann, and
  Palicot]{bonnefoiMultiArmedBanditLearning2017}
Bonnefoi, R., Besson, L., Moy, C., Kaufmann, E., and Palicot, J.
\newblock Multi-{{Armed Bandit Learning}} in {{IoT Networks}}: {{Learning}}
  helps even in non-stationary settings.
\newblock In \emph{International {{Conference}} on {{Cognitive Radio Oriented
  Wireless Networks}}}, pp.\  173--185. {Springer}, 2017.

\bibitem[Boursier \& Perchet(2019)Boursier and
  Perchet]{boursierSICMMABSynchronisationInvolves2019}
Boursier, E. and Perchet, V.
\newblock {{SIC-MMAB}}: {{Synchronisation Involves Communication}} in
  {{Multiplayer Multi-Armed Bandits}}, November 2019.

\bibitem[Boursier \& Perchet(2020)Boursier and Perchet]{boursier2020selfish}
Boursier, E. and Perchet, V.
\newblock Selfish robustness and equilibria in multi-player bandits.
\newblock In \emph{Conference on Learning Theory}, pp.\  530--581. PMLR, 2020.

\bibitem[Boursier \& Perchet(2022)Boursier and Perchet]{boursier2022survey}
Boursier, E. and Perchet, V.
\newblock A survey on multi-player bandits.
\newblock \emph{arXiv preprint arXiv:2211.16275}, 2022.

\bibitem[Boyarski et~al.(2021)Boyarski, Leshem, and
  Krishnamurthy]{boyarski2021distributed}
Boyarski, T., Leshem, A., and Krishnamurthy, V.
\newblock Distributed learning in congested environments with partial
  information.
\newblock \emph{arXiv preprint arXiv:2103.15901}, 2021.

\bibitem[Chen et~al.(2013{\natexlab{a}})Chen, Wang, and
  Yuan]{chen2013combinatorial}
Chen, W., Wang, Y., and Yuan, Y.
\newblock Combinatorial multi-armed bandit: General framework and applications.
\newblock In \emph{International conference on machine learning}, pp.\
  151--159. PMLR, 2013{\natexlab{a}}.

\bibitem[Chen et~al.(2013{\natexlab{b}})Chen, Wang, and
  Yuan]{chenCombinatorialMultiarmedBandit2013}
Chen, W., Wang, Y., and Yuan, Y.
\newblock Combinatorial multi-armed bandit: {{General}} framework and
  applications.
\newblock In \emph{International Conference on Machine Learning}, pp.\
  151--159. {PMLR}, 2013{\natexlab{b}}.

\bibitem[Chen et~al.(2016)Chen, Wang, Yuan, and Wang]{chen2016combinatorial}
Chen, W., Wang, Y., Yuan, Y., and Wang, Q.
\newblock Combinatorial multi-armed bandit and its extension to
  probabilistically triggered arms.
\newblock \emph{The Journal of Machine Learning Research}, 17\penalty0
  (1):\penalty0 1746--1778, 2016.

\bibitem[Combes et~al.(2015)Combes, Talebi Mazraeh~Shahi, Proutiere,
  et~al.]{combes2015combinatorial}
Combes, R., Talebi Mazraeh~Shahi, M.~S., Proutiere, A., et~al.
\newblock Combinatorial bandits revisited.
\newblock \emph{Advances in neural information processing systems}, 28, 2015.

\bibitem[Combes et~al.(2017)Combes, Magureanu, and
  Proutiere]{combes2017minimal}
Combes, R., Magureanu, S., and Proutiere, A.
\newblock Minimal exploration in structured stochastic bandits.
\newblock \emph{Advances in Neural Information Processing Systems}, 30, 2017.

\bibitem[Dakdouk(2022)]{dakdoukMassiveMultiplayerMultiarmed2022}
Dakdouk, H.
\newblock \emph{Massive Multi-Player Multi-Armed Bandits for Internet of Things
  Networks}.
\newblock PhD thesis, Ecole nationale sup\'erieure Mines-T\'el\'ecom
  Atlantique, 2022.

\bibitem[Degenne \& Perchet(2016)Degenne and
  Perchet]{degenneAnytimeOptimalAlgorithms2016}
Degenne, R. and Perchet, V.
\newblock Anytime optimal algorithms in stochastic multi-armed bandits.
\newblock In \emph{International {{Conference}} on {{Machine Learning}}}, pp.\
  1587--1595. {PMLR}, 2016.

\bibitem[Fontaine et~al.(2020)Fontaine, Mannor, and
  Perchet]{fontaine2020adaptive}
Fontaine, X., Mannor, S., and Perchet, V.
\newblock An adaptive stochastic optimization algorithm for resource
  allocation.
\newblock In \emph{Algorithmic Learning Theory}, pp.\  319--363. PMLR, 2020.

\bibitem[Gai et~al.(2012)Gai, Krishnamachari, and Jain]{gai2012combinatorial}
Gai, Y., Krishnamachari, B., and Jain, R.
\newblock Combinatorial network optimization with unknown variables:
  Multi-armed bandits with linear rewards and individual observations.
\newblock \emph{IEEE/ACM Transactions on Networking}, 20\penalty0 (5):\penalty0
  1466--1478, 2012.

\bibitem[Gaitonde \& Tardos(2020)Gaitonde and Tardos]{gaitonde2020stability}
Gaitonde, J. and Tardos, {\'E}.
\newblock Stability and learning in strategic queuing systems.
\newblock In \emph{Proceedings of the 21st ACM Conference on Economics and
  Computation}, pp.\  319--347, 2020.

\bibitem[Gopalan et~al.(2013)Gopalan, Mannor, and
  Mansour]{gopalanThompsonSamplingComplex2013}
Gopalan, A., Mannor, S., and Mansour, Y.
\newblock Thompson sampling for complex bandit problems.
\newblock \emph{arXiv preprint arXiv:1311.0466}, 2013.

\bibitem[Huang et~al.(2017)Huang, Lattimore, Gy{\"o}rgy, and
  Szepesv{\'a}ri]{huangFollowingLeaderFast2017}
Huang, R., Lattimore, T., Gy{\"o}rgy, A., and Szepesv{\'a}ri, C.
\newblock Following the leader and fast rates in online linear prediction:
  {{Curved}} constraint sets and other regularities.
\newblock \emph{The Journal of Machine Learning Research}, 18\penalty0
  (1):\penalty0 5325--5355, 2017.

\bibitem[Huang et~al.(2021)Huang, Combes, and Trinh]{huang2021towards}
Huang, W., Combes, R., and Trinh, C.
\newblock Towards optimal algorithms for multi-player bandits without collision
  sensing information.
\newblock \emph{arXiv preprint arXiv:2103.13059}, 2021.

\bibitem[Jouini et~al.(2010)Jouini, Ernst, Moy, and Palicot]{jouini2010upper}
Jouini, W., Ernst, D., Moy, C., and Palicot, J.
\newblock Upper confidence bound based decision making strategies and dynamic
  spectrum access.
\newblock In \emph{2010 IEEE International Conference on Communications}, pp.\
  1--5. IEEE, 2010.

\bibitem[Komiyama et~al.(2015)Komiyama, Honda, and
  Nakagawa]{komiyamaOptimalRegretAnalysis}
Komiyama, J., Honda, J., and Nakagawa, H.
\newblock Optimal regret analysis of thompson sampling in stochastic
  multi-armed bandit problem with multiple plays.
\newblock In \emph{International Conference on Machine Learning}, pp.\
  1152--1161. PMLR, 2015.

\bibitem[Kveton et~al.(2015)Kveton, Wen, Ashkan, and
  Szepesvari]{kveton2015tight}
Kveton, B., Wen, Z., Ashkan, A., and Szepesvari, C.
\newblock Tight regret bounds for stochastic combinatorial semi-bandits.
\newblock In \emph{Artificial Intelligence and Statistics}, pp.\  535--543.
  PMLR, 2015.

\bibitem[Lai et~al.(2008)Lai, Jiang, and Poor]{laiMediumAccessCognitive2008}
Lai, L., Jiang, H., and Poor, H.~V.
\newblock Medium access in cognitive radio networks: {{A}} competitive
  multi-armed bandit framework.
\newblock In \emph{2008 42nd {{Asilomar Conference}} on {{Signals}},
  {{Systems}} and {{Computers}}}, pp.\  98--102, October 2008.
\newblock \doi{10.1109/ACSSC.2008.5074370}.

\bibitem[Lai et~al.(1985)Lai, Robbins, et~al.]{lai1985asymptotically}
Lai, T.~L., Robbins, H., et~al.
\newblock Asymptotically efficient adaptive allocation rules.
\newblock \emph{Advances in applied mathematics}, 6\penalty0 (1):\penalty0
  4--22, 1985.

\bibitem[Landgren et~al.(2016)Landgren, Srivastava, and
  Leonard]{landgren2016distributed}
Landgren, P., Srivastava, V., and Leonard, N.~E.
\newblock On distributed cooperative decision-making in multiarmed bandits.
\newblock In \emph{2016 European Control Conference (ECC)}, pp.\  243--248.
  IEEE, 2016.

\bibitem[Lattimore \& Szepesv{\'a}ri(2020)Lattimore and
  Szepesv{\'a}ri]{lattimoreBanditAlgorithms2020}
Lattimore, T. and Szepesv{\'a}ri, C.
\newblock \emph{Bandit Algorithms}.
\newblock {Cambridge University Press}, 2020.

\bibitem[Lattimore et~al.(2015)Lattimore, Crammer, and
  Szepesv{\'a}ri]{lattimore2015linear}
Lattimore, T., Crammer, K., and Szepesv{\'a}ri, C.
\newblock Linear multi-resource allocation with semi-bandit feedback.
\newblock \emph{Advances in Neural Information Processing Systems}, 28, 2015.

\bibitem[Liu \& Zhao(2010)Liu and Zhao]{liu2010distributed}
Liu, K. and Zhao, Q.
\newblock Distributed learning in multi-armed bandit with multiple players.
\newblock \emph{IEEE transactions on signal processing}, 58\penalty0
  (11):\penalty0 5667--5681, 2010.

\bibitem[Magesh \& Veeravalli(2019)Magesh and Veeravalli]{magesh2019multi}
Magesh, A. and Veeravalli, V.~V.
\newblock Multi-user mabs with user dependent rewards for uncoordinated
  spectrum access.
\newblock In \emph{2019 53rd Asilomar Conference on Signals, Systems, and
  Computers}, pp.\  969--972. IEEE, 2019.

\bibitem[Mart{\'\i}nez-Rubio et~al.(2019)Mart{\'\i}nez-Rubio, Kanade, and
  Rebeschini]{martinez2019decentralized}
Mart{\'\i}nez-Rubio, D., Kanade, V., and Rebeschini, P.
\newblock Decentralized cooperative stochastic bandits.
\newblock \emph{Advances in Neural Information Processing Systems}, 32, 2019.

\bibitem[Mehrabian et~al.(2020)Mehrabian, Boursier, Kaufmann, and
  Perchet]{mehrabian2020practical}
Mehrabian, A., Boursier, E., Kaufmann, E., and Perchet, V.
\newblock A practical algorithm for multiplayer bandits when arm means vary
  among players.
\newblock In \emph{International Conference on Artificial Intelligence and
  Statistics}, pp.\  1211--1221. PMLR, 2020.

\bibitem[Mitola \& Maguire(1999)Mitola and Maguire]{mitola1999cognitive}
Mitola, J. and Maguire, G.~Q.
\newblock Cognitive radio: making software radios more personal.
\newblock \emph{IEEE personal communications}, 6\penalty0 (4):\penalty0 13--18,
  1999.

\bibitem[Mourtada \& Ga{\"i}ffas(2019)Mourtada and
  Ga{\"i}ffas]{mourtadaOptimalityHedgeAlgorithm2019}
Mourtada, J. and Ga{\"i}ffas, S.
\newblock On the optimality of the {{Hedge}} algorithm in the stochastic
  regime.
\newblock \emph{Journal of Machine Learning Research}, 20:\penalty0 1--28,
  2019.

\bibitem[Perrault et~al.(2020)Perrault, Boursier, Valko, and
  Perchet]{perrault2020statistical}
Perrault, P., Boursier, E., Valko, M., and Perchet, V.
\newblock Statistical efficiency of thompson sampling for combinatorial
  semi-bandits.
\newblock \emph{Advances in Neural Information Processing Systems},
  33:\penalty0 5429--5440, 2020.

\bibitem[Rosenski et~al.(2016)Rosenski, Shamir, and Szlak]{rosenski2016multi}
Rosenski, J., Shamir, O., and Szlak, L.
\newblock Multi-player bandits--a musical chairs approach.
\newblock In \emph{International Conference on Machine Learning}, pp.\
  155--163. PMLR, 2016.

\bibitem[Sentenac et~al.(2021)Sentenac, Boursier, and
  Perchet]{sentenac2021decentralized}
Sentenac, F., Boursier, E., and Perchet, V.
\newblock Decentralized learning in online queuing systems.
\newblock \emph{Advances in Neural Information Processing Systems},
  34:\penalty0 18501--18512, 2021.

\bibitem[Shi et~al.(2020)Shi, Xiong, Shen, and Yang]{shi2020decentralized}
Shi, C., Xiong, W., Shen, C., and Yang, J.
\newblock Decentralized multi-player multi-armed bandits with no collision
  information.
\newblock In \emph{International Conference on Artificial Intelligence and
  Statistics}, pp.\  1519--1528. PMLR, 2020.

\bibitem[Shi et~al.(2021)Shi, Xiong, Shen, and Yang]{shi2021heterogeneous}
Shi, C., Xiong, W., Shen, C., and Yang, J.
\newblock Heterogeneous multi-player multi-armed bandits: Closing the gap and
  generalization.
\newblock \emph{Advances in neural information processing systems},
  34:\penalty0 22392--22404, 2021.

\bibitem[Szorenyi et~al.(2013)Szorenyi, Busa-Fekete, Hegedus, Orm{\'a}ndi,
  Jelasity, and K{\'e}gl]{szorenyi2013gossip}
Szorenyi, B., Busa-Fekete, R., Hegedus, I., Orm{\'a}ndi, R., Jelasity, M., and
  K{\'e}gl, B.
\newblock Gossip-based distributed stochastic bandit algorithms.
\newblock In \emph{International conference on machine learning}, pp.\  19--27.
  PMLR, 2013.

\bibitem[Tekin \& Liu(2012)Tekin and Liu]{tekin2012online}
Tekin, C. and Liu, M.
\newblock Online learning in decentralized multi-user spectrum access with
  synchronized explorations.
\newblock In \emph{MILCOM 2012-2012 IEEE Military Communications Conference},
  pp.\  1--6. IEEE, 2012.

\bibitem[Wang et~al.(2020)Wang, Proutiere, Ariu, Jedra, and
  Russo]{wangOptimalAlgorithmsMultiplayer2020}
Wang, P.-A., Proutiere, A., Ariu, K., Jedra, Y., and Russo, A.
\newblock Optimal algorithms for multiplayer multi-armed bandits.
\newblock In \emph{International {{Conference}} on {{Artificial Intelligence}}
  and {{Statistics}}}, pp.\  4120--4129. {PMLR}, 2020.

\bibitem[Wang \& Chen(2017)Wang and Chen]{wang2017improving}
Wang, Q. and Chen, W.
\newblock Improving regret bounds for combinatorial semi-bandits with
  probabilistically triggered arms and its applications.
\newblock \emph{Advances in Neural Information Processing Systems}, 30, 2017.

\bibitem[Wang \& Chen(2018)Wang and Chen]{wang2018thompson}
Wang, S. and Chen, W.
\newblock Thompson sampling for combinatorial semi-bandits.
\newblock In \emph{International Conference on Machine Learning}, pp.\
  5114--5122. PMLR, 2018.

\bibitem[Zuo \& Joe-Wong(2021)Zuo and Joe-Wong]{zuo2021combinatorial}
Zuo, J. and Joe-Wong, C.
\newblock Combinatorial multi-armed bandits for resource allocation.
\newblock In \emph{2021 55th Annual Conference on Information Sciences and
  Systems (CISS)}, pp.\  1--4. IEEE, 2021.

\end{thebibliography}
\bibliographystyle{icml2022}

\newpage
\appendix

\section{Analysis of Cautious Greedy}
\label{app:proof}

\subsection{A useful upper bound}
At many places we will have to bound quantity of the form $\langle \mub - \mub', g(\Mb) - g(\Mb') \rangle$ where $\mub, \mub' \in [0, 1]^K$ and $\Mb, \Mb' \in \mathcal{M}$.
We have
\begin{align*}
  \langle \mub - \mub', g(\Mb) - g(\Mb') \rangle &\leq \langle |\mub - \mub'|, |g(\Mb) - g(\Mb')| \rangle \\
  &\leq \langle |\mub - \mub'|, g(\Mb) + g(\Mb') \rangle \\
  &\leq \sum_{k=1}^K (g(M_k) + g(M_k')) \\
  &\leq \sum_{k=1}^K (M_k + M_k')p \\
  &\leq 2 Mp
\end{align*}

so that we have
\begin{equation}
  \label{eq:rbound}
\langle \mub - \mub', g(\Mb) - g(\Mb') \rangle \leq 2 Mp
\end{equation}

Note that since $M_k \leq \frac1{-\log(1-p)} \leq \frac1{p}$, we have $Mp \leq K$.

\subsection{A precise description of the Round Robin procedure}
\label{sec:RR}
 Rotating $\Ucal$ in a round-robin fashion over $\Ycal \supset \Ucal$ means that $\Ucal$ undergoes one iteration of the Round Robin (RR) procedure.
 See $\Ycal$ as $(y_1, \dots, y_{|\Ycal|})$, $\Ucal$ as $(u_1, \dots, u_s)$. At each iteration, an element from $\Ycal \setminus \Ucal$ is added to $\Ucal$ and an element of $\Ucal$ is dropped in such a way that after $|\Ycal|$ iterations, all elements of $\Ucal$ have been added and dropped from $\Ucal$ exactly once.
 
A possible implementation of the RR procedure is the following. Initialize $\Ucal = (y_1, \dots, y_s)$ and $t=s+1$. Then, performing one iteration of the RR procedure means following \Cref{algo:rr}.

\begin{algorithm}
	\caption{Rotate $\Ucal$ in a round robin fashion over $\Ycal$ (one iteration)}
  \label{algo:rr}
	\begin{algorithmic}[1]
		\STATE {\bfseries Input :} $t$ (iteration number), $\Ucal = (u_1, \dots, u_{|\Ucal|})$, $\Ycal = (y_1, \dots, y_{|\Ycal|})$
        \STATE Remove $u_1$ from $\Ucal$
        \STATE $\forall i \in [|\Ucal| - 1]$, set $u_{i} \leftarrow u_{i + 1}$
        \STATE Set $u_{|\Ucal|} = y_{t \mod |\Ycal|}$
	\end{algorithmic}
\end{algorithm}

\subsection{Proof of \Cref{prop:greedy:up} and \Cref{lemma:third term}}
\subsubsection{Proof of \Cref{lemma:third term}}
\begin{proof}
Assume $\nu^* \geq 1$.
$\Delta^{(\nu^*)}$ is defined as $\Delta^{(\nu^*)} = \langle \mub, g(\Mb^*) - g(\Mb^*_{\nu^*-1}) \rangle$ and $\Delta_{(\nu^*)} = \mu_{(\nu^* + 1)} - \mu_{(\nu^*)}$.

Call $(i)$ the index of the $i$-th worst arm. 
$\Mb^*_{\nu^*-1}$ can be constructed from $\Mb^*_{\nu^*}$.
To do so, remove a player from the arm $j$ such that 
\[
j = \argmin_{i \in \supp(\Mb^*_{\nu^*}), M^*_i \geq 2} \mu_i (g(M^*_i) - g(M^*_{i-1}))
\]
where $M^*_i$ denotes the $i$-th coordinate of $\Mb^*_{\nu^*}$ and place it on arm $(\nu^*)$.

We then have
\[
\Delta^{(\nu^*)} = \mu_j (g(M_j) - g(M_j - 1)) - \mu_{(\nu^*)}p
\]
If $j\neq (\nu^* + 1)$, taking a player from arm $j$ in $\Mb^*_{\nu^*}$ to put it on arm $\nu^* + 1$ would yield to a worse assignment, we have that $\mu_j (g(M_j) - g(M_j - 1)) \geq \mu_{\nu^*+1}(g(M_{(\nu^*+1)}^* + 1)  - g(M_{(\nu^*+1)}^*))$. This inequality is also true if $j = (\nu^*+1)$.

This implies that 
\begin{align*}
	\Delta^{(\nu^*)} &\geq \mu_{\nu^* + 1}(g(M_{(\nu^*+1)}^* + 1)  - g(M_{(\nu^*+1)}^*)) - \mu_{(\nu^*)}p \\
			 &\geq \Delta_{(\nu^*)} (g(M_{(\nu^*+1)}^* + 1)  - g(M_{(\nu^*+1)}^*)) \\
    &\geq \Delta_{(\nu^*)} (g(M)  - g(M-1)) \\
    &= \Delta_{(\nu^*)} p(1 - p)^{M-2}(1 - Mp)
\end{align*}
\end{proof}

\subsubsection{Proof of \Cref{prop:greedy:up}}

The analysis heavily builds upon confidence bounds. We first establish a concentration lemma on the mean reward of each arm.
\begin{lemma}[Concentration of mean rewards]
  \label{lemma:concentration:mu}
  Let $\GOOD$ be the event
  \[
    \forall k \in [K], \forall t \in [T], \enspace |\hat{\mu}_k(t) - \mu_k| \leq \zeta_{kt}
  \]
  Then, $P(\overline{\GOOD}) \leq \frac1{TK}$
\end{lemma}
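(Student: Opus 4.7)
The plan is a standard random-stopping argument: fix the arm, decouple the random number of samples $T_k(t)$ from the empirical average by union bounding over all possible values $n\in\{1,\dots,T\}$, apply Hoeffding's inequality at each $n$, and finally union bound over $k\in[K]$. The calibration of the confidence width $\zeta_k(t)=\sqrt{\log(2T^2K^2)/(2T_k(t))}$ is precisely tuned so that each Hoeffding event costs $1/(T^2K^2)$, leaving $TK$ headroom for the two union bounds.

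In more detail: first, whenever $T_k(t)=0$, the convention $\zeta_k(t)=+\infty$ (equivalently $\hat\mu_k(t)=1$ so the inequality is trivially satisfied) lets us restrict attention to the sub-event $T_k(t)\geq 1$. For fixed $k\in[K]$, let $X_k^{(1)},X_k^{(2)},\dots$ denote the i.i.d.\ sequence of rewards observed on arm $k$ (in the order they are collected), each in $[0,1]$ with mean $\mu_k$, and set $\hat\mu_k^{(n)}=\tfrac{1}{n}\sum_{i=1}^n X_k^{(i)}$. On the event $T_k(t)=n$, we have $\hat\mu_k(t)=\hat\mu_k^{(n)}$. By Hoeffding,
\[
\PP\Big(|\hat\mu_k^{(n)}-\mu_k|>\sqrt{\tfrac{\log(2T^2K^2)}{2n}}\Big)\leq 2\exp\!\big(-\log(2T^2K^2)\big)=\tfrac{1}{T^2K^2}.
\]
Since $T_k(t)\leq t\leq T$ always, the event $\{\exists t\in[T],\,T_k(t)\geq 1,\,|\hat\mu_k(t)-\mu_k|>\zeta_k(t)\}$ is contained in $\bigcup_{n=1}^{T}\{|\hat\mu_k^{(n)}-\mu_k|>\sqrt{\log(2T^2K^2)/(2n)}\}$, which by a union bound over $n$ has probability at most $T\cdot\tfrac{1}{T^2K^2}=\tfrac{1}{TK^2}$.

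Finally, a union bound over the $K$ arms gives $\PP(\overline{\GOOD})\leq K\cdot\tfrac{1}{TK^2}=\tfrac{1}{TK}$, as required. The only non-routine step is justifying that replacing the random index $T_k(t)$ by a union bound over deterministic $n$ is valid; this is standard because the samples $(X_k^{(i)})_{i\geq 1}$ used to form $\hat\mu_k^{(n)}$ are i.i.d.\ and independent of whichever stopping time $T_k(t)$ ultimately selects the index (the algorithm's choices only affect \emph{when} arm $k$ is observed, not the values of successive observations), so Hoeffding applies at every fixed $n$ and the union bound absorbs the randomness of $T_k(t)$.
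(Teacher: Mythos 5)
Your proof is correct and follows essentially the same route as the paper's: Hoeffding's inequality at each fixed sample count, a union bound over the $T$ possible values of $T_k(t)$, and a second union bound over the $K$ arms, with the width $\zeta_k(t)$ calibrated so each individual event costs $1/(T^2K^2)$. If anything, your phrasing of the decoupling step (containment in a union over deterministic indices $n$, rather than conditioning on $T_k(t)=\tau$ as the paper writes it) is the more careful rendering of the same argument.
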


\begin{proof}[Proof of \Cref{lemma:concentration:mu}]
\label{app:concentration:mu}
Fix $k \in [K]$, by Hoeffding, we have
\[
  P(|\hat{\mu}_k(t) - \mu_k| \geq \epsilon | T_k(t) = \tau) \leq 2 \exp( -2 \tau \epsilon^2)
\]
so that we have
\[
  P(|\hat{\mu}_k(t) - \mu_k| \geq \sqrt{\frac{\log(2T^2K^2)}{2 T_k(t)}} | T_k(t) = \tau) \leq \frac1{K^2T^2}
\]
and with a union bound on $\tau \in [T]$ and a second on $k \in [K]$, we obtain:
\[
  P( \exists k \in [K], |\hat{\mu}_k(t) - \mu_k| \geq \sqrt{\frac{\log(2T^2K^2)}{2 T_k(t)}}) \leq \frac1{KT}
\]

Rearranging, we get with probability $1 - \frac1{TK}$,
\begin{align}
  \forall k \in [K], |\hat{\mu}_k(t) - \mu_k| \leq \sqrt{\frac{\log(2 T^2 K^2)}{2 T_k(t)}}
\end{align}
which is the desired result.
\end{proof}

A consequence of \Cref{lemma:concentration:mu} is that up to a small additive constant in the regret, we can assume that the $\GOOD$ event holds.
\begin{lemma}[Confidence bounds]
\label{lemma:cb:good}
  Define $R_G = R \ind{\GOOD}$, then,
  \begin{equation}
\EE[R] \leq \EE[R_G] + 2
  \end{equation}
\end{lemma}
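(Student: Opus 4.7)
The plan is to simply decompose $\EE[R] = \EE[R\ind{\GOOD}] + \EE[R\ind{\overline{\GOOD}}] = \EE[R_G] + \EE[R\ind{\overline{\GOOD}}]$ and bound the second term by $2$ using a deterministic worst-case bound on $R$ combined with the probability estimate from \Cref{lemma:concentration:mu}.

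First, I would establish an almost-sure (deterministic, given the algorithm's randomness) upper bound on $R$. By the rewriting in \eqref{eq:regret2}, the instantaneous regret at any round $t$ is $\langle \mub, g(\Mb^*) - g(\Mb(t))\rangle$, which is bounded by $2Mp$ thanks to \eqref{eq:rbound}. Summing over $t \in [T]$ yields $R \leq 2TMp$. Since Assumption 1 imposes $M_k \leq \frac{-1}{\log(1-p)} \leq \frac{1}{p}$ for every arm, we have $Mp = \sum_k M_k p \leq K$, so altogether $R \leq 2TK$ almost surely.

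Next, I would plug in the probability estimate $P(\overline{\GOOD}) \leq \frac{1}{TK}$ from \Cref{lemma:concentration:mu} to get
\begin{equation*}
\EE[R\ind{\overline{\GOOD}}] \leq 2TK \cdot P(\overline{\GOOD}) \leq 2TK \cdot \frac{1}{TK} = 2.
\end{equation*}
Adding $\EE[R_G]$ to both sides concludes the proof. There is no serious obstacle here; the only thing to be careful about is matching the crude per-round bound $2Mp$ against the $\frac{1}{TK}$ decay of $P(\overline{\GOOD})$, which is precisely why the confidence level $\log(2T^2K^2)$ was chosen in \eqref{eq:zeta}.
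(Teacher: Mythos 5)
Your proof is correct and follows essentially the same route as the paper: decompose $\EE[R]$ over $\GOOD$ and its complement, bound $R \leq 2TMp \leq 2TK$ almost surely via \eqref{eq:rbound} and $Mp \leq K$, and multiply by $P(\overline{\GOOD}) \leq \frac{1}{TK}$ from \Cref{lemma:concentration:mu}. The only difference is that you spell out the derivation of the worst-case bound $2TK$, which the paper states without detail.
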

\begin{proof}[Proof of \Cref{lemma:cb:good}]
    $\EE[R] = \EE[R \mathds{1}_{\mathrm{\GOOD}}] + \EE[R \mathds{1}_{\mathrm{\overline{\GOOD}}}]$ and $R \mathds{1}_{\mathrm{\overline{\GOOD}}} \leq 2KT \mathds{1}_{\mathrm{\overline{\GOOD}}}$, we then conclude from \Cref{lemma:concentration:mu}.
\end{proof}

Working under the GOOD event makes the analysis much easier.
We begin by showing that $\nu$ is a lower bound on the optimal number of arms to eliminate:
\begin{lemma} Under the $\GOOD$ event, $\nu \leq \nu^*$ at any time~$t$.\label{lemma:mu:ub}
\end{lemma}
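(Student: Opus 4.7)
The plan is to proceed by induction on $t$, showing that the while loop on line~\ref{algo:greedy:while} of \Cref{algo:greedy} can never push $\nu$ strictly above $\nu^*$. Since $\nu$ starts at $0$ and is only modified inside that while loop (and only by increments of $+1$), it suffices to prove that whenever $\nu = \nu^*$ the loop exits, i.e.\ the condition $\langle \hat{\mub}^L, g(\Mb^{\hat{\mub}^L}_{\Mcal}) \rangle > \langle \hat{\mub}^H, g(\Mb^{\hat{\mub}^H}_{\Mcal_{\nu}}) \rangle$ must fail.

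First I would unpack the $\GOOD$ event: by \Cref{lemma:concentration:mu} and the definitions \eqref{eq:muh}, we have coordinate-wise $\hat{\mub}^L(t) \leq \mub \leq \hat{\mub}^H(t)$ at every time $t$. Combined with the fact that $g$ does not depend on $\mub$, this immediately gives $\langle \hat{\mub}^L, g(\Mb) \rangle \leq \langle \mub, g(\Mb) \rangle \leq \langle \hat{\mub}^H, g(\Mb) \rangle$ for every $\Mb \in \Mcal$.

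Next, suppose we are at the moment where the while loop is tested with $\nu = \nu^*$. Because $\Mb^*$ has exactly $\nu^*$ arms with zero players, $|\support(\Mb^*)| = K - \nu^* \geq K - \nu$, so $\Mb^* \in \Mcal_{\nu} = \Mcal_{\nu^*}$. By optimality of $\Mb^{\hat{\mub}^H}_{\Mcal_\nu}$ on $\Mcal_\nu$ for the reward vector $\hat{\mub}^H$, and by optimality of $\Mb^*$ on $\Mcal$ for $\mub$, I can then chain
\[
\langle \hat{\mub}^L, g(\Mb^{\hat{\mub}^L}_{\Mcal}) \rangle \leq \langle \mub, g(\Mb^{\hat{\mub}^L}_{\Mcal}) \rangle \leq \langle \mub, g(\Mb^*) \rangle \leq \langle \hat{\mub}^H, g(\Mb^*) \rangle \leq \langle \hat{\mub}^H, g(\Mb^{\hat{\mub}^H}_{\Mcal_\nu}) \rangle,
\]
which contradicts the strict inequality required to enter the loop body. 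Hence $\nu$ is never incremented past $\nu^*$.

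I do not foresee a real obstacle: the main subtle point is making sure that the containment $\Mb^* \in \Mcal_{\nu^*}$ is used at the right spot (to switch from $\Mcal$ to $\Mcal_\nu$ on the right-hand side), and that the two confidence inequalities are applied in the correct direction so the chain points the right way. Once the containment is identified, the rest is a direct sandwich argument using $\hat{\mub}^L \leq \mub \leq \hat{\mub}^H$ and the global optimality of $\Mb^*$ on $\Mcal$.
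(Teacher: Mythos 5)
Your proposal is correct and follows essentially the same argument as the paper: both reduce the claim to showing the while-loop condition must fail when $\nu=\nu^*$, and both establish this by sandwiching via $\hat{\mub}^L \le \mub \le \hat{\mub}^H$ under $\GOOD$ together with the fact that $\Mb^* \in \Mcal_{\nu^*}$, so that $\max_{\Mcal_{\nu^*}}\langle \mub, g(\Mb)\rangle = \max_{\Mcal}\langle \mub, g(\Mb)\rangle$. Your explicit four-step chain of inequalities is, if anything, a slightly cleaner write-up of the same contradiction.
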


\begin{proof}[Proof of \Cref{lemma:mu:ub}]
\label{app:lemma:mu:ub}
$\nu$ is only increased in the while loop. We want to show that if $\nu = \nu^*$, then the condition in the while loop cannot be met.
  Assume by contradiction that $\nu = \nu^*$ and  $\max_{ \Mb \in \mathcal{M}} \langle \hat{\mub}^L, g(\Mb) \rangle > \max_{ \Mb \in \mathcal{M}_{\nu}} \langle \hat{\mub}^H, g(\Mb) \rangle$.
  By the good event, we have
  \begin{align*}
\max_{ \Mb \in \mathcal{M}} \langle \hat{\mub}^L, g(\Mb) \rangle < \max_{ \Mb \in \mathcal{M}} \langle \mub, g(\Mb) \rangle
  \end{align*}
  and
  \begin{align*}
\max_{ \Mb \in \mathcal{M}_\nu} \langle \hat{\mub}^H, g(\Mb) \rangle > \max_{ \Mb \in \mathcal{M}_\nu} \langle \mub, g(\Mb) \rangle
  \end{align*}
  Therefore

  \begin{align*}
    &\max_{ \Mb \in \mathcal{M}} \langle \hat{\mub}^L, g(\Mb) \rangle > \max_{ \Mb \in \mathcal{M}_{\nu}} \langle \hat{\mub}^H, g(\Mb) \rangle \\
    &\implies \max_{ \Mb \in \mathcal{M}} \langle \mub, g(\Mb) \rangle > \max_{ \Mb \in \mathcal{M}_{\nu}} \langle \mub, g(\Mb) \rangle \\
  \end{align*}
  and since $\nu = \nu^*$,
  \[ \max_{ \Mb \in \mathcal{M}_{\nu}} \langle \mub, g(\Mb) \rangle = \max_{ \Mb \in \mathcal{M}} \langle \mub, g(\Mb) \rangle. \]
  This yields the following contradiction:
  \begin{align*}
    \max_{ \Mb \in \mathcal{M}} \langle \mub, g(\Mb) \rangle > \max_{ \Mb \in \mathcal{M}} \langle \mub, g(\Mb) \rangle.
  \end{align*}
\end{proof}


More generally, under the $\GOOD$ event, Cautious Greedy never eliminates an optimal arm:
\begin{lemma}[Optimal arms are never eliminated]
\label{lemma:no good arms eliminated}
Under the $\GOOD$ event, the set of optimal arms is always included in the set of active arms:  $ \support(\Mb^*) \subseteq \mathcal{K}(t)$
\end{lemma}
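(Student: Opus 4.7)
The plan is to combine three ingredients: (a) a structural fact about the optimal assignment, namely that $\support(\Mb^*)$ coincides with the top $K-\nu^*$ arms in decreasing order of $\mu_k$; (b) monotonicity of order statistics under coordinate-wise domination, applied to the confidence bounds; and (c) the inequality $\nu \leq \nu^*$ already established in \Cref{lemma:mu:ub}. The conclusion will then follow by chaining a short sequence of inequalities and showing that the elimination test $\hat{\mu}_k^H < \hat{\mu}^L_{(\nu+1)}$ in \Cref{algo:greedy} can never trigger for $k \in \support(\Mb^*)$.

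First, I would prove (a) by a swap argument: if $k \in \support(\Mb^*)$ and $j \notin \support(\Mb^*)$, then reassigning $M_k^*$ players from $k$ to $j$ yields the valid allocation $\Mb'$ with $M'_j = M_k^*$, $M'_k = 0$, whose expected reward differs from $\Mb^*$'s only through arm reward $\mu_j g(M_k^*) - \mu_k g(M_k^*) = (\mu_j - \mu_k) g(M_k^*)$. Since $\Mb^*$ is optimal and $g(M_k^*) > 0$, this forces $\mu_k \geq \mu_j$. Hence any $k \in \support(\Mb^*)$ satisfies $\mu_k \geq \mu_{(\nu^*+1)}$ (breaking ties consistently with $\support(\Mb^*)$ if needed).

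Second, I would record the following elementary order-statistics fact: if $\ab,\bb \in \RR^K$ satisfy $a_i \leq b_i$ coordinate-wise, then $a_{(j)} \leq b_{(j)}$ for all $j$, by noting that the $j$ smallest coordinates of $\bb$ provide $j$ indices with $a_i \leq b_i \leq b_{(j)}$. Applying this under the $\GOOD$ event, which gives $\hat{\mub}^L \leq \mub$ coordinate-wise, yields $\hat{\mu}^L_{(\nu+1)} \leq \mu_{(\nu+1)}$.

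Finally, I would combine everything. Fix $k \in \support(\Mb^*)$ and any time $t$. Using \Cref{lemma:mu:ub}, $\nu(t) \leq \nu^*$, so $\mu_{(\nu(t)+1)} \leq \mu_{(\nu^*+1)}$. Chaining with step (a) and step (b), together with $\hat{\mu}_k^H \geq \mu_k$ from $\GOOD$, gives
\[
\hat{\mu}_k^H(t) \;\geq\; \mu_k \;\geq\; \mu_{(\nu^*+1)} \;\geq\; \mu_{(\nu(t)+1)} \;\geq\; \hat{\mu}^L_{(\nu(t)+1)}(t),
\]
so the strict inequality $\hat{\mu}_k^H(t) < \hat{\mu}^L_{(\nu(t)+1)}(t)$ that defines the removal test fails. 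A straightforward induction on the update rounds of $\mathcal{K}$ (with base case $\mathcal{K}(0) = [K] \supseteq \support(\Mb^*)$, and $\mathcal{K}$ only shrinking by the aforementioned test) concludes $\support(\Mb^*) \subseteq \mathcal{K}(t)$.

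I do not anticipate any serious obstacle here: the only slightly delicate point is the tie-handling in step (a) and the fact that the $(\nu+1)$-th order statistic is taken on $\hat{\mub}^L$ rather than on $\mub$, which is precisely what the monotonicity lemma in step (b) is designed to bridge.
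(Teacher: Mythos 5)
Your proof is correct and follows essentially the same route as the paper's: under the $\GOOD$ event and $\nu \leq \nu^*$ (from \Cref{lemma:mu:ub}), the elimination test $\hat{\mu}_k^H < \hat{\mu}^L_{(\nu+1)}$ can only fire for arms with $\mu_k < \mu_{(\nu^*+1)}$, which are therefore outside $\support(\Mb^*)$. The only difference is that you spell out two facts the paper uses implicitly — that $\support(\Mb^*)$ consists of the top $K-\nu^*$ arms (via your swap argument) and the coordinate-wise monotonicity of order statistics — which tightens rather than changes the argument.
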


\begin{proof}[Proof of \Cref{lemma:no good arms eliminated}]
\label{app:lemma:no good arms eliminated}
  Elimination may happen when the set of active arms is updated. An arm $k$ is eliminated at this stage if
  $\hat{\mu}_k^H < \mu_{(\nu + 1)}^L$.
  But since $\nu \leq \nu^*$, this implies $\hat{\mu}_k^H < \mu_{(\nu^* + 1)}^L$.
  Under the good event  $\hat{\mu}_k^H > \mu_k$ and $\mu_{(\nu^* + 1)}^L < \mu_{(\nu^* + 1)}$ so that
$\hat{\mu}_k^H < \mu_{(\nu^* + 1)}^L$ implies $\mu_k < \mu_{(\nu^* + 1)}$ and therefore $k \notin  \mathcal{E}_{\nu^*}^*$.
\end{proof}


Since Cautious Greedy never eliminates any optimal arm and since $\nu$ increases,  $\nu$ will eventually reach $\nu^*$ and bad arms will no longer remain. But as long as $\nu < \nu^*$, Cautious Greedy will pay a non-zero cost. This source of error as well as others strongly depends on the number of times arms are pulled without collisions. Indeed, as the number of pulls without collision increases, the reward estimates $\hat{\mub}$ become more accurate, making Cautious Greedy's decisions better. Therefore, we introduce $q(t) = \min_{k \in \mathcal{K}(t)} T_k(t)$, the number of times each active arm has been played without collision.

To be able to understand how $q(t)$ scales with $t$, a pre-requisite is to count the number of times that arms are assigned at least one player. Denote $\tau_k(t)$ the number of times arm $k$ has been assigned at least one player at time $t$ and $\tau(t) = \min_{k \in \mathcal{K}(t)} \tau_k(t)$. The next Lemma exhibits a lower bound on $\tau(t)$:
\begin{lemma}[ Scaling of $\tau$ with $t$ ]
  \label{lemma:tau}
  We have $\tau(t) \geq \max(\frac{t}{\nu^* + 1} - \nu^*, 0)$. Furthermore, if the condition Line 8 in \Cref{algo:greedy} is satisfied, we have
  $\tau(t-1) \geq \frac{t-1}{\nu^* + 1}$.
\end{lemma}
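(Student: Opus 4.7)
The plan is to exploit the explicit cyclic schedule enforced by \Cref{algo:rr} within each round-robin phase and then stitch these phases together to cover $[1,t]$. Fix $k\in\Kcal(t)$; since the active set $\Kcal$ is non-increasing in time, $k$ has been active throughout $[1,t]$. By \Cref{lemma:mu:ub}, $\nu_s\leq\nu^*$ at every step $s\leq t$, and since $|\Ucal_s|=\nu_s-|[K]\setminus\Kcal_s|\leq\nu_s$, one also has $|\Ucal_s|\leq\nu^*$ uniformly.

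The core of the argument is a per-phase count. In a complete round-robin phase of length $L=|\Kcal\setminus\Acal|$ and pressure size $s=|\Ucal|$, \Cref{algo:rr} shifts $\Ucal$ by one position every round, so each element of $\Kcal\setminus\Acal$ lies in $\Ucal$ for exactly $s$ rounds and in $\Ecal=\Kcal\setminus\Ucal$ for the remaining $L-s$ rounds; arms in $\Acal$ are never under pressure. Because line 5 of \Cref{algo:greedy} plays $\Mb^{\hat{\mub}}_{\Mcal_\Ecal}$, which forces at least one player on every arm of $\Ecal$, arm $k$ is guaranteed to be played whenever it sits outside $\Ucal$. Crucially, the longest run of consecutive rounds \emph{within one phase} during which $k$ is continuously in $\Ucal$ is bounded by $s\leq\nu^*$.

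Summing across phases, I would charge to each guaranteed play of $k$ the at most $\nu^*$ preceding not-guaranteed rounds of the same phase, with an additive $\nu^*$ slack covering the prefix before the first play and a possible incomplete trailing phase at time $t$. This amortisation yields $\tau_k(t)\geq t/(\nu^*+1)-\nu^*$ and, after taking the minimum over $k\in\Kcal(t)$, the first inequality (the $\max(\cdot,0)$ is trivial). For the sharper second inequality, when the condition at line~\ref{phase:condition} triggers at time $t$, the interval $[1,t-1]$ is exactly a disjoint union of complete round-robin phases, so the trailing-phase slack disappears and the clean per-phase totals sum to $\tau_k(t-1)\geq (t-1)/(\nu^*+1)$.

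The main technical obstacle I expect is the behaviour across phase boundaries. The choice of $\Ucal$ on line 14 of \Cref{algo:greedy} is only specified by cardinality, so in the worst case the tail in $\Ucal$ of phase $i$ could concatenate with the head in $\Ucal$ of phase $i+1$ for the same arm, almost doubling its longest not-played run around a boundary. The proof must either argue that such concatenations contribute at most $\nu^*$ in total to the additive slack (for instance because phase reconfigurations coincide with a strict change in $\nu$, $\Acal$ or $\Kcal$ and are therefore few), or specify line 14 so as to continue the previous round robin rather than restart it, thereby ruling the concatenation out.
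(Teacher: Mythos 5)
Your proposal is correct and follows essentially the same route as the paper: decompose time into complete round-robin phases, observe that within a phase of length $L=|\Kcal\setminus\Acal|$ each active arm is under pressure exactly $u_n\le\nu_n\le\nu^*$ times and hence played at least $L-u_n\ge L/(\nu^*+1)$ times, then sum over phases with an additive $\nu^*$ slack for the incomplete trailing phase (the second claim following because at line~\ref{phase:condition} the interval $[1,t-1]$ is a union of complete phases). The phase-boundary concatenation you flag as an obstacle is harmless, since the lemma bounds the total count $\tau_k(t)$ rather than the longest unplayed run, and your per-phase amortisation --- like the paper's telescoping of $\tau(t_{n+1}-1)-\tau(t_n-1)=t_{n+1}-(t_n-1)-u_n$ --- never needs to charge across a boundary.
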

\begin{proof}[Proof of \Cref{lemma:tau}]
\label{app:lemma:tau}
  Call $t_n$ the value of $t$ the $n$-th time where $t = 0 \mod |\mathcal{U}|$. Between $t_{n}$ and $t_{n+1} - 1$ (included) all arms have been played $|\mathcal{U}_n| - u_n$ times where $\mathcal{U}_n$ and $u_n$ are the set of active but not yet accepted arms $U$ and the number of arms under pressure $u$ after the updates at time $t=t_{n}$.
 $\tau$ increases linearly between time $t_n$ and $t_{n+1}$ except for $u_n$ time steps where $u_n = \nu_n - |[K] \setminus \mathcal{K}|$ is the number of arms that need to be put under pressure during phase $n$ but that are not yet eliminated and $\nu_n$ is the value of $\nu$ during phase $n$.

  We have that for $t_n \leq t < t_{n+1}$ :
  \begin{align*}
    \tau(t) &\geq \tau(t_n -1) + \max(t - (t_n - 1) - u_n, 0) \\
    &= \tau(t_n - 1) + \max((t - (t_n - 1))\frac{t_{n+1} - (t_n - 1) - u_n}{t_{n +1} - (t_n - 1)} - (t_{n+1} - t) \frac{u_n}{t_{n +1} - (t_n - 1)}, 0) \\
  \end{align*}

  and
  \begin{align*}
    \tau(t_{n+1} - 1) - \tau(t_{n}-1) &= t_{n+1} - (t_{n} - 1) - u_{n} \\
&= (t_{n + 1} - (t_{n} - 1))\frac{t_{n+1} - (t_{n} - 1) - u_{n}}{t_{n + 1} - (t_{n} - 1)}
  \end{align*}

  Since $t_{n+1} - (t_n - 1) = |\mathcal{K}_n \setminus \mathcal{A}_n|$ we have:
  \[
\frac{t_{n+1} - (t_n - 1) - u_n}{t_{n+1} - (t_n - 1)}  = \frac{|\mathcal{K}_n| - |\mathcal{A}_n| - u_n }{|\mathcal{K}_n| - |\mathcal{A}_n|} \geq \frac1{u_n + 1} \geq  \frac1{\nu^* + 1}
  \]

  It follows that for all $n \geq 1$,
  \[
\tau(t_{n} - 1) \geq \frac{t_{n} - 1}{\nu^* + 1}
  \]

  Therefore, we obtain $t_n \leq t \leq t_{n+1}$
  \begin{align*}
    \tau(t) &\geq \frac{t_n - 1}{\nu^* + 1} +  \max((t - (t_n - 1))\frac{1}{\nu^* + 1} - (t_{n+1} - t) \frac{u_n}{t_{n +1} - (t_n - 1)}, 0) \\
&\geq \frac{t_n - 1}{\nu^* + 1} +  \max((t - (t_n - 1))\frac{1}{\nu^* + 1} - \nu^*, 0) \\
&\geq  \max(\frac{t}{\nu^* + 1} - \nu^*, \frac{t_n - 1}{\nu^* + 1}) \\
&\geq  \max(\frac{t}{\nu^* + 1} - \nu^*, 0) \\
  \end{align*}
  Since this last line holds for all $n$, we have for any $t$ that $\tau(t) \geq \max(\frac{t}{\nu^* + 1} - \nu^*, 0)$.
\end{proof}


The next step is to link $\tau(t)$ to $q(t)$ by taking collisions into account. By noting that $g(M_k) \geq p$, a first easy observation is that $\EE[T_k \mid \tau_k] \geq \tau_k p$. We, therefore, expect $q$ to scale approximately with $p \tau$.
The next Lemma shows a more precise statement:
\begin{lemma}
  \label{lemma:q}
  Define $R_{q} = \sum R_{q,t}$ where $R_{q, t} = R_t \ind{\GOOD} \ind{q(t) \geq \frac13 p \tau(t)}$,
  \begin{align*}
    \EE[R_G] =  \EE[R_q] + 11 (\nu^* + 1)  KM
  \end{align*}
\end{lemma}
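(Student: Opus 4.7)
The plan is to start from the identity $R_G - R_q = \sum_{t=1}^T R_t \ind{\GOOD} \ind{q(t) < \tfrac{1}{3}p\tau(t)}$ and apply the uniform bound $R_t \leq 2Mp$ from~\eqref{eq:rbound}, so that controlling $\EE[R_G - R_q]$ reduces to controlling the expected number of \emph{bad rounds} $N := \sum_{t=1}^T \ind{q(t) < \tfrac{1}{3} p \tau(t)}$.

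Since $q(t) = \min_{k \in \Kcal(t)} T_k(t)$ and $\tau(t) = \min_{k \in \Kcal(t)} \tau_k(t)$, the event $\{q(t) < \tfrac{1}{3}p\tau(t)\}$ implies the existence of some $k \in \Kcal(t)$ with $T_k(t) < \tfrac{1}{3} p \tau_k(t)$. A union bound gives
\[
N \leq \sum_{k=1}^K \sum_{t=1}^T \ind{k \in \Kcal(t),\ T_k(t) < \tfrac{1}{3} p \tau_k(t)}.
\]
For each arm $k$, I would decompose the inner sum across the epochs on which $\tau_k$ stays constant. Since $T_k$ and $\tau_k$ are non-decreasing step functions that increase only when arm $k$ is actually played, they remain constant on every maximal interval $I_n^k := \{t : k \in \Kcal(t),\ \tau_k(t) = n\}$. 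The indicator inside the inner sum therefore takes a single value on each $I_n^k$, namely $\ind{T_k(t_n^k) < \tfrac{1}{3} p n}$ with $t_n^k := \min I_n^k$. The task splits into (i) bounding the deterministic length $|I_n^k|$ and (ii) bounding the probability $P(T_k(t_n^k) < \tfrac{1}{3} p n)$.

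For (i), I would exploit the round-robin mechanism of \Cref{algo:rr}: during one rotation, each arm in $\Kcal \setminus \mathcal{A}$ sits inside $\Ucal$ for exactly $|\Ucal| \leq \nu^*$ consecutive iterations. Across the re-initialization of $\Ucal$ that occurs at the end of a rotation (Line~13 of \Cref{algo:greedy}), at most two such blocks of ``skipped'' iterations can concatenate before arm $k$ is necessarily played again, so $|I_n^k| \leq C_1(\nu^* + 1)$ for an absolute constant $C_1$. For (ii), I would condition on the filtration of past assignments and write $T_k(t_n^k) = \sum_{i=1}^n B_{k,i}$, where $B_{k,i}$ indicates no collision on the $i$-th play of arm $k$. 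Conditionally on the assignments, the $B_{k,i}$ are independent Bernoullis of parameter $g(M_k(\cdot)) \geq g(1) = p$, the lower bound following because the assumption $M_k \leq -1/\log(1-p)$ places $M_k$ in the range on which $g$ is non-decreasing. A multiplicative Chernoff bound then yields $P(T_k(t_n^k) < \tfrac{1}{3} p n) \leq \exp(-c_0 p n)$ for an absolute $c_0 > 0$.

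Combining these estimates, $\EE[N] \leq K \cdot C_1 (\nu^* + 1) \cdot \sum_{n \geq 1} \exp(-c_0 p n) \leq K C_1 (\nu^* + 1) \cdot C_2 / p$, whence $\EE[R_G - R_q] \leq 2 M p \cdot \EE[N] \leq 2 C_1 C_2 (\nu^* + 1) K M$, which after careful accounting of the numerical constants yields the claimed $11 (\nu^* + 1) K M$. The main obstacle is step~(i): the round-robin dynamics must be tracked carefully across rotation boundaries to certify that no active arm is ever skipped for more than $O(\nu^* + 1)$ consecutive time steps. Step~(ii) is essentially routine, but one must verify the conditional-independence structure of the $B_{k,i}$ under the assignment filtration as well as the monotonicity of $g$ guaranteed by the assumption on $M_k$.
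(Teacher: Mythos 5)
Your proposal is correct and follows essentially the same route as the paper's proof: bound the per-round regret by $2Mp$ via \eqref{eq:rbound}, union-bound over arms, apply a multiplicative Chernoff bound using $\EE[T_k\mid \tau_k]\geq p\,\tau_k$ (valid since $g$ is non-decreasing on the admissible range so $g(M_k)\geq g(1)=p$), and sum the exponential tail using the linear growth of $\tau$ at rate $1/(\nu^*+1)$ from \Cref{lemma:tau}. Your reorganization (summing over the play-count $n$ with the interval-length bound $|I_n^k|=\Ocal(\nu^*+1)$, rather than over $t$ with $\tau(t)\gtrsim t/(\nu^*+1)$) is an equivalent bookkeeping of the same estimate, and only the exact numerical constant in front of $(\nu^*+1)KM$ would differ.
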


\begin{proof}[Proof of \Cref{lemma:q}]
\label{app:lemma:q}
  Rename $T_k(t) = T_k(\tau(t))$, to make the dependence on $\tau$ obvious. We have the lower bound $\EE[T_k(\tau(t))] \geq p \tau(t)$.

We can write
\begin{align*}
  \EE[R_G] &= \EE[\sum_{t=1}^{(\nu^* + 1)^2} R_{G, t} + \sum_{t=(\nu^* + 1)^2}^{T} R_{G, t}] \\
&\leq (\nu^* + 1)^2 2Mp + \EE[\sum_{t=(\nu^* + 1)^2}^{T} R_{G, t}] \tag{By \Cref{eq:rbound}} \\
\end{align*}
Then for $R_t\defeq\sum_{k=1}^K  \EE[\eta_k(\Mb^*) X_k] - \EE[\eta_k^t(\Mb(t)) X_k^t]$,
\begin{align*}
&\EE[\sum_{t=(\nu^* + 1)^2}^{T} R_{G, t}] \\
         &= \EE[\sum_{t=(\nu^* + 1)^2}^T R_t (\ind{\exists k \in \mathcal{K}(t), T_k(\tau(t)) < (1 - \rho)\EE[T_k(\tau(t)) | \Mb_{[1:t]}]} \\ 
         &+ \ind{\forall k \in \mathcal{K}(t), T_k(\tau(t)) \geq (1 - \rho)\EE[T_k(\tau(t)) | \Mb_{[1:t]}]})] \\
         &\leq \underbrace{\EE[\sum_{t=(\nu^* + 1)^2}^T 2 Mp \EE[\ind{\exists k \in \mathcal{K}(t), T_k(\tau(t)) < (1 - \rho)\EE[T_k(\tau(t)) | \Mb_{[1:t]}]} | \Mb_{[1:t]}]]}_{(i)} \tag{By \Cref{eq:rbound}} \\ & \enspace \enspace \enspace+ \underbrace{\sum_{t=(\nu^* + 1)^2}^T \EE[R_t\ind{\forall k \in \mathcal{K}(t), T_k(\tau(t)) \geq (1 - \rho)\EE[T_k(\tau(t)) | \Mb_{[1:t]}]}]}_{(ii)} \\
  \end{align*}

  Bounding (i):
  \begin{align*}
    (i) &\leq \EE[\sum_{t= (\nu^* + 1)^2}^T 2 Mp \sum_{k=1}^{\mathcal{K}(t)}  P(T_k(\tau(t)) < (1 - \rho)\EE[T_k(\tau(t)) |  \Mb_{[1:t]}] | \Mb_{[1:t]})] \\
         &\leq \EE[\sum_{t=(\nu^* + 1)^2}^T 2 Mp \sum_{k=1}^{\mathcal{K}(t)}  \EE[\exp(-\frac{\rho^2}{2}\EE[T_k(\tau(t)) |  \Mb_{[1:t]}]) | \Mb_{[1:t]}]] \\
         &\leq \EE[\sum_{t=(\nu^* + 1)^2}^T 2KMp  \exp(-\frac{\rho^2}{2}p \tau(t)) ] \\
         &\leq \sum_{t=1}^{+\infty} 2KMp  \exp(-\frac{\rho^2}{2}p\frac{t}{\nu^* + 1}) \tag{By  \Cref{lemma:tau}}\\
         &\leq 2KMp \frac{2(\nu^* + 1)}{\rho^2 p} \\
         &= 2KM \frac{2(\nu^* + 1)}{\rho^2}
\end{align*}

Bounding (ii)
\begin{align*}
  (ii) &= \sum_{t=1}^T \EE[R_t\ind{\forall k \in \mathcal{K}(t), T_k(\tau(t)) \geq (1 - \rho)\EE[T_k(\tau(t)) | \Mb_{[1:t]}]}] \\
  &\leq \sum_{t=1}^T \EE[R_t\ind{\forall k \in \mathcal{K}(t), T_k(\tau(t)) \geq (1 - \rho)p \tau(t)}] \\
\end{align*}
Setting $\rho= \frac23$ gives

\[
\EE[R_G] \leq \EE[R_q] + 2(\nu^* + 1)^2 Mp + 9KM (\nu^* + 1) \leq \EE[R_q] + 11 KM (\nu^* + 1)
\]

\end{proof}


We can now focus on upper-bounding the different sources of errors. First, $\EE[R_q]$ can trivially be written as:
\[
\EE[R_q] = \EE[R_\nu] + \EE[R_{\mathcal{E}}] + \EE[R_\Mb]
\]
\begin{align*}
\text{where } &R_\nu = \sum_{t=1}^T \langle \mub,  g(\Mb^*) - g(\Mb^*_{\nu}) \rangle,\\
&R_{\mathcal{E}} = \sum_{t=1}^T \langle \mu,  g(\Mb^*_{\nu}) - g(\Mb^*_{\mathcal{E}(t)}) \rangle\\
\text{and }&R_{\Mb} = \sum_{t=1}^T \langle \mu, g(\Mb^*_{\mathcal{E}(t)}) - g(\Mb(t)) \rangle.
\end{align*}
$\Mb^*_\nu = \Mb^{\mub}_{\Mcal_{\nu}}$ is the optimal assignment of players when at most $\nu$ arms can be assigned zero players and $\Mb^*_{\mathcal{E}(t)}= \Mb^{\mub}_{\Mcal_{\mathcal{E}(t)}}$ is the optimal assignment of players when only arms not in $\mathcal{E}(t)$ can be assigned zero players.
Here and in the rest of the analysis, we have dropped the factors $\ind{\GOOD} \ind{q(t) \geq \frac13 p \tau(t)}$ to simplify the notations.

These three terms measure a different aspect of the regret:  $R_\nu$ measures the error due to $\nu$ the number of arms under pressure being different from $\nu^*$ the optimal number of players to eliminate,  $R_{\mathcal{E}}$ measures the error due to $\mathcal{E}(t)$ being different from $\support(\Mb^*_\nu)$ the optimal set of arms that must be assigned at least one player when up to $\nu$ players can be assigned zero players and $R_{\Mb}$ measures the error due to $\Mb(t)$ being different from $\Mb^*_{\mathcal{E}(t)}$ the optimal assignment of players  among possible assignments in $\mathcal{M}_{\mathcal{E}(t)}$.

Let us start with the first term $R_\nu$. As the number of samples seen increases, $\nu$ increases to get closer to $\nu^*$. The following Lemma provides a maximum on the number of samples seen before the algorithm detects that $\nu$ should increase.

\begin{lemma}[Number of iterations before $\nu$ increases]
  \label{lemma:greedy:nu}
  If each active arm has been played without collision at least $q$ times with $q \geq q_{k} = \frac{8 M^2p^2 \log(2K^2 T^2)}{(\Delta^{(k)})^2}$, then $\nu \geq k$.
\end{lemma}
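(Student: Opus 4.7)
The plan is to show that, conditional on $\GOOD$ (\Cref{lemma:concentration:mu}) and whenever every active arm has been played without collision at least $q \geq q_k$ times, any round-robin update entered with current value $\nu < k$ triggers the while-loop condition on Line~\ref{algo:greedy:while}, thereby incrementing $\nu$. Iterating this across the inner while loop immediately yields $\nu \geq k$. Under $\GOOD$ we have $\hat{\mub} - \zetab \leq \mub \leq \hat{\mub} + \zetab$ componentwise, hence $\hat{\mub}^L \geq \mub - 2\zetab$, $\hat{\mub}^H \leq \mub + 2\zetab$, and $\zeta_i \leq \zeta_q := \sqrt{\log(2T^2K^2)/(2q)}$ for every $i \in \Kcal$.

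Fix $\nu < k$; we may assume $k \leq \nu^*$ since otherwise $\Delta^{(k)} = 0$ and $q_k = +\infty$ makes the hypothesis vacuous. Substituting the feasible point $\Mb^*_k$ into the LHS maximization and using the optimality of $\Mb^*_\nu$ in $\Mcal_\nu$ to control the RHS gives
\begin{align*}
\langle \hat{\mub}^L, g(\Mb^{\hat{\mub}^L}_{\Mcal}) \rangle - \langle \hat{\mub}^H, g(\Mb^{\hat{\mub}^H}_{\Mcal_\nu}) \rangle \;\geq\; \langle \mub, g(\Mb^*_k) - g(\Mb^*_\nu) \rangle - 2 \langle \zetab, g(\Mb^*_k) + g(\Mb^{\hat{\mub}^H}_{\Mcal_\nu}) \rangle .
\end{align*}
Because $\nu \leq k-1$ implies $\Mcal_\nu \subseteq \Mcal_{k-1}$, the first term is at least $\langle \mub, g(\Mb^*_k) - g(\Mb^*_{k-1}) \rangle = \Delta^{(k)}$.

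For the error term, the universal bound $\langle \zetab, g(\Mb) \rangle \leq \zeta_q Mp$ holds for any $\Mb \in \Mcal$ whose support lies in $\Kcal$, via $g(x) \leq xp$ and $\sum_i M_i = M$. To apply it, I would argue that both $\support(\Mb^*_k)$ and $\support(\Mb^{\hat{\mub}^H}_{\Mcal_\nu})$ lie in $\Kcal$. Every arm removed from $\Kcal$ was removed at a step where the algorithm's $\nu$ value was strictly less than the current $k$ (because $\nu$ is non-decreasing and still $< k$), so under $\GOOD$ the removal test certifies it to be among the bottom $k-1$ arms. Since $\Mb^*_k$ places players exactly on the top $K-k$ arms, it avoids all removed arms. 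A parallel comparison of the frozen $\hat{\mu}^H_i$ of any removed arm against the current optimistic estimate of the best retained arm excluded by the maximizer rules removed arms out of $\Mb^{\hat{\mub}^H}_{\Mcal_\nu}$ too. Combining, the difference is at least $\Delta^{(k)} - 4 Mp\, \zeta_q$, which is non-negative (and strictly positive up to the constant slack built into $q_k = 8M^2p^2\log(2K^2T^2)/(\Delta^{(k)})^2$) for $q \geq q_k$.

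The main obstacle is the parallel support claim for $\Mb^{\hat{\mub}^H}_{\Mcal_\nu}$: whereas $\Mb^*_k$ is defined from the true means and its support is immediately disjoint from the bottom $k$ arms, $\Mb^{\hat{\mub}^H}_{\Mcal_\nu}$ is defined via the optimistic estimates, whose values for removed arms are frozen at removal time while those of retained arms keep evolving. Carefully tracking this comparison under $\GOOD$ is what secures the uniform $\zeta_q$ bound on all contributing coordinates and closes the argument.
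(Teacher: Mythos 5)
Your overall strategy is the same as the paper's: show that under $\GOOD$, whenever every active arm has $T_k \geq q \geq q_k$ and the current value satisfies $\nu < k$, the while-loop test on Line~\ref{algo:greedy:while} must fire, so the loop drives $\nu$ up to $k$. The chain of inequalities (lower-bound the LHS by a feasible witness, upper-bound the RHS by optimality of $\Mb^*_\nu$ plus a $2\zetab$ correction, then absorb the confidence widths into $4Mp\,\zeta_q$) is exactly the paper's argument.

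There is, however, one concrete misstep. You plug the witness $\Mb^*_k$ into the LHS and assert that $\langle \mub, g(\Mb^*_k) - g(\Mb^*_{k-1})\rangle = \Delta^{(k)}$. With the paper's (implicit, but confirmed by its use in \Cref{lemma:greedy:bound:nu}) definition $\Delta^{(k)} = \langle \mub, g(\Mb^*) - g(\Mb^*_{k-1})\rangle$, this identity holds only when $k \geq \nu^*$; for $k < \nu^*$ — precisely the regime the lemma is needed for — one has $\langle \mub, g(\Mb^*_k)\rangle \leq \langle\mub, g(\Mb^*)\rangle$, so your lower bound is only the single increment from level $k-1$ to level $k$, which can be arbitrarily smaller than $\Delta^{(k)}$ (and is zero if $\Mb^*_k = \Mb^*_{k-1}$). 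Since $q_k$ is calibrated to $\Delta^{(k)}$, the argument as written does not close. The fix is one word: use the unconstrained optimum $\Mb^* \in \Mcal$ as the witness, which yields $\langle\mub, g(\Mb^*) - g(\Mb^*_\nu)\rangle \geq \langle\mub, g(\Mb^*) - g(\Mb^*_{k-1})\rangle = \Delta^{(k)}$ directly from $\Mcal_\nu \subseteq \Mcal_{k-1}$. On the point you flag as the "main obstacle" — that the supports of the two assignments must lie in $\Kcal$ for the uniform $\zeta_q$ bound to apply — you are right that this is needed, and the paper silently skips it; for $\Mb^*$ it is exactly \Cref{lemma:no good arms eliminated}, and for the optimistic maximizer your sketch (removed arms are certified below at least $K-\nu$ retained arms, so the maximizer over $\Mcal_\nu$ never prefers them) is the right idea, though you only outline it rather than carry it out.
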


\begin{proof}[Proof of \Cref{lemma:greedy:nu}]
\label{app:greedy:nu}
Call $\Mb^{*, H}_{\nu} = \argmax_{ \Mb \in \mathcal{M}_{\nu}} \langle \mub^H, g(\Mb) \rangle$.
 \begin{align*}
   &\max_{ \Mb \in \mathcal{M}} \langle \mub^L, g(\Mb) \rangle >  \langle \mub^H, g(\Mb^{*, H}_{\nu}) \rangle \\
   & \impliedby \langle \mub^L, g(\Mb^*) \rangle  > \langle \mub + 2\zetab, g(\Mb^{*, H}_{\nu}) \rangle  \tag{By the GOOD event}\\
&\impliedby \langle \mub - 2\zetab, g(\Mb^*) \rangle  > \langle \mub, g(\Mb^*_{\nu}) \rangle + 2\langle \zetab, g(\Mb^{*, H}_{\nu}) \rangle \tag{By the GOOD event and optimality of $\Mb^*_{\nu}$} \\
  &\impliedby \langle \mub , g(\Mb^*) - g(\Mb_{\nu}^*) \rangle  > 2\langle \zetab, g(\Mb^*) + g(\Mb^{*, H}_{\nu}) \rangle \\
  & \impliedby  \langle \mub, g(\Mb^*) - g(\Mb_{\nu}^*) \rangle > 4 Mp \max_{k \in \mathcal{K}} \zeta_k  \tag{By \Cref{eq:rbound}}\\
  & \impliedby  \langle \mub, g(\Mb^*) - g(\Mb_{\nu}^*) \rangle > 4 Mp \max_{k \in \mathcal{K}} \sqrt{\frac{\log(2 T^2 K^2)}{2 T_k(t)}} \\
  & \impliedby  \frac{8 M^2 p^2 \log(2K^2 T^2)}{(\langle \mub, g(\Mb^*) - g(\Mb_{\nu}^*) \rangle)^2} <  \min_{k \in \mathcal{K}} T_k \\
  & \impliedby  \frac{8 M^2 p^2 \log(2K^2 T^2)}{(\langle \mub, g(\Mb^*) - g(\Mb_{\nu}^*) \rangle)^2} <  q
\end{align*}
\end{proof}


Note that as long as $\nu \leq \nu^*$, $R_\nu$ increases by $\langle \mub,  g(\Mb^*) - g(\Mb^*_{\nu}) \rangle$. \Cref{lemma:greedy:nu} then allows to bound $R_\nu$: \begin{lemma}[Bound on $R_\nu$]
  \label{lemma:greedy:bound:nu}
  \begin{align*}
    &\EE[R_\nu] \leq 72 M\min(Mp, K) (\nu^* + 1)\frac{\log(2K^2 T^2)}{\Delta^{(\nu^*)}}
  \end{align*}
\end{lemma}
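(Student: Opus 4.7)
The plan is to decompose $R_\nu$ along the non-decreasing trajectory $\nu(t)\in\{0,\dots,\nu^*\}$ granted by \Cref{lemma:mu:ub}. Denoting by $t^*_j$ the first round with $\nu(t)\geq j$ (with $t^*_0=0$), and noting that the per-step summand $\langle \mub,g(\Mb^*)-g(\Mb^*_{\nu(t)}) \rangle$ equals $\Delta^{(\nu(t)+1)}$ (and vanishes for $\nu(t)=\nu^*$), the monotonicity of $\nu$ gives
\[
R_\nu \;=\; \sum_{k=0}^{\nu^*-1} (t^*_{k+1}-t^*_k)\,\Delta^{(k+1)}.
\]

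I would then bound $t^*_j$ by combining the indicator $\ind{q(t)\geq \tfrac{1}{3} p\tau(t)}$ inherited from \Cref{lemma:q} with \Cref{lemma:tau}'s lower bound $\tau(t)\geq t/(\nu^*+1)-\nu^*$, and applying \Cref{lemma:greedy:nu} with $q_j=8M^2p^2\log(2K^2T^2)/(\Delta^{(j)})^2$. Solving $q(t)\geq q_j$ against the preceding estimate yields
\[
t^*_j \;\leq\; (\nu^*+1)\nu^* \;+\; \frac{24(\nu^*+1)M^2p\log(2K^2T^2)}{(\Delta^{(j)})^2}.
\]

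The main step is then a summation by parts. Using that $\Delta^{(j)}\geq\Delta^{(j+1)}$ (a consequence of $\Mcal_{j-1}\subset\Mcal_j$), Abel rewrites the sum as
\[
R_\nu \;=\; t^*_{\nu^*}\,\Delta^{(\nu^*)} \;+\; \sum_{j=1}^{\nu^*-1} t^*_j\,\bigl(\Delta^{(j)}-\Delta^{(j+1)}\bigr).
\]
Substituting the bound on $t^*_j$, the dominant $1/(\Delta^{(j)})^2$ piece is handled via the elementary inequality
\[
\frac{\Delta^{(j)}-\Delta^{(j+1)}}{(\Delta^{(j)})^2} \;\leq\; \frac{1}{\Delta^{(j+1)}}-\frac{1}{\Delta^{(j)}},
\]
so that the weighted sum telescopes to at most $1/\Delta^{(\nu^*)}$. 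This is precisely where the $1/\Delta^{(\nu^*)}$ dependency advertised in the statement comes from, rather than the $1/(\Delta^{(\nu^*)})^2$ one would obtain from the cruder bound $R_\nu\leq 2Mp\cdot t^*_{\nu^*}$.

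Finally, invoking the standing assumption $M_k\leq -1/\log(1-p)\leq 1/p$ (which forces $Mp\leq K$ and hence $M^2p = M\min(Mp,K)$), the dominant contribution is bounded by a constant multiple of $(\nu^*+1)M\min(Mp,K)\log(2K^2T^2)/\Delta^{(\nu^*)}$, while the residual ``warm-up'' term $(\nu^*+1)\nu^*\bigl(\Delta^{(\nu^*)}+\Delta^{(1)}\bigr)\leq 4\nu^*(\nu^*+1)Mp$, coming from the additive $(\nu^*+1)\nu^*$ part of $t^*_j$ and bounded via \eqref{eq:rbound}, is absorbed into the main term. The argument is conceptually short; the real obstacle is purely constant bookkeeping through the Abel/telescoping step, with the stated factor $72$ reflecting the absorption of this warm-up.
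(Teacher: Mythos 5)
Your proof follows essentially the same route as the paper's: decompose $R_\nu$ over the epochs of constant $\nu$, apply Abel summation, bound the epoch boundaries via \Cref{lemma:tau}, \Cref{lemma:q} and \Cref{lemma:greedy:nu}, and telescope $\sum_j (\Delta^{(j)}-\Delta^{(j+1)})/(\Delta^{(j)})^2$ down to $\Ocal(1/\Delta^{(\nu^*)})$ --- your telescoping inequality is in fact slightly tighter than the paper's factor-$2$ version. The only divergence is the additive warm-up term $(\nu^*+1)\nu^*$ in your bound on $t^*_j$, coming from the crude form of \Cref{lemma:tau}; the paper sidesteps it by observing that $\nu$ only changes at round-robin boundaries, where the sharper bound $\tau(t_\nu)\geq t_\nu/(\nu^*+1)$ applies, so nothing needs to be absorbed and the constant $72$ falls out directly as $24\times 3$, whereas your absorption of the warm-up into the stated constant would need an extra side condition of the form $\nu^*\lesssim \log(2K^2T^2)$.
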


\begin{proof}[Proof of \Cref{lemma:greedy:bound:nu}]
\label{app:greedy:bound:nu}
  Call $t_v$ the last time that $\nu(t) = \nu$ and set $t_{\nu^*} = T + 1$ and $t_{-1}$ = 0. Note that $t_\nu + 1$ necessarily verifies $t = 0 \mod |\mathcal{U}|$ so that $\tau(t_\nu) \geq \frac{t_\nu}{\nu^* + 1}$ according to \Cref{lemma:tau}.
  \begin{align*}
    \EE[R_\nu] &= \EE[\sum_{t=1}^T \langle \mub,  g(\Mb^*) - g(\Mb^*_{\nu(t)}) \rangleƒ] \\
          &= \EE[\sum_{\nu=0}^{\nu^*} \sum_{t = t_{\nu - 1} + 1}^{t_{\nu}} \underbrace{\langle \mub,  g(\Mb^*) - g(\Mb^*_{\nu}) \rangle}_{A_\nu}] \\
          &=  \EE[\sum_{\nu=0}^{\nu^*} (t_{\nu} - t_{\nu - 1}) A_\nu] \\
          &= \EE[\sum_{\nu=0}^{\nu^*} t_{\nu} A_\nu - \sum_{\nu=0}^{\nu^*} t_{\nu - 1} A _\nu] \\
          &= \EE[\sum_{\nu=1}^{\nu^*} t_{\nu - 1} (A_{\nu - 1} - A _\nu) - t_{-1} A _0 + t_{\nu^*} A _{\nu^*}] \\
          &= \EE[\sum_{\nu=1}^{\nu^*} t_{\nu - 1} (A_{\nu - 1} - A _\nu) ]\\
          &\leq \EE[\sum_{\nu=1}^{\nu^*} (\nu^* + 1) \tau(t_{\nu-1}) (A_{\nu - 1} - A _\nu)] \tag{By \Cref{lemma:tau}} \\
          &\leq \EE[\sum_{\nu=1}^{\nu^*} \frac{3(\nu^* + 1)}{p} q_{\nu - 1} (A_{\nu - 1} - A _\nu)] \tag{By \Cref{lemma:q}} \\
          &=  \frac{3(\nu^* + 1)}{p} \sum_{\nu=1}^{\nu^*} \frac{8(Mp)^2 \log(2K^2 T^2) \langle \mub, g(\Mb_{\nu}^*) - g(\Mb_{\nu - 1}^*) \rangle}{ \big(\langle \mub, g(\Mb^*) - g(\Mb_{\nu - 1}^*) \rangle \big)^2} \tag{By \Cref{lemma:greedy:nu}}\\
          &=  24 M^2 p \log(2K^2 T^2)(\nu^* + 1) \sum_{\nu=1}^{\nu^*}  \underbrace{\bigg( \frac1{\langle \mub, g(\Mb^*) - g(\Mb_{\nu - 1}^*) \rangle} - \frac{\langle \mub, g(\Mb^*) - g(\Mb_{\nu}^*) \rangle}{ \big(\langle \mub, g(\Mb^*) - g(\Mb_{\nu - 1}^*) \rangle \big)^2} \bigg)}_{\defeq l_\nu} \\
          &=  24 M^2 p \log(2K^2 T^2)(\nu^* + 1) \bigg[ \frac1{\langle \mub, g(\Mb^*) - g(\Mb_{\nu - 1}^*) \rangle} \\ 
          &+ \sum_{\nu=1}^{\nu^*-1}  \underbrace{\bigg( \frac1{\langle \mub, g(\Mb^*) - g(\Mb_{\nu - 1}^*) \rangle} - \frac{\langle \mub, g(\Mb^*) - g(\Mb_{\nu}^*) \rangle}{ \big(\langle \mub, g(\Mb^*) - g(\Mb_{\nu - 1}^*) \rangle \big)^2} \bigg)}_{\defeq l_\nu} \bigg]
  \end{align*}
  where $q_\nu$ is given in \ref{lemma:greedy:nu} and we used $A _{\nu^*} = 0$. From there, we have the following inequalities
  \begin{align*}
     \sum_{\nu=1}^{\nu^*-1} l_\nu & \leq \sum_{\nu=1}^{\nu^*-1}  \bigg( \frac1{\langle \mub, g(\Mb^*) - g(\Mb_{\nu}^*) \rangle} - \frac{\langle \mub, g(\Mb^*) - g(\Mb_{\nu}^*) \rangle}{ \big(\langle \mub, g(\Mb^*) - g(\Mb_{\nu - 1}^*) \rangle \big)^2} \bigg) \\
      & = \sum_{\nu=1}^{\nu^*-1}  \bigg( \frac1{\langle \mub, g(\Mb^*) - g(\Mb_{\nu}^*) \rangle} - \frac{1}{\langle \mub, g(\Mb^*) - g(\Mb_{\nu - 1}^*) \rangle} \bigg)\big( 1+\frac{\langle \mub, g(\Mb^*) - g(\Mb_{\nu}^*) \rangle}{\langle \mub, g(\Mb^*) - g(\Mb_{\nu-1}^*) \rangle}  \big)\\
      & \leq 2\sum_{\nu=1}^{\nu^*-1}  \bigg( \frac1{\langle \mub, g(\Mb^*) - g(\Mb_{\nu}^*) \rangle} - \frac{1}{\langle \mub, g(\Mb^*) - g(\Mb_{\nu - 1}^*) \rangle} \bigg) \\
      & \leq \frac2{\langle \mub, g(\Mb^*) - g(\Mb_{\nu^*-1}^*) \rangle} \end{align*}
  
  We conclude the proof by using $Mp \leq \min(Mp, K)$.
\end{proof}


We now focus on the second term $R_{\mathcal{E}}$. For a given $\nu$, two things may prevent a sub-optimal choice of arms $\mathcal{E}$ on which at least one player must be assigned. Either an arm in $\mathcal{E}$ is eliminated or an arm in $[K] \setminus \Ecal$ is accepted. \Cref{lemma:greedy:arm elimination} shows a condition under which a sub-optimal arm~$i$ is eliminated:
\begin{lemma}[Number of samples seen before a sub-optimal arm is eliminated]
  \label{lemma:greedy:arm elimination}
  Fix $\nu$, let $\mathcal{E}^*_\nu =\support(\Mb^*_\nu)$ and let $i \notin \mathcal{E}^*_\nu$ be a sub-optimal arm. When each arm has been played without collision at least $q$ times with
\[
q \geq q_{E, i} = \frac{8\log(2 T^2 K^2)}{(\mu_{(\nu+1)} - \mu_i)^2}
\]
  then arm $i$ has necessarily been eliminated.
\end{lemma}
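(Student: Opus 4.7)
The plan is to prove that under the $\GOOD$ event, as soon as each arm has been observed without collision at least $q \geq q_{E,i}$ times, the elimination test $\hat{\mu}_i^H(t) < \hat{\mu}^L_{(\nu+1)}(t)$ is satisfied, so arm $i$ is removed from $\mathcal{K}$ at the next end-of-round-robin update (and, since eliminated arms are never re-added, it stays eliminated afterwards).

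First I would bound the two quantities appearing in the elimination test separately. Since $i \notin \support(\Mb^*_\nu)$ and $|\support(\Mb^*_\nu)| \geq K - \nu$, the arm $i$ must lie among the $\nu$ worst arms, so $\mu_i \leq \mu_{(\nu+1)}$, with strict inequality whenever $q_{E,i}$ is finite (the only interesting case). Under the $\GOOD$ event combined with the hypothesis $T_k(t) \geq q$ for every $k \in [K]$, one has $\zeta_k(t) \leq \sqrt{\log(2T^2K^2)/(2q)}$, and applied to $k = i$ this gives
\[
\hat{\mu}_i^H(t) \leq \mu_i + 2\sqrt{\log(2T^2K^2)/(2q)}.
\]

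The main step is the lower bound on the order statistic $\hat{\mu}^L_{(\nu+1)}(t)$. I would use that there are at least $K - \nu$ arms $k$ with $\mu_k \geq \mu_{(\nu+1)}$; for each such $k$, the $\GOOD$ event yields $\hat{\mu}_k^L(t) \geq \mu_k - 2\zeta_k(t) \geq \mu_{(\nu+1)} - 2\sqrt{\log(2T^2K^2)/(2q)}$. Having at least $K - \nu$ coordinates of $\hat{\mub}^L(t)$ above this threshold forces the $(\nu+1)$-th smallest coordinate to satisfy the same lower bound, i.e., $\hat{\mu}^L_{(\nu+1)}(t) \geq \mu_{(\nu+1)} - 2\sqrt{\log(2T^2K^2)/(2q)}$.

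Combining the two bounds gives $\hat{\mu}^L_{(\nu+1)}(t) - \hat{\mu}_i^H(t) \geq (\mu_{(\nu+1)} - \mu_i) - 4\sqrt{\log(2T^2K^2)/(2q)}$, which is non-negative precisely when $q \geq q_{E,i} = 8\log(2T^2K^2)/(\mu_{(\nu+1)} - \mu_i)^2$ and strictly positive for $q$ marginally above this threshold (any slack being absorbed into the constant $8$). The only non-mechanical step is the order-statistic argument on $\hat{\mub}^L(t)$; everything else is a direct application of the $\GOOD$ event and the definition of $\zeta_k$.
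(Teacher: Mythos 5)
Your proposal is correct and follows essentially the same route as the paper's proof: bound $\hat{\mu}_i^H$ above by $\mu_i + 2\zeta_i$ and $\hat{\mu}^L_{(\nu+1)}$ below by $\mu_{(\nu+1)} - 2\zeta$ under the $\GOOD$ event, then solve for $q$. In fact you are slightly more careful than the paper on the one delicate point — justifying the lower bound on the order statistic $\hat{\mu}^L_{(\nu+1)}$ via the $K-\nu$ arms with true mean at least $\mu_{(\nu+1)}$ — which the paper's chain of $\impliedby$ implications glosses over.
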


\begin{proof}[Proof of \Cref{lemma:greedy:arm elimination}]
\label{app:greedy:arm elimination}
  Since $i \notin \mathcal{E}^*_\nu$, $\mu_i < \mu_{(\nu + 1)}$, the algorithm notices that $i$ must be eliminated if
  \begin{align*}
    &\mu_i^H < \mu_{(\nu + 1)}^L \\
    &\impliedby \mu_i + 2 \zeta_i < \mu_{(\nu + 1)} - 2 \zeta_{(\nu+1)} \\
    &\impliedby \zeta_i + \zeta_{(\nu+1)}  < \frac{\mu_{(\nu + 1)} - \mu_i}{2}  \\
    &\impliedby 2\sqrt{\frac{\log(2 T^2 K^2)}{2 q}}  < \frac{\mu_{(\nu + 1)} - \mu_i}{2}  \\
    &\impliedby q  > \frac{8\log(2 T^2 K^2)}{(\mu_{(\nu+1)} - \mu_i)^2}
  \end{align*}
\end{proof}

\Cref{lemma:greedy:arm acceptation} shows a condition under which an optimal arm~$j$ is accepted:
\begin{lemma}[Number of samples seen before an optimal arm is accepted]
  \label{lemma:greedy:arm acceptation}
  Fix $\nu$, let $\mathcal{E}^*_\nu = \support(\Mb^*_E)$ and let $j \in \mathcal{E}^*_\nu$ an optimal arm. When each arm has been played without collision at least $q$ times with
\[
q \geq q_{A, i} = \frac{8\log(2 T^2 K^2)}{(\mu_j - \mu_{(\nu)})^2}
\]
  then arm $j$ has necessarily been accepted.
\end{lemma}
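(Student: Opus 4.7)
The plan is to mirror the argument of \Cref{lemma:greedy:arm elimination}. By the update of $\mathcal{A}$ on Line~13 of \Cref{algo:greedy}, arm $j$ is accepted as soon as $\hat{\mu}_{(\nu)}^H < \hat{\mu}_j^L$, so it suffices to exhibit a deterministic condition on the confidence widths $\zeta_k$ that forces this strict inequality under the GOOD event, and then translate it into a lower bound on $q$.

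First, I will lower bound $\hat{\mu}_j^L$. Since $j \in \Ecal^*_\nu = \support(\Mb^*_\nu)$, we have $\mu_j \geq \mu_{(\nu+1)} > \mu_{(\nu)}$. Under the GOOD event, $\hat{\mu}_j \geq \mu_j - \zeta_j$, which gives
\[
\hat{\mu}_j^L \geq \hat{\mu}_j - \zeta_j \geq \mu_j - 2\zeta_j.
\]

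Next, I will upper bound $\hat{\mu}_{(\nu)}^H$ via an order-statistic argument. There are at least $\nu$ arms $k$ (the $\nu$ worst ones) with $\mu_k \leq \mu_{(\nu)}$, each of which satisfies $\hat{\mu}_k^H \leq \mu_k + 2\zeta_k \leq \mu_{(\nu)} + 2\max_i \zeta_i$ under GOOD. Because $\hat{\mu}_{(\nu)}^H$ is the $\nu$-th smallest entry of $\hat{\mub}^H$, it is dominated by the maximum of $\hat{\mu}_k^H$ over any chosen $\nu$-subset of indices; applying this to the $\nu$ worst arms yields $\hat{\mu}_{(\nu)}^H \leq \mu_{(\nu)} + 2\max_i \zeta_i$.

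Combining the two bounds, the acceptance condition $\hat{\mu}_{(\nu)}^H < \hat{\mu}_j^L$ is implied by $4\max_i \zeta_i < \mu_j - \mu_{(\nu)}$. Since the assumption that each active arm has been sampled at least $q$ times ensures $\zeta_i \leq \sqrt{\log(2T^2K^2)/(2q)}$ for all active $i$, the threshold $q > 8\log(2T^2K^2)/(\mu_j - \mu_{(\nu)})^2$ is sufficient, which is exactly $q_{A,i}$ as claimed. I do not expect any real obstacle; the one point worth stating carefully is the order-statistic bound on $\hat{\mu}_{(\nu)}^H$, which is what makes the argument work with the $\nu$-th smallest quantity rather than a fixed arm index.
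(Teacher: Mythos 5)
Your proposal is correct and follows essentially the same route as the paper's proof: bound $\hat{\mu}_j^L \geq \mu_j - 2\zeta_j$ and $\hat{\mu}_{(\nu)}^H \leq \mu_{(\nu)} + 2\max_i\zeta_i$ under the GOOD event, then convert $4\max_i\zeta_i < \mu_j - \mu_{(\nu)}$ into the stated threshold on $q$. Your explicit order-statistic argument for the upper bound on $\hat{\mu}_{(\nu)}^H$ (at least $\nu$ arms have mean at most $\mu_{(\nu)}$, hence the $\nu$-th smallest upper confidence bound is dominated by $\mu_{(\nu)} + 2\max_i\zeta_i$) is in fact a more careful justification of the step the paper writes tersely as $\hat{\mu}_{(\nu)}^H \leq \mu_{(\nu)} + 2\zeta_{(\nu)}$.
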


\begin{proof}[Proof of \Cref{lemma:greedy:arm acceptation}]
\label{app:greedy:arm acceptation}
  Since $j \in \mathcal{E}^*_\nu$, $\mu_j > \mu_{(\nu)}$, the algorithm notices that $j$ must be accepted if
  \begin{align}
    &\mu_{(\nu)}^H < \mu_{j}^L \\
    &\impliedby \mu_{(\nu)} + 2 \zeta_{(\nu)} < \mu_j - 2 \zeta_j \\
    &\impliedby \zeta_{(\nu)} + \zeta_j  < \frac{\mu_j - \mu_{(\nu)}}{2}  \\
    &\impliedby 2\sqrt{\frac{\log(2 T^2 K^2)}{2 q}}  < \frac{\mu_j - \mu_{(\nu)}}{2}  \\
    &\impliedby q  > \frac{8\log(2 T^2 K^2)}{(\mu_j - \mu_{(\nu)})^2}
  \end{align}
\end{proof}

The two previous lemmas allow to quantify when arms are accepted or rejected. The next lemma measures the cost of choosing a sub-optimal set of arms on which at least one player must be assigned.
\begin{lemma}[Cost of choosing a sub-optimal $\Ecal$]
  \label{lemma:greedy:cost:e}
  Let $\mathcal{E}$ a set of arms of size $K - \nu$ such that $\mathcal{E} \neq \mathcal{E}^*_\nu = \support(\Mb^*_\nu)$. Then, we have:
  \begin{align*}
    \langle \mub, g(\Mb^*_{\nu}) - &g(\Mb^*_{\mathcal{E}}) \rangle \leq \\& p( \sum_{i \in \mathcal{E} \setminus \mathcal{E}_{\nu}^*} \mu_{(\nu+1)} -  \mu_{i} +   \sum_{j \in \mathcal{E}_{\nu}^* \setminus \mathcal{E}} \mu_j - \mu_{(\nu)})
  \end{align*}
\end{lemma}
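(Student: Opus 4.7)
The plan is to exploit the optimality of $\Mb^*_{\mathcal{E}}$ over $\mathcal{M}_{\mathcal{E}}$: any feasible $\Mb' \in \mathcal{M}_{\mathcal{E}}$ satisfies $\langle \mub, g(\Mb^*_\nu) - g(\Mb^*_{\mathcal{E}}) \rangle \leq \langle \mub, g(\Mb^*_\nu) - g(\Mb') \rangle$, so it is enough to build a convenient competitor $\Mb'$ and to control the right-hand side. A standard swap argument on the maximizer over $\mathcal{M}_\nu$ shows that $\mathcal{E}^*_\nu$ is the set of $K-\nu$ arms with largest means (in particular $|\mathcal{E}^*_\nu| = K-\nu = |\mathcal{E}|$). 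Hence the two symmetric difference sets $\mathcal{E}^*_\nu \setminus \mathcal{E}$ and $\mathcal{E} \setminus \mathcal{E}^*_\nu$ share a common cardinality $n$, and I would fix an arbitrary bijection $\pi$ between them.

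Writing $M^*_{\nu,k}$ for the $k$-th coordinate of $\Mb^*_\nu$, define $\Mb'$ as the perturbation of $\Mb^*_\nu$ that moves one player from each $j \in \mathcal{E}^*_\nu \setminus \mathcal{E}$ to $\pi(j) \in \mathcal{E} \setminus \mathcal{E}^*_\nu$: set $M'_j = M^*_{\nu,j} - 1$ on $\mathcal{E}^*_\nu \setminus \mathcal{E}$, $M'_i = 1$ on $\mathcal{E} \setminus \mathcal{E}^*_\nu$, and $M'_k = M^*_{\nu,k}$ elsewhere. Since $M^*_{\nu,j} \geq 1$ on $\mathcal{E}^*_\nu$, the budget is preserved; the capacity constraint $M_k \leq -1/\log(1-p)$ is respected (each coordinate is either unchanged, decreased by one, or set to $1$); and $M'_k \geq 1$ on $\mathcal{E}$, so $\Mb' \in \mathcal{M}_{\mathcal{E}}$.

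A direct computation then gives
\[
\langle \mub, g(\Mb^*_\nu) - g(\Mb') \rangle = \sum_{j \in \mathcal{E}^*_\nu \setminus \mathcal{E}} \mu_j \big[g(M^*_{\nu,j}) - g(M^*_{\nu,j}-1)\big] - p\sum_{i \in \mathcal{E} \setminus \mathcal{E}^*_\nu} \mu_i.
\]
The core algebraic ingredient is the identity $g(x) - g(x-1) = p(1-p)^{x-2}(1-xp) \leq p$, valid for every integer $x \geq 1$ with $xp \leq 1$; the capacity constraint precisely ensures $M^*_{\nu,j}\, p \leq 1$. Plugging this inequality into every term of the first sum yields the upper bound $p\sum_{j \in \mathcal{E}^*_\nu \setminus \mathcal{E}} \mu_j - p\sum_{i \in \mathcal{E} \setminus \mathcal{E}^*_\nu} \mu_i$ on $\langle \mub, g(\Mb^*_\nu) - g(\Mb^*_{\mathcal{E}}) \rangle$.

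Finally, I would match this bound with the form in the lemma: since $|\mathcal{E}^*_\nu \setminus \mathcal{E}| = |\mathcal{E} \setminus \mathcal{E}^*_\nu| = n$, the target expression $p\sum_i(\mu_{(\nu+1)} - \mu_i) + p\sum_j(\mu_j - \mu_{(\nu)})$ exceeds the above upper bound by exactly $pn(\mu_{(\nu+1)}-\mu_{(\nu)}) \geq 0$, so the claim follows. The main technical hurdles are the feasibility check on $\Mb'$ and the elementary inequality $g(x)-g(x-1) \leq p$; both are short but genuinely rely on the capacity constraint $M_k \leq -1/\log(1-p)$.
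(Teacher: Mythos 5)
Your proposal is correct and follows essentially the same route as the paper's proof: both construct a competitor in $\mathcal{M}_{\mathcal{E}}$ by moving one player from each arm of $\mathcal{E}^*_\nu \setminus \mathcal{E}$ onto a paired arm of $\mathcal{E} \setminus \mathcal{E}^*_\nu$, invoke optimality of $\Mb^*_{\mathcal{E}}$, bound $g(x)-g(x-1)\leq p = g(1)$, and then insert the nonnegative quantity $\mu_{(\nu+1)}-\mu_{(\nu)}$ to reach the stated form. Your version is slightly more careful than the paper's on the feasibility of the competitor and on the equal cardinality of the two symmetric-difference sets, but the argument is the same.
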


\begin{proof}[Proof of \Cref{lemma:greedy:cost:e}]
\label{app:greedy:cost:e}
  Let $\mathcal{E} \neq \mathcal{E}_{\nu}^*$ and define indexes $i_1, \dots, i_n$ by
  \[ \mathcal{E} \setminus \mathcal{E}_{\nu}^* = \{ i_1, \dots, i_n \}\] and indexes $j_1, \dots, j_n$ by \[
\mathcal{E}_\nu^* \setminus \mathcal{E} = \{ j_1, \dots, j_n \}
  \]

  We now construct $\Mb_{\mathcal{E}}$. Arms that are in $\mathcal{E}$ but not in $\mathcal{E}_\nu^*$ are assigned $1$ player the corresponding players are taken from arms in $\mathcal{E}_\nu^*$ but not in $\mathcal{E}$. Formally
  \[ \forall k \in [n], \Mb_{\mathcal{E}}[i_k] = 1 \]
  and
  \[ \forall k \in [n], \Mb_{\mathcal{E}}[j_k] = \Mb^*[j_k] - 1 \]
  and other arms are untouched:
  \[ \forall k \in \mathcal{E}^*_\nu \cap \mathcal{E}, \Mb_{\mathcal{E}}[k] = \Mb^*[k] \]

  The cost is given by:
\begin{align*}
    \langle \mub, g(\Mb^*_{\nu}) - g(M^*_{\mathcal{E}}) \rangle&\leq \langle \mub, g(\Mb^*_{\nu}) -g(\Mb_{\mathcal{E}}) \rangle \\
    &= \sum_{k=1}^n (\mu_{j_k} \sbr{g(\Mb^*[j_k]) - g(\Mb^*[j_k] - 1)}  - \mu_{i_k}p) \\
    &\leq \sum_{k=1}^n (\mu_{j_k}  - \mu_{i_k})p \\
    &\leq \sum_{k=1}^n (\mu_{j_k} - \mu_{(\nu)} + \mu_{(\nu+1)} -  \mu_{i_k})p \\
    &\leq p( \sum_{i \in \mathcal{E} \setminus \mathcal{E}_{\nu}^*} \mu_{(\nu+1)} -  \mu_{i} +   \sum_{j \in \mathcal{E}_{\nu}^* \setminus \mathcal{E}} \mu_j - \mu_{(\nu)})
  \end{align*}
\end{proof}

We can now bound $R_{\mathcal{E}}$:
\begin{lemma}[Bound on $R_{\mathcal{E}}$]
  \label{lemma:greedy:bound:e}
  \begin{equation*}
\EE[R_{\mathcal{E}}] \leq \frac{120 \log(2K^2T^2)}{\Delta^{(\nu^*)}}+ \sum_{i=1}^{\nu^*}  \frac{120 \log(2 T^2 K^2)}{\Delta_{(i)}} 
  \end{equation*}
\end{lemma}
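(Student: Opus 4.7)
The plan is to apply \Cref{lemma:greedy:cost:e} to rewrite the per-step cost $\langle \mub, g(\Mb^*_{\nu(t)}) - g(\Mb^*_{\mathcal{E}(t)}) \rangle$ as a sum of misclassification gaps, then exchange the order of summation to count, for each arm separately, the number of time steps at which it contributes to the regret. The elimination (\Cref{lemma:greedy:arm elimination}) and acceptance (\Cref{lemma:greedy:arm acceptation}) lemmas, combined with the inequality $T_i(t) \geq q(t) \geq p\tau_i(t)/3$ that holds inside the $R_q$ regime, convert threshold conditions on $T_i$ into bounds on the number of misclassified time steps.

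I would first handle bad arms $i$ of rank $j \leq \nu^*$. Such an arm can belong to $\mathcal{E}(t) \setminus \mathcal{E}^*_{\nu(t)}$ only when $\nu(t) \geq j$ (otherwise $\mu_{(j)} > \mu_{(\nu(t))}$, so $i \in \mathcal{E}^*_{\nu(t)}$), and each contributing step increments $\tau_i$ while paying at most $p(\mu_{(\nu(t)+1)} - \mu_{(j)}) \leq p \Delta_{(j)}$. Using $T_i \geq p\tau_i/3$, \Cref{lemma:greedy:arm elimination} implies that arm $i$ is eliminated once $\tau_i$ exceeds $24 \log(2 T^2 K^2)/(p \Delta_{(j)}^2)$, giving a contribution of order $\log(2T^2K^2)/\Delta_{(j)}$ per bad arm; summing over $j \leq \nu^*$ produces the $\sum_j \log/\Delta_{(j)}$ term.

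Next I would handle good arms $k$ with $\mu_k > \mu_{(\nu^*)}$ that appear in $\mathcal{E}^*_{\nu(t)} \setminus \mathcal{E}(t)$. The subtlety here is that $k$ is under pressure at such steps and hence does not directly increment $\tau_k$; however, the round-robin structure of \Cref{sec:RR} controls the ratio of pressured to played rounds among non-accepted, non-eliminated arms, which can be absorbed into the constants. Combined with \Cref{lemma:greedy:arm acceptation} and the fact that $\mu_k - \mu_{(\nu^*)} \geq \Delta_{(\nu^*)}$ for good arms, the good-arm contribution during the $\nu(t) = \nu^*$ regime absorbs into the $\sum_j \log/\Delta_{(j)}$ sum.

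The remaining $120 \log(2T^2K^2)/\Delta^{(\nu^*)}$ term arises from the regime $\nu(t) < \nu^*$. By \Cref{lemma:greedy:nu}, this regime lasts only until $q(t) \geq 8 M^2 p^2 \log(2T^2K^2)/(\Delta^{(\nu^*)})^2$, equivalently up to $\tau(t) \approx 24 M^2 p \log(2T^2K^2)/(\Delta^{(\nu^*)})^2$. The hard part will be obtaining a per-step bound on $\langle \mub, g(\Mb^*_{\nu(t)}) - g(\Mb^*_{\mathcal{E}(t)})\rangle$ in this regime that scales with $\Delta^{(\nu^*)}$, so that its product with the time bound yields $1/\Delta^{(\nu^*)}$ rather than $1/(\Delta^{(\nu^*)})^2$. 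The idea is to exploit that $\Mb^*_{\mathcal{E}(t)} \in \mathcal{M}_{\mathcal{E}(t)} \subseteq \mathcal{M}_{\nu(t)}$ and to bound the per-step cost by applying \Cref{lemma:greedy:cost:e} to the ``in-between'' arms (those of ranks in $\{\nu(t)+1,\ldots,\nu^*\}$), whose relevant gaps can be related to $\Delta^{(\nu^*)}$ via \Cref{lemma:third term}.
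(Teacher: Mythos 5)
Your skeleton matches the paper's: decompose the per-step cost via \Cref{lemma:greedy:cost:e}, swap the order of summation to count per-arm misclassified rounds, and invoke \Cref{lemma:greedy:arm elimination}, \Cref{lemma:greedy:arm acceptation} and \Cref{lemma:q}. But there is a genuine gap at the step you yourself flag as ``the hard part'', and it also contaminates your second paragraph. \Cref{lemma:greedy:arm elimination} guarantees elimination of the rank-$j$ arm only once $q \gtrsim \log(2T^2K^2)/(\mu_{(\nu+1)}-\mu_{(j)})^2$ for the \emph{current} value of $\nu$; when $\nu<\nu^*$ the gap $\mu_{(\nu+1)}-\mu_{(j)}$ can be arbitrarily smaller than $\Delta_{(j)}=\mu_{(\nu^*+1)}-\mu_{(j)}$ (e.g.\ when the $\nu^*$ worst arms are clustered), so the arm is \emph{not} eliminated once $\tau_j$ exceeds $24\log(2T^2K^2)/(p\Delta_{(j)}^2)$, and your $\sum_j \log/\Delta_{(j)}$ bound for bad arms only covers the portion of time where $\nu=\nu^*$. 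For the regime $\nu<\nu^*$, neither of the two natural products closes the argument: (regime length from \Cref{lemma:greedy:nu}) times (per-step cost $p\Delta_{(j)}$) gives a term of order $M^2p^2\Delta_{(j)}\log(2T^2K^2)/(\Delta^{(\nu^*)})^2$, while (elimination count at the current gap) times (per-step cost $p(\mu_{(\nu+1)}-\mu_{(j)})$) gives $\log(2T^2K^2)/(\mu_{(\nu+1)}-\mu_{(j)})$, which is controlled by neither $\Delta_{(j)}$ nor $\Delta^{(\nu^*)}$. Your suggestion to relate the in-between gaps to $\Delta^{(\nu^*)}$ via \Cref{lemma:third term} goes the wrong way: that lemma lower-bounds $\Delta^{(\nu^*)}$ by a multiple of $\Delta_{(\nu^*)}$, which upper-bounds nothing you need here.

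The paper's resolution is a specific device your proposal is missing: for a wrongly kept arm, the number of its pulls before its elimination epoch is bounded \emph{simultaneously} by the elimination threshold at the current gap (\Cref{lemma:greedy:arm elimination}) and by the $\nu$-increase threshold (\Cref{lemma:greedy:nu}); applying $\min(a,b)\le\sqrt{ab}$ yields a pull count of order $M\log(2T^2K^2)/\bigl(\Delta^{(\nu^*)}(\mu_{(\nu_i)}-\mu_{(i)})\bigr)$, and the factor $(\mu_{(\nu_i)}-\mu_{(i)})$ then cancels exactly against the per-step cost, leaving the $M\log(2T^2K^2)/\Delta^{(\nu^*)}$ contribution. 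An analogous geometric-mean combination (via the telescoping $\delta_n$ quantities) is needed on the good-arm side to handle arms kept under pressure while $\nu$ has not yet reached $\nu^*$. Without this cross-term argument the proof does not close.
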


\begin{proof}[Proof of \Cref{lemma:greedy:bound:e}]
\label{app:greedy:bound:e}
  Call $t_v$ the last time that $\nu(t) = \nu$ and set $t_{\nu^*} = T + 1$ and $t_{-1} = 0$.
  We can write
  \begin{align*}
    R_{\mathcal{E}} &= \sum_{t=1}^T \langle \mub,  g(\Mb^*_{\nu(t)}) - g(\Mb^*_{\mathcal{E}(t)}) \rangle
  \end{align*}

  \begin{align*}
          &= \sum_{\nu=0}^{\nu^*} \sum_{t = t_{\nu - 1} + 1}^{t_{\nu}} \langle \mub,  g(\Mb^*_{ \nu}) - g(\Mb^*_{\mathcal{E}(t)}) \rangle \\
    & \leq \sum_{\nu=0}^{\nu^*} \sum_{t = t_{\nu - 1} + 1}^{t_{\nu}} p( \sum_{i \in \mathcal{E}(t) \setminus \mathcal{E}_{\nu}^*} (\mu_{(\nu+1)} -  \mu_{i}) +   \sum_{j \in \mathcal{E}_{\nu}^* \setminus \mathcal{E}(t)} (\mu_j - \mu_{(\nu)})) \tag{Using \Cref{lemma:greedy:cost:e}}\\
                    &= \underbrace{\sum_{\nu=0}^{\nu^*} \sum_{t = t_{\nu - 1} + 1}^{t_{\nu}} p \sum_{i \in \mathcal{E}(t) \setminus \mathcal{E}_{\nu}^*} (\mu_{(\nu+1)} -  \mu_{i})}_{(i)} +  \underbrace{\sum_{\nu=0}^{\nu^*} \sum_{t = t_{\nu - 1} + 1}^{t_{\nu}} p \sum_{j \in \mathcal{E}_{\nu}^* \setminus \mathcal{E}(t)} (\mu_j - \mu_{(\nu)})}_{(ii)}
  \end{align*}

  Let us cut the execution of the algorithms in phases where phase $n$ starts when it is the $n$-th time that the condition Line~\ref{phase:condition} in \Cref{algo:greedy} is satisfied. Note again that updates of $\mathcal{A}$, $\mathcal{K}$, and $\nu$ occur at the beginning of each phase.
  Denote $\mathcal{N}_{\nu}$ the phases between $t_{\nu-1} + 1$ and $t_{\nu}$.

  \paragraph{Bounding (i)} Denote $\tau_n$ the number of pulls of active arms at the end of phase $n$.
  \begin{align*}
    (i) &\leq \sum_{\nu=0}^{\nu^*} \sum_{t = t_{\nu - 1} + 1}^{t_{\nu}} p \sum_{i \notin \mathcal{E}_{\nu}^*} (\mu_{(\nu+1)} -  \mu_{i})\ind{i \in \mathcal{E}(t)} \\
    &= \sum_{\nu=1}^{\nu^*} \sum_{t = t_{\nu - 1} + 1}^{t_{\nu}} p \sum_{i \notin \mathcal{E}_{\nu}^*} (\mu_{(\nu+1)} -  \mu_{i})\ind{i \in \mathcal{E}(t)} \\
    &= \sum_{\nu=1}^{\nu^*} \sum_{n \in N_\nu} \sum_{t = t_{\nu - 1} + 1}^{t_{\nu}} p \sum_{i \notin \mathcal{E}_{\nu}^*} (\mu_{(\nu+1)} -  \mu_{i})\ind{i \in \mathcal{E}(t)} \ind{ \text{ $t$ belong to phase $n$ }} \\
    &\leq \sum_{\nu=1}^{\nu^*} \sum_{n \in N_\nu} p \sum_{i \notin \mathcal{E}_{\nu}^*} (\mu_{(\nu+1)} -  \mu_{i}) (\text{ Number of times arm $i$ is pulled during phase $n$ }) \\
    &= \sum_{\nu=1}^{\nu^*} \sum_{n \in N_\nu} p \sum_{i=1}^\nu (\mu_{(\nu+1)} -  \mu_{(i)}) (\text{ Number of times arm $(i)$ is pulled during phase $n$ }) \\
    &= p \sum_{i=1}^{\nu^*} \underbrace{\sum_{\nu=i}^{\nu^*} \sum_{n \in N_\nu} (\mu_{(\nu+1)} -  \mu_{(i)}) (\text{ Number of times arm $(i)$ is pulled during phase $n$ })}_{s_i}
\end{align*}
where $(i)$ the index of the arm with reward $\mu_{(i)}$.

Let $T_{\nu, i}$ be the number of times arm $(i)$ has been pulled in total at the end of the epoch where $\nu(t) = \nu$. This means
\[
\sum_{\nu \in N_\nu} \text{ Number of times arm $(i)$ is pulled during phase $n$ } = T_{\nu, i} - T_{\nu - 1, i}
\]

Call $n_{E, (i)}$ the phase at which arm $(i)$ is rejected.
Call $\nu_i$ the epoch where arm $(i)$ is eliminated. This means $n_{E, (i)} \in N_{\nu_i}$.
Call $s_i = \sum_{\nu = i}^{\nu_i} (\mu_{\nu + 1} - \mu_{(i)}) (T_{\nu, i} - T_{\nu-1, i})$.

We have
\begin{align*}
    s_i = \underbrace{(\mu_{\nu_i + 1} - \mu_{(i)})(T_{\nu_i, i} - T_{\nu_i - 1, i})}_{(iii)} + \underbrace{\sum_{\nu=i}^{\nu_i-1} (\mu_{(\nu + 1)} - \nu_{(i)}) ( T_{\nu, i} - T_{\nu - 1, i})}_{(iv)}
\end{align*}

By \Cref{lemma:q} and \Cref{lemma:greedy:arm elimination}, $T_{\nu_i, i} \leq \frac{24 \log(2T^2 K^2)}{p (\mu_{(\nu_i + 1)} - \mu_{(i)})^2}$ so that
\[
(iii) \leq \frac{24 \log(2T^2 K^2)}{p(\mu_{(\nu_i + 1)} - \mu_{(i)})}
\]

We also have
\begin{align*}
    (iv) &\leq (\mu_{(\nu_i)} - \nu_{(i)}) \sum_{\nu=i}^{\nu_i-1}  ( T_{\nu, i} - T_{\nu - 1, i}) \\
    &\leq (\mu_{(\nu_i)} - \nu_{(i)})T_{\nu_i - 1, i}
\end{align*}

By \Cref{lemma:q} and \Cref{lemma:greedy:arm elimination}, $T_{\nu_i-1, i} \leq \frac{24 \log(2T^2 K^2)}{p (\mu_{(\nu_i)} - \mu_{(i)})^2}$ and by \Cref{lemma:greedy:nu}, $T_{\nu_i-1, i} \leq \frac{24 M^2p \log(2K^2T^2)}{\Delta_{\nu_i-1}^2}$ so that 
\begin{align*}
    T_{\nu_i-1, i} &\leq \min(\frac{24 M^2p \log(2K^2T^2)}{\Delta_{\nu_i-1}^2}, \frac{24 \log(2T^2 K^2)}{p (\mu_{(\nu_i)} - \mu_{(i)})^2}) \\
    &\leq \sqrt{\frac{24 M^2p \log(2K^2T^2)}{\Delta_{\nu_i-1}^2} \frac{24 \log(2T^2 K^2)}{p (\mu_{(\nu_i)} - \mu_{(i)})^2}} \\
    &=\frac{24 M \log(2K^2T^2)}{\Delta_{\nu_i-1}(\mu_{(\nu_i)} - \mu_{(i)}) }
\end{align*}
and therefore 
\begin{align*}
    (iv) &\leq \frac{24 M \log(2K^2T^2)}{\Delta_{\nu_i-1}} 
\end{align*}

Then either $\nu_i = \nu^*$ and
\[
s_i \leq \frac{24 \log(2T^2 K^2)}{\mu_{(\nu^* + 1)} - \mu_{(i)}} + \frac{24 M \log(2K^2T^2)}{\Delta_{\nu^*-1}} 
\]
or $\nu_i < \nu^*$ and then, 
\begin{align*}
    s_i &= \sum_{\nu = i}^{\nu_i} (\mu_{\nu + 1} - \mu_{(i)}) (T_{\nu, i} - T_{\nu-1, i}) \\
    &\leq  (\mu_{\nu_i + 1} - \mu_{(i)}) \sum_{\nu = i}^{\nu_i} (T_{\nu, i} - T_{\nu-1, i}) \\
    &\leq (\mu_{\nu_i + 1} - \mu_{(i)}) T_{\nu_i, i} \\
    &\leq \frac{24 M \log(2K^2T^2)}{\Delta_{\nu_i}} \\
    &\leq \frac{24 M \log(2K^2T^2)}{\Delta_{\nu^* - 1}} 
\end{align*}
where at the last line we used again \Cref{lemma:greedy:arm elimination} and \Cref{lemma:greedy:nu}.

So in any case 
\[
(i) \leq \sum_{i=1}^{\nu^*} \frac{24 \log(2T^2 K^2)}{\mu_{(\nu^* + 1)} - \mu_{(i)}} + p\nu^*\frac{24 M \log(2K^2T^2)}{\Delta_{\nu^*-1}} 
\]

  \paragraph{Bounding (ii)} we have
  \begin{align*}
    (ii) &\leq \sum_{\nu=0}^{\nu^*} \sum_{t = t_{\nu - 1} + 1}^{t_{\nu}} p \sum_{j \in \mathcal{E}_{\nu}^*} (\mu_j - \mu_{(\nu)}) \ind{j \notin \mathcal{E}(t)}
  \end{align*}
  We call $u_n = \nu_n - |[K] \setminus \mathcal{K}_n|$ the number of arms put under pressure during phase $n$. We have:
  \begin{align*}
    (ii) &\leq \sum_{\nu=0}^{\nu^*} \sum_{n \in \mathcal{N}_\nu}  p \sum_{j \in \mathcal{E}_{\nu}^*} (\mu_{(j)} - \mu_{(\nu)}) \sum_{t = t_{\nu - 1} + 1}^{t_{\nu}} \ind{j \notin \mathcal{E}(t)} \ind{ \text{ $t$ belong to phase $n$ }}  \\
    &\leq \sum_{\nu=0}^{\nu^*} \sum_{n \in \mathcal{N}_\nu}  p \sum_{j \in \mathcal{E}_{\nu}^*} (\mu_{(j)} - \mu_{(\nu)}) (\text{ Number of times arm $j$ is not pulled during phase $n$})  \\
    &= \sum_{\nu=1}^{\nu^*} \sum_{n \in \mathcal{N}_\nu}  p \sum_{j \in \mathcal{E}_{\nu}^*} (\mu_{(j)} - \mu_{(\nu)}) (\text{ Number of times arm $j$ is not pulled during phase $n$})  \\
    &\leq \sum_{\nu=1}^{\nu^*} \sum_{n \in \mathcal{N}_\nu}  p \sum_{j \in \mathcal{E}_{\nu}^*} (\mu_{(j)} - \mu_{(\nu)}) u_n \ind{n \leq \underbrace{n_{A, j}}_{\text{ Last phase where arm $j$ is not accepted }}} \\
    &\leq \sum_{\nu=1}^{\nu^*} \sum_{n \in \mathcal{N}_\nu}  p \sum_{j \in \mathcal{E}_{\nu}^*} (\mu_{(j)} - \mu_{(\nu)}) \sum_{\nu'=1}^\nu \ind{n \leq \underbrace{n_{E, \nu'}}_{\text{ Last phase where arm $\nu'$ is not rejected}}} \ind{n \leq n_{A, j}} \\
    &= \sum_{\nu'=1}^{\nu^*} p \underbrace{\sum_{\nu=\nu'}^{\nu^*} \sum_{j= \nu + 1}^K \sum_{n \in \mathcal{N}_\nu}  (\mu_{(j)} - \mu_{(\nu)}) \ind{n \leq n_{E, \nu'}} \ind{n \leq n_{A, j}}}_{(A)} \\
  \end{align*}

Call $n_{\nu}$ the last phase before $\nu$ is increased.
We have that 
\begin{align*}
(A) &= \sum_{\nu=\nu'}^{\nu^*-1} \sum_{j= \nu + 1}^K \sum_{n \in \mathcal{N}_\nu}  (\mu_{(j)} - \mu_{(\nu)}) \ind{n \leq n_{E, \nu'}} \ind{n \leq n_{A, j}} \\ &\quad + \sum_{j= \nu^* + 1}^K \sum_{n \in \mathcal{N}_{\nu^*}}  (\mu_{(j)}- \mu_{(\nu^*)}) \ind{n \leq n_{E, \nu'}} \ind{n \leq n_{A, j}} \\
&\leq  \sum_{n \in \NN} \sum_{j= \nu + 1}^K  (\mu_{(j)} - \mu_{(\nu)}) \ind{n \leq n_{A, j}} \ind{n \leq n_{\nu^*-1}} + \sum_{j= \nu^* + 1}^K \sum_{n \in \NN}  (\mu_{(j)} - \mu_{(\nu^*)}) \ind{n \leq n_{E, \nu^*}} \ind{n \leq n_{A, j}}
\end{align*}

Call $\tau_n$ the value of $\tau$ at the end of phase $n$. First notice that
  \begin{align*}
    n \leq n_{A, j} &\implies  q_{n-1} \leq q_{A, j} \tag{$q_{n-1}$: value of $q$ at the end of phase $n-1$} \\
    &\implies  \frac13 p \tau_{n-1} \leq q_{A, j} \tag{By \Cref{lemma:q}} \\
    &\implies \tau_{n-1} \leq \frac{24\log(2 T^2 K^2)}{(\mu_{(j)} - \mu_{(\nu)})^2p} \tag{By \Cref{lemma:greedy:arm acceptation}} \\
&\implies  \mu_{(j)} - \mu_\nu \leq \sqrt{ \frac{24\log(2 T^2 K^2)}{\tau_{n-1}p}} \defeq \delta_n
  \end{align*}

Next, we write:

 \begin{align*}
   \sum_{j= \nu + 1}^K \ind{n \leq n_{A, j}}  &= \text{ Number of arms not yet accepted at phase $n$} \\
   &= \tau_{n} - \tau_{n-1} \\
 \end{align*}

so we have:

\[
(A) \leq \sum_{n \in \NN} (\tau_n - \tau_{n-1}) \delta_n \bigg(  \ind{n \leq n_{\nu^*-1}} +  \ind{n \leq n_{E, \nu^*}} \bigg)
\]
 Note that $\tau_{n} - \tau_{n-1}$ is the number of pulls during phase $n$ which is equal to $|\mathcal{K}_n \setminus \mathcal{A}_n| - u_n$ and therefore equal to the number of arms that should be accepted but are not yet accepted.

 Using the identity  $\sqrt{ \frac{24\log(2 T^2 K^2)}{\tau_{n-1}p}} = \delta_n$, we get
 \begin{align*}
   \tau_{n} - \tau_{n-1} &= \frac{24\log(2 T^2 K^2)}{p} \bigg(\frac1{\delta_{n+1}^2 - \delta_n^2} \bigg) \\
   &= \frac{24\log(2 T^2 K^2)}{p} \bigg(\frac1{\delta_n} + \frac1{\delta_{n+1}} \bigg) \bigg(\frac1{\delta_{n+1}} - \frac1{\delta_n} \bigg) \\
 \end{align*}

Let us now argue that for any $n>= 1$, $\tau_n \leq 2 \tau_{n-1}$.
First, let us notice that
 \begin{align*}
   \tau_{n} - \tau_{n-1} &= | \mathcal{K}_{n-1} \setminus \mathcal{A}_{n-1}| - (\nu_{n-1} - |[K] \setminus \mathcal{K}_{n-1}| \\
                         &= | \mathcal{K}_{n-1}| - |\mathcal{A}_{n-1}| - \nu_{n-1} + K - |\mathcal{K}_{n-1}| \\
                         &= K - |\mathcal{A}_{n-1}| - \nu_{n-1} \\
   &\leq K
 \end{align*}
 Then note that $\tau_0 = K$ (since all arms are active at the first iteration) so that $2\tau_{n-1} \geq \tau_{n-1} + K \geq \tau_n$
 This implies
 \[
\frac{\delta_{n-1}}{\delta_n} = \sqrt{ \frac{\tau_{n}}{\tau_{n-1}} } \leq \sqrt{2}.
 \]
 
We can then write:
 \begin{align*}
   (A) &\leq \frac{24\log(2 T^2 K^2)}{p}(\sqrt{2} + 1) \sum_{n \in \NN} \bigg(\frac1{\delta_{n+1}} - \frac1{\delta_n} \bigg) \bigg(  \ind{n \leq n_{\nu^* - 1}} + \sum_{n \in \NN} \ind{n \leq n_{E, \nu}} \big)
 \end{align*}

We have
 \begin{align*}
   n \leq n_{E, \nu^*} &\implies q_n \leq q_{E, \nu^*} \\
   &\implies \frac13 p \tau_{n-1} \leq  q_{E, \nu^*} \tag{By \Cref{lemma:q}}\\
   &\implies \tau_{n-1} \leq  \frac{24 \log(2 T^2 K^2)}{(\mu_{(\nu^*+1)} - \mu_{(\nu^*)})^2 p} \tag{By \Cref{lemma:greedy:arm elimination}} \\
   &\implies \mu_{(\nu^*+1)} - \mu_{(\nu^*)} \leq  \sqrt{\frac{24 \log(2 T^2 K^2)}{\tau_{n-1} p}} = \delta_n
 \end{align*}
so that
 \[
\mu_{(\nu^*+1)} - \mu_{(\nu)} \leq \delta_{n_{E, \nu}}
 \]
 
Similarly
 \begin{align*}
   n \leq n_{\nu} &\implies q_n \leq q_{\nu} \\
   &\implies \frac13 p \tau_{n-1} \leq q_{\nu} \tag{By \Cref{lemma:q}}\\
   &\implies \tau_{n-1} \leq  \frac{24 M^2 p^2 \log(2 T^2 K^2)}{\Delta_\nu^2 p} \tag{By \Cref{lemma:greedy:nu}} \\
   &\implies \frac{\Delta_{\nu}}{Mp} \leq  \sqrt{\frac{24 \log(2 T^2 K^2)}{\tau_{n-1} p}} = \delta_n
 \end{align*}
so that
 \[
\frac{\Delta_{\nu^* - 1}}{Mp} \leq \frac{\Delta_{\nu}}{Mp} \leq \delta_{n_{\nu}}
 \]
 
 Using again that $\frac1{\delta_n} \leq \sqrt{2} \frac1{\delta_{n-1}}$, we get

 \[
(A) \leq \frac{24\log(2 T^2 K^2)}{p} (\sqrt{2} + 1) \sqrt{2}\frac1{\mu_{(\nu^*+1)} - \mu_{\nu'}} + M p \frac{24\log(2 T^2 K^2)}{p} (\sqrt{2} + 1) \sqrt{2}\frac1{\Delta_{\nu^*-1}}
 \]

 so that
 \begin{align*}
   (ii) \leq \sum_{\nu'=1}^{\nu^*}   \frac{96\log(2 T^2 K^2)}{\mu_{(\nu^*+1)} - \mu_{\nu'}} + \nu^* M p \frac{96\log(2 T^2 K^2)}{\Delta_{\nu^*-1}}
 \end{align*}
 where we used $2 + \sqrt{2} \leq 4$

 From the bound of $(i)$ and $(ii)$, we get:

 \[
R_{\mathcal{E}} \leq \sum_{\nu'=1}^\nu  \frac{120 \log(2 T^2 K^2)}{\mu_{(\nu^*+1)} - \mu_{\nu'}} + \nu^* Mp \frac{120 \log(2K^2T^2)}{\Delta_{\nu^*-1}}
 \]
 
 Since $Mp \leq \frac1{K}$ we have the result.
\end{proof}


It remains to bound $R_M$. Recall that $R_M$ measures the mismatch between the chosen assignment $\Mb(t)$ and the best possible assignment with the same support. Crucially there is no support mismatch and therefore we are in a setting close to the full information setting which allows us to bound $R_M$ by a quantity independent of the horizon $T$.
\begin{lemma}[Bound on $R_M$]
  \label{lemma:rm bound}
  \[
\EE[R_M] \leq 2Mp (\nu^* + 1)^2 + 2MK \frac{3(\nu^* + 1)}{r}
  \]
\end{lemma}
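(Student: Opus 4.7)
The plan is to split $R_M$ into a short burn-in phase and a tail phase, treating the tail phase by a full-information-style concentration argument that exploits the fact that every arm in $\Ecal(t)$ is played (and therefore observed with positive probability).

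For the burn-in $t \leq (\nu^*+1)^2$, the trivial per-round bound from \Cref{eq:rbound} gives $R_{M,t} \leq 2Mp$, contributing $2Mp(\nu^*+1)^2$. For $t > (\nu^*+1)^2$, the starting observation is that $\Mb(t) = \Mb^{\hat\mub}_{\Mcal_{\Ecal(t)}}$ is the $\hat\mub$-optimal element of $\Mcal_{\Ecal(t)}$, while $\Mb^*_{\Ecal(t)}$ is the $\mub$-optimal one. A standard optimality manipulation then yields
\[
R_{M,t} \;\leq\; \langle \mub - \hat\mub(t), g(\Mb^*_{\Ecal(t)}) - g(\Mb(t))\rangle \;\leq\; 2Mp\, \|\hat\mub(t) - \mub\|_\infty,
\]
where the last step uses \Cref{eq:rbound}. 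By the definition of the stability radius $r$ (and \Cref{lemma:mu:ub}, which ensures $\nu(t) \leq \nu^*$ so that $r$ is the relevant stability radius of the optimisation problem driving the algorithm), the event $\{\Mb(t) \neq \Mb^*_{\Ecal(t)}\}$ forces $\|\hat\mub(t) - \mub\|_\infty \geq r$. Hence
\[
\EE[R_{M,t}] \;\leq\; 2Mp\, \EE\!\big[\|\hat\mub(t) - \mub\|_\infty \ind{\|\hat\mub(t) - \mub\|_\infty \geq r}\big].
\]

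The key to getting $1/r$ (not $\log T / r^2$) is to avoid conditioning on $\GOOD$, which would replace $\|\hat\mub - \mub\|_\infty$ by the inflated quantity $\max_k \zeta_k(t) \sim \sqrt{\log T / T_k(t)}$. Instead, expand the expectation via the layer-cake identity $\EE[X \ind{X \geq r}] = r\,\Pr(X \geq r) + \int_r^\infty \Pr(X \geq s)\,ds$, union-bound over the $K$ arms, and apply Hoeffding using the lower bound $T_{\min}(t) \geq p\tau(t)/3 \geq pt/(3(\nu^*+1))$ furnished by \Cref{lemma:q} and \Cref{lemma:tau}. The two resulting geometric sums over $t > (\nu^*+1)^2$ each contribute a term of order $MK(\nu^*+1)/r$: the $r\,\Pr(\cdot)$ piece is immediate, while the integral piece is controlled using the Gaussian-tail estimate $\int_r^\infty e^{-2Ts^2}\,ds \leq \frac{1}{4Tr}e^{-2Tr^2}$. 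Collecting gives the announced $\frac{6MK(\nu^*+1)}{r}$.

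The main obstacle is the identification of $r$ with the stability of the $\Mcal_{\Ecal(t)}$ problem, since $r$ is defined only through the $\Mcal_\nu$ relaxations. The argument to close this gap is that, away from the regimes already billed to $R_\Ecal$, the sets $\Ecal(t)$ chosen by Cautious Greedy satisfy $\support(\Mb^*_{\nu(t)}) \subseteq \Ecal(t)$ with $|\Ecal(t)| = K - \nu(t)$, so that $\Mb^*_{\Ecal(t)} = \Mb^*_{\nu(t)}$ and the stability radius $r_{\nu(t)} \geq r$ applies verbatim. Everything else is routine geometric-series manipulation.
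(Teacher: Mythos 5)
Your proposal follows essentially the same route as the paper's proof: the same burn-in/tail split at $(\nu^*+1)^2$, the same optimality manipulation reducing $R_{M,t}$ to $2Mp\,\EE[\|\hat\mub-\mub\|_\infty\ind{\|\hat\mub-\mub\|_\infty\geq r}]$, and the same layer-cake plus Hoeffding plus geometric-series computation using $q(t)\geq \tfrac13 p\tau(t)$ and $\tau(t)\gtrsim t/(\nu^*+1)$, arriving at the same $2Mp(\nu^*+1)^2 + 6MK(\nu^*+1)/r$. The only differences are cosmetic (you integrate the Gaussian tail before summing over $t$, the paper sums first), and you are in fact slightly more explicit than the paper in justifying why the stability radius $r$, defined through the $\Mcal_\nu$ problems, governs the constrained problems $\Mcal_{\Ecal(t)}$.
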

\begin{proof}[Proof of \Cref{lemma:rm bound}]
\label{app:lemma:rm bound}
The proof of \Cref{lemma:rm bound} follows similar techniques as \citet{huangFollowingLeaderFast2017}. 
\begin{align*}
  \EE[R_M] &= \EE[\sum_{t=1}^T \langle \mub, g(\Mb^*_{\mathcal{E}(t)}) - g(\Mb(t)) \rangle] \\
&= \EE[\sum_{t=1}^{(\nu^* + 1)^2} \langle \mub, g(\Mb^*_{\mathcal{E}(t)}) - g(\Mb(t)) \rangle] + \EE[\sum_{t=(\nu^* + 1)^2}^T \langle \mub, g(\Mb^*_{\mathcal{E}(t)}) - g(\Mb(t)) \rangle] \\
      &\leq 2Mp (\nu^* + 1)^2 + \underbrace{\EE[\sum_{t=(\nu^* + 1)^2}^T \langle \mub, g(\Mb^*_{\mathcal{E}(t)}) - g(\Mb(t)) \rangle]}_{(i)} \tag{By \Cref{eq:rbound}}
\end{align*}

Then we write
\begin{align*}
&\langle \mub, g(\Mb^*_{\mathcal{E}(t)}) - g(\Mb(t)) \rangle \leq \EE[\langle \mub - \hat{\mub}, g(\Mb^*_{\mathcal{E}(t)}) - g(\Mb(t)) \rangle] \\
  &\leq 2 Mp \mathbb{E}[ |\mub - \hat{\mub}|_\infty \mathds{1}\{|\mub - \hat{\mub}|_\infty \geq r\}] \tag{By \Cref{eq:rbound} and definition of $r$}
 \\
 &\leq 2Mp \EE[\Big(r\PP\{|\mub - \hat{\mub}|_\infty \geq r\} + \int_r^\infty \PP\{|\mub - \hat{\mub}|_\infty \geq \varepsilon\}d\varepsilon\Big)]\\
&\leq \EE[2Mp \sum_{k \in \mathcal{K}(t)} \Big(r\exp(-2q(t)r^2)+\int_r^\infty\exp(-2q(t)\varepsilon^2) d\varepsilon\Big)]
\end{align*}
so we have
  \begin{align*}
    (i) &\leq \EE[\sum_{t=(\nu^* + 1)^2}^{\infty}   2MpK\Big(r\exp(-2q(t)r^2)+\int_r^\infty\exp(-2q(t)\varepsilon^2) d\varepsilon\Big)]   \\
     &\leq \EE[\sum_{t=(\nu^* + 1)^2}^{\infty}  2MpK \Big(r\exp(-2\frac13 p \tau(t)r^2)+\int_r^\infty\exp(-2\frac13 p \tau(t)\varepsilon^2) d\varepsilon\Big)] \tag{By \Cref{lemma:q}}  \\
       &\leq \sum_{t=(\nu^* + 1)^2}^{\infty}   2MpK \Big(r\exp(-2\frac13 p \max(\frac{t}{\nu^* + 1} - \nu^*, 0)r^2)+\int_r^\infty\exp(-2\frac13 p \max(\frac{t}{\nu^* + 1} - \nu^*, 0)\varepsilon^2) d\varepsilon\Big) \tag{By \Cref{lemma:tau}}  \\
    &\leq \sum_{t=1}^{\infty}  2MpK
    \Big(r\exp(-2\frac13 p \frac{t}{\nu^* + 1}r^2)+\int_r^\infty\exp(-2\frac13 p \frac{t}{\nu^* + 1}\varepsilon^2) d\varepsilon\Big) \\
    &\leq  2MpK
    \Big(r \frac{3 (\nu^* + 1)}{2pr^2}+\int_r^\infty\frac{3 (\nu^* + 1)}{2p\varepsilon^2} d\varepsilon\Big) \\
    &\leq   2MpK\frac{3(\nu^* + 1)}{2p} \Big(\frac{1}{r}+\frac{1}{r}\Big) \\
    &= 2MK \frac{3(\nu^* + 1)}{r}
  \end{align*}
  so that
  \[
\EE[R_M] \leq 2Mp (\nu^* + 1)^2 + 2MK \frac{3(\nu^* + 1)}{r}
  \]

\end{proof}


The upper bound of Cautious Greedy in \Cref{prop:greedy:up} follows by combining the previous lemmas. We use $2Mp (\nu^* + 1)^2 \leq \frac{2MK (\nu^* + 1)}{r}$, $11 (\nu^* + 1) KM \leq \frac{11KM (\nu^* + 1)}{r}$ and $2 \leq \frac{KM (\nu^* + 1)}{r}$ to bound the additive terms.


\subsection{Proof of \Cref{greedy:lb:constant}}
\label{lemma:greedy:lb:constant}

\begin{proof}
We assume $M = 2N + 1$.
Take $m_1 = \frac12$, $m_2 = \frac12 + \Delta$, $\mub_1 = (m_1, m_2)$ and $\mub_2 = (m_2, m_1)$.

\paragraph{Condition on $\Delta$ such that $\Mb^* = (N, N+1)$ if $\mub = \mub_1$ and $\Mb^* = (N+1, N)$ if $\mub = \mub_2$}

Let us first find $\Delta$ such that the optimal assignment is $(N, N + 1)$ when $\mub = \mub_1$ and $(N+1, N)$ when $\mub = \mub_2$.
Assume $\mub = \mub_1$, the reasoning is symmetric for $\mub = \mub_2$. 
We want to find $\Delta$ such that for any $-(N+1) \leq x \leq N$ such that $x\neq 0$:
\begin{equation}
\label{eq:optimality}
    g(N - x) \frac12 + g(N+1 + x) (\frac12 + \Delta) \leq g(N) \frac12 + g(N+1) (\frac12 + \Delta)
\end{equation}

First for $x = N$ we look for $\Delta$ in the form $\Delta = \Ocal(p)$
\begin{align*}
    &g(2N + 1) (\frac12 + \Delta) \leq g(N) \frac12 + g(N+1) (\frac12 + \Delta) \\
    &\iff  (2N + 1)(1 - p)^{2N}(\frac12 + \Delta) \leq N \frac12 + (N+1)(1 - p) (\frac12 + \Delta) \\
    &\impliedby  (2N + 1)(1 - p) (\frac12 + \Delta) \leq N \frac12 + (N+1)(1 - p) (\frac12 + \Delta) \tag{Using $(1 -p)^{2N} \leq (1 - p)$}\\
    &\iff N(1 - p) (\frac12 + \Delta) \leq N \frac12 \\
    &\iff \Delta \leq \frac12(\frac1{1-p} - 1) \\
    &\iff  \Delta \leq \frac{p}{2(1 - p)} \\
\end{align*}
Then if $\Delta \leq \frac{p}{1 - p}$, \Cref{eq:optimality} is satisfied for $x = N$.

For $x = -(N+1)$, the left-hand side of \Cref{eq:optimality} is $g(2N + 1) \frac12 \leq g(2N + 1) (\frac12 + \Delta)$ so if $\Delta \leq \frac{p}{1 - p}$ \Cref{eq:optimality} is satisfied for $x = -(N+1)$.

For $0< x < N$, we have
\begin{align*}
    &g(N - x) \frac12 + g(N+1 + x) (\frac12 + \Delta) \leq g(N) \frac12 + g(N+1) (\frac12 + \Delta) \\
    &\iff (N - x) \frac12 + (N+1+x) (1 - p)^{1 + 2x}(\frac12 + \Delta) \leq N (1 - p)^{x} + (N+1)(1 - p)^{x + 1} (\frac12 + \Delta) \\
    &\impliedby  \bigg(x(1 - p)^{x + 1} \bigg)(\frac12 + \Delta) \leq N (1 - p)^{x} - (N - x) \frac12  \tag{Using $(1-p)^{2x+1} \leq (1 - p)^{x + 1}$}\\
    &\impliedby  \bigg(x(1 - p)^{x + 1} \bigg)(\frac12 + \Delta) \leq \frac{x}{2}  \tag{Using $(1-p)^{x} \geq \frac1{\sqrt{e}} \geq \frac12$ since $x \leq \frac1{-2\log(1-p)}$}\\
    &\impliedby  \bigg(x(1 - p) \bigg)(\frac12 + \Delta) \leq \frac{x}{2} \tag{Using $(1-p)^{x + 1} \leq (1 - p)$}\\
    &\iff \Delta \leq \frac12 (\frac1{1 - p} - 1) \\
    &\iff \Delta \leq \frac{p}{2(1 - p)}
\end{align*}
Therefore if $\Delta \leq p$, \Cref{eq:optimality} is satisfied for $0< x < N$.

For $ -(N + 1) < x < 0$, set $y = -x - 1$ so that $x = -y - 1$ and $ 0 \leq y \leq N$. We can write 
$g(N - x) \frac12 + g(N+1 + x) (\frac12 + \Delta) = g(N + y + 1) \frac12 + g(N - y) (\frac12 + \Delta) < g(N + y + 1) (\frac12 + \Delta)+ g(N - y) \frac12$ which gives the desired inequality for $y = 0$.
For $y > 0$, \Cref{eq:optimality} is satisfied if $\Delta \leq \frac{p}{1 - p}$. Therefore if $\Delta \leq \frac{p}{1 - p}$, \Cref{eq:optimality} is satisfied and therefore, the optimal assignment if $\mub = \mub_1$ is $\Mb^* = (N, N + 1)$.

\paragraph{Computing $r$}
Let us now compute $r$. Assume again $\mub = \mub_1$ and the reasoning is symmetric for $\mub = \mub_2$.
We have $\mathcal{M}_1 \cup \mathcal{M}_0 = \mathcal{M}$ and we know $\Mb^* = (N, N + 1)$ so that $r = \min_{\mub', \argmax_{\Mb \in \Mcal} \langle \mub', g(\Mb) \rangle \neq \Mb^*} \|\mub' - \mub\|_\infty$.
Call $\mub_r = \argmin_{\mub', \argmax_{\Mb \in \Mcal} \langle \mub', g(\Mb) \rangle \neq \Mb^*} \|\mub' - \mub\|_\infty$ and $\Mb_r = \argmax_{\Mb \in \Mcal} \langle \mub_r, g(\Mb) \rangle$.

Since the number of players assigned to an arm increases with the reward of this arm, we have either $\mub_r = \mub + (r_1, -r_1)$ and then $\Mb_r = \Mb^* + (1, -1)$ or $\mub_r = (-r_2, r_2)$ and then $\Mb_r = \Mb^* + (-1, 1)$.

$r_1$ is the minimum value such that
\begin{align*}
    &g(N+1) (\frac12 + r_1) + g(N)(\frac12 + \Delta - r_1) \geq g(N) (\frac12 + r_1) + g(N + 1)(\frac12 + \Delta - r_1)\\
    &\iff (g(N+1) - g(N))(\frac12 + r_1) \geq (g(N+1) - g(N))(\frac12 + \Delta - r_1)
\end{align*}
and therefore $r_1 = \frac{\Delta}{2}$

$r_2$ is the minimum value such that
\begin{align*}
    &g(N-1) (\frac12 - r_2) + g(N + 2)(\frac12 + \Delta + r_2) \geq g(N) (\frac12 - r_2) + g(N + 1)(\frac12 + \Delta + r_2) \\
    &\iff (g(N+2) - g(N+1))(\frac12 + \Delta + r_2) \geq (g(N) - g(N - 1))(\frac12 - r_2) \\
    &\iff r_2(g(N+2) - g(N+1) + g(N) - g(N-1)) \geq (g(N+1) - g(N+2))(\frac12 + \Delta) + (g(N) - g(N-1))\frac12 \\
    &\implies r_2 \geq \frac{(g(N+1) - g(N+2))(\frac12 + \Delta) + (g(N) - g(N-1))\frac12}{2(g(N) - g(N-1))} \\
    &\iff r_2 \geq \frac{((N+1) (1 -p)^2 - (N+2)(1-p)^3)(\frac12 + \Delta) + (N(1-p) - (N-1))\frac12}{ 2(N(1-p) - N-1)} \\
    &\iff r_2 \geq \frac{(1-p)^2((N+2)p - 1)(\frac12 + \Delta) + (1 - Np)\frac12}{2( 1 - Np)} \\
    &\implies r_2 \geq \frac{2p(1 - p)^2\frac12 + (1-p)^2((N+2)p - 1)\Delta}{2(1 - Np)} \tag{Using $(1-p)^2 \leq 1$}\\ 
    &\implies r_2 \geq \frac{\frac14(p - \Delta)}{2(1 - Np)} \tag{Using $(1-p)^2 \geq \frac14$ since $p \leq \frac12$}\\ 
    &\implies r_2 \geq \frac14(p - \Delta) \tag{Using $p \leq \frac1{2N}$}\\ 
\end{align*}
Therefore, we choose $\Delta \leq \frac{p}{6}$ so that $\frac{\Delta}{2} < \frac14(p - \Delta)$ meaning $r = r_1 = \frac{\Delta}{2}$.

\paragraph{Improve the power of the algorithm}
Let $A$ be any algorithm that we run on data $\mub$ such that either $\mub = \mub_1$ or $\mub = \mub_2$ (the choice is made by an adversary).
Let us increase the amount of information available to $A$. $A$ is told that the optimal solution is either $\mub_1$ or $\mub_2$. Furthermore, at each time step, $A$ chooses $\Mb(t)$ and observes a sample from arm $1$ with probability $g(M)$ and similarly for arm 2. However $A$ does not observe the rewards. Note that this problem is simpler than the original problem since in the original problem $A$ observes a sample from arm $k$ with probability $g(M_k(t)) \leq g(M)$.
Therefore, at each time step, $A$ should play either (N, N+1) or (N+1, N) since any other play would lead to a higher regret.

\paragraph{Link with classical 2-arms bandit problem}

With the additional information $A$ can be seen as playing a 2 arm bandits with probabilistic triggered arms: playing arm 1 means playing $\Mb(t) = (N, N+1)$ and playing arm 2 means playing $\Mb(t) = (N + 1, N)$. Call $i^*$ the optimal arm.

We follow the technique used in \cite{wang2017improving} to rewrite a bandit problem with probabilistically triggered arms into a classical bandit problem with well chosen discrete random variables: at each time step $t$, $A$ chooses an arm $i_t \in \{ 1, 2 \}$ and observes $\Xb(t) = (X_{1t}, X_{2t})$ where $X_{it} = 1$ with probability $g(M) \mu_i$, $X_{it} = 0$ with probability $g(M) (1 - \mu_{i})$ and $X_{it} = \perp$ with probability $1 - g(M)$.

However, the regret of $A$ is computed as in the original problem (and this information is known to $A$):
\begin{align*}
    \EE[R_A] &= \EE[\sum_{t=1}^T \ind{i_t \neq i^*} \langle \mub, g(\Mb^*) - g(\Mb(t)) \rangle] \\
    &= \EE[\sum_{t=1}^T \ind{i_t \neq i^*} (\frac12 + \Delta)(g(N+1) - g(N)) + (\frac12)(g(N) - g(N+1))] \\ 
     &=\EE[\sum_{t=1}^T \ind{i_t \neq i^*} (\Delta)(g(N+1) - g(N))
\end{align*}

Then, the rest of the proof is then identical to \cite{mourtadaOptimalityHedgeAlgorithm2019}.
Call for $i=1, 2$, let $\PP_i$ be the joint probability on $(\Xb(1), \dots, \Xb(T))$ when $\mub = \mub_i$.

The regret incurred by $A$ on the worst choice of $\mub$ is higher than the regret incurred by choosing the worst between $\mub_1$ and $\mub_2$.
\begin{align*}
    \EE[R_A] &\geq \max_{i^* \in \{ 1, 2 \}} \EE_{i^*}[\sum_{t=1}^T \ind{i_t \neq i^*} (\Delta)(g(N+1) - g(N))] \\
    &\geq \frac12 \sum_{i^*=1}^2 \EE_{i^*}[\sum_{t=1}^T \ind{i_t \neq i^*} (\Delta)(g(N+1) - g(N))] \\
    &= \frac{\Delta (g(N+1) - g(N))}{2} \sum_{i^*=1}^2 \EE_{i^*}[T - \underbrace{N_{i^*}}_{\defeq  \sum_{t=1}^T  \ind{i_t = i^*}}] \\
    &\geq \frac{\Delta (g(N+1) - g(N))}{2} \frac{T}{2} \sum_{i^*=1}^2 \PP_{i^*}[\frac{T}{2} \geq N_{i^*}] \\
    &\geq \frac{\Delta (g(N+1) - g(N))}{2} \frac{T}{2} (\PP_1(N_1 \geq \frac{T}{2}) + \PP_2(N_2 \geq \frac{T}{2}))
\end{align*}

Then by Bretagnolle–Huber inequality (Th 14.2 in \cite{lattimoreBanditAlgorithms2020}), we have
\[
\PP_1(N_1 \geq \frac{T}{2}) + \PP_2(N_2 \geq \frac{T}{2}) \geq \frac12 \exp(-KL(\PP_1, \PP_2))
\]
where $KL$ is the KL-divergence.

More precisely, we have
\begin{align*}
KL(\PP_1, \PP_2) &\leq T g(M) (KL(\Bcal((\frac12 + \Delta)), \Bcal(\frac12)) + KL(\Bcal((\frac12)), \Bcal(\frac12 + \Delta))) \\
&\leq 4 T g(M) \Delta^2
\end{align*}

and therefore 
\begin{align*}
    \EE[R_A] \geq \frac{\Delta (g(N+1) - g(N))}{2} \frac{T}{4} \exp(-4 T g(M) \Delta^2)
\end{align*}
and since the regret increases with $T$ (see (a)), we can assume without loss of generality that $T =  \floor{\frac{1}{4 g(M) \Delta^2}} \geq \frac{1}{8 g(M) \Delta^2}$ and obtain

\begin{align*}
    \EE[R_A] &\geq \frac{(g(N+1) - g(N))}{64 g(M) \Delta} \exp(-1) \\
    &\geq \frac{(g(N+1) - g(N))}{64 Mp \Delta} \exp(-1) \\
    &= \frac{( (N + 1)(1 - p) - N)}{64 M \Delta} \exp(-1) \\
    &= \frac{(1 - (N + 1)p)}{64 M \Delta} \exp(-1) \\
    &\geq \frac{1}{128 M \Delta} \exp(-1) \tag{Using $p\leq \frac1{2(N+1)}$}\\
\end{align*}
\end{proof}

\subsection{Proof of \Cref{greedy:lb:gap}}\label{lemma:greedy:lb:constant2}
Take $K = \nu^{*} + 2$ arms, $M$ players and $\mub = (\mu_1, \mu_{0}, \mu_{0} + \Delta_{(1)} - \Delta_{(2)}, \dots, \mu_{0} + \Delta_{(1)} - \Delta_{(\nu^{*})}, \mu_{0} + \Delta_{(1)})$.
For simplicity denote $\Delta = \Delta_{(1)}$.

Let us choose $\mu_{1}, \mu_{0}$ and $\Delta$ such that the $\nu^{*} + 1$-st best  assignments are to put $M-1$ player on the first arm and one player on a different arm.

For this we need to ensure the three conditions:
\begin{gather} 
\label{eq:lb:cond1}
 g(M-1) \mu_1 + g(1) (\mu_0 + \Delta) \geq g(M-2) \mu_1 + 2 g(1) (\mu_0 + \Delta)\\
\label{eq:lb:cond2}
g(M-1) \mu_1 + g(1) \mu_0 \geq g(M) \mu_1\\ 
\label{eq:lb:cond3}
 g(M) \mu_1 \geq g(M-2) \mu_1 + g(2) (\mu_0 + \Delta)
\end{gather}

\Cref{eq:lb:cond1} ensures that putting strictly less than $M-1$ players on the first arm is sub-optimal. \Cref{eq:lb:cond2} ensures that putting $M$ players on the first arm is worse than any assignment that puts exactly $M-1$ players on the first arm. \Cref{eq:lb:cond3} ensures that putting stricly less than $M-1$ players on the first arm is worse than putting all players on the first arm.

\Cref{eq:lb:cond1} yields
\begin{align*}
& g(M-1) \mu_1 + g(1) (\mu_0 + \Delta) \geq g(M-2) \mu_1 + 2 g(1) (\mu_0 + \Delta) \\
& \iff (\mu_0 + \Delta) \leq \underbrace{\frac{g(M-1) - g(M-2)}{g(1)}}_{h_1} \mu_1
\end{align*}

\Cref{eq:lb:cond2} yields
\begin{align*}
&g(M-1) \mu_1 + g(1) \mu_0 \geq g(M) \mu_1 \\
& \iff \mu_0 \geq \underbrace{\frac{g(M) - g(M-1)}{g(1)}}_{h_2} \mu_1
\end{align*}

\Cref{eq:lb:cond3} yields
\begin{align*}
&g(M) \mu_1 \geq g(M-2) \mu_1 + g(2) (\mu0 + \Delta) \\
&\iff \underbrace{\frac{g(M) - g(M-2)}{g(2)}}_{h_3} \mu_1 \geq (\mu_0 + \Delta)
\end{align*}

We have $h_1 > h_2$ and 
\begin{align*}
 h_3 &= \frac{g(M) - g(M-1) + g(M-1) - g(M-2)}{g(2)}\mu_1 \\
 &> \frac{2(g(M) - g(M-1))}{2g(1)}\mu_1 \\
 &= h_2.
\end{align*}

We therefore choose $\mu_1 = 1$, $\mu_0 = \frac{h_2 + \min(h_1, h_3)}{2}$ and need $\Delta \leq \frac{\min(h_1, h_3) - h_2}{4}$

Since $g(M) - g(M-1) = p(1 - p)^{M-2}(1 - Mp)$ and 
\begin{align*}
g(M) - g(M-2) &= Mp (1-p)^{M-1} - (M-2)p(1-p)^{M-3} \\
&= p(1-p)^{M-3}(M(1-p)^2 - (M-2)) \\
&= p(1-p)^{M-3}(M(1-2p + p^2) - M + 2) \\
&= p(1-p)^{M-3}(2 - 2Mp + Mp^2) \\
\end{align*}

we get
\begin{align*}
    h_1 - h_2 &= (1 - p)^{M-3}(1 - (M-1)p) - (1-p)^{M-2}(1 - Mp) \\
    &= (1 - p)^{M-3}(1 - (M-1)p - (1 - p)(1 - Mp)) \\
    &= (1 - p)^{M-3}(1 - (M-1)p - (1 - p)(1 - Mp)) \\
    &= (1 - p)^{M-3}(1 - Mp + p - (1 - Mp -p + Mp^2)) \\
    &= (1 - p)^{M-3}(2p - Mp^2) \\
    &\geq (1 - p)^{M-3}p \tag{Using $p \leq \frac{1}{M}$} \\
    &\geq (1 - p)^{M-3}p \tag{Using $p \leq \frac{1}{M}$} \\
    &\geq \frac{p}{M-3} \tag{Using $\min_{x \in [M]} g(x) = p$} \\
\end{align*}

and

\begin{align*}
    h_3 - h_2 &= \frac12(1-p)^{M-4}(2 - 2Mp + Mp^2) - (1-p)^{M-2}(1 - Mp) \\
    &= \frac12(1-p)^{M-4}(2 - 2Mp + Mp^2 - 2(1 - Mp)(1 - p)^2) \\
     &= \frac12(1-p)^{M-4}(2 - 2Mp + Mp^2 - (1 - Mp)(2 - 4p + 2p^2)) \\
     &= \frac12(1-p)^{M-4}(2 - 2Mp + Mp^2 - (2 - 4p + 2p^2 -2Mp +4 Mp^2 -2 Mp^3)) \\
     &= \frac12(1-p)^{M-4}(4p -2p^2 + 2Mp^3 -3Mp^2) \\
     &\geq \frac12(1-p)^{M-4}(4p -(3M + 2)p^2) \\
     &\geq \frac12(1-p)^{M-4}(p) \tag{Using $p \leq \frac1{M + 1}$} \\
     &\geq \frac{p}{2(M-4)} \tag{Using $\min_{x \in [M]} g(x) = p$}\\
\end{align*}
Noting that $2(M-4) \geq M-3 \iff M \geq 5$, we obtain that $\Delta \leq \frac{\min(h_1, h_3) - h_2}{4}$ is implied by $\Delta \leq \frac{p}{8 (M-4)}$.

Let $N_k(T)$ be the number of samples of arm $k+1$ observed by the consistent algorithm $A$. Using arguments similar to  Lai \& Robbins result~\cite{lai1985asymptotically} \footnote{Consider for any sub-optimal arm $k$ the two possibilities $\mub$ and $\mub'$ such that $\mu'_i = \mu_i$ for all $i$ except for $i=k$ where $\mu'_k = \mu_0 + \Delta_1 + \epsilon$ and use the same arguments as in Lai \& Robbins} we can prove that
$$
\liminf_T \frac{\EE[N_k(T)]}{\log(T)} \geq \frac{1}{2\Delta_{(k)}^2}
$$
If $m_t$ denotes the number of players put on arm $k+1$ at stage $t$, then $\EE[N_k(T)]=\sum_{t=1}^T g(m_t)$. Denote by $\Delta_{k}(m)$ the cost of the best assignment with $m > 0$ players on arm $k+1$, i.e.,
\begin{align*}
	\Delta_{k}(m)&:=\Big(g(M-1)\mu_1+g(1)(\mu_0 + \Delta)\Big)-\Big(g(M-m)\mu_1+g(m)(\mu_0 + \Delta - \Delta_{(k)}\Big) \\
		     &\geq  \Big(g(M-m)\mu_1+g(m)(\mu_0 + \Delta)\Big)-\Big(g(M-m)\mu_1+g(m)(\mu_0 + \Delta - \Delta_{(k)}\Big) \\
		     &=  g(m) \Delta_{(k)}
\end{align*}
and $\Delta_{k}(0) = 0$.

Then consider $\mathfrak{C}_{k}$ the cost of the assignment putting the optimal number of players on arm $k+1$ and the rest on arm $1$, under the constraint that arm $k+1$ has been played sufficiently often.
\begin{align}
	\label{lemma:info condition}
\mathfrak{C}_{k} = \min_{m_1,\ldots,m_T: \sum_t g(m_t) \geq \frac{\log(T)}{2\Delta_{(k)}^2}} \sum_{t=1}^T \Delta_{k}(m_t)
\end{align}
It is clear that
\[ 
	\liminf_{T} \frac{\EE[R(T)]}{\log(T)} \geq \liminf_{T} \frac{\sum_{k=1}^{\nu^{*}} \mathfrak{C}_{k}}{\log(T)} 
\]

The solution of \Cref{lemma:info condition} has a specific form: for $t \in [\tau]$, $m_t$ is constant, equal to $m_\tau$, and defined by 
$$
\tau g(m_\tau) \geq \frac{\log(T)}{2\Delta_{(k)}^2}
$$
and $m_t=0$ afterwards (with a cost also equal to zero).

As a consequence, one gets that, for a specific value of $\tau^*$, 
$$\mathfrak{C}_{k}=\tau^* \Delta_{k}(m_{\tau^*})\geq\frac{\log(T)}{2\Delta_{(k)}^2}\frac{\Delta_{k}(m_{\tau^*})}{g(m_{\tau^*})}\geq \frac{\log(T)}{2\Delta_{(k)}}
$$
as $\Delta_{k}(m)\geq g(m)\Delta_{(k)}$.

This implies that, for any consistent algorithm, one must have 
$$
\liminf_T \frac{\EE[R(T)]}{\log(T)} \geq \sum_{\nu=1}^{\nu^{*}} \frac{1}{2\Delta_{(\nu)}}
$$

\section{Arms elimination when rewards are close}
\label{sec:limitations-1}

\begin{lemma}[Necessary conditions for arm elimination]
  \label{lemma:greedy:necessary condition elimination}
  Let $k^* = \argmax_{k \in [K]} \mu_k$ and $\alpha = \frac{Mp}{K}$.
  If $p \leq 0.1$, $\alpha \in (2p, 1)$, and  
  $\min_{k' \in [K]} \frac{\mu_{k'}}{\mu_{k^*}} \geq  1.3 \exp(-\alpha) (1 - \alpha)$,
 then $\nu^* = 0$.
\end{lemma}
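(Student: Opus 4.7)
The plan is to argue by contradiction: assume $\nu^* \ge 1$ and exhibit a strictly profitable local perturbation of $\Mb^*$. Pick any arm $k_0$ with $M_{k_0}^* = 0$, and let $k'$ be a most-loaded arm of $\Mb^*$. Since the $M$ players sit on at most $K-1$ arms, pigeonhole gives $M_{k'}^* \ge M/(K-1) \ge M/K = \alpha/p$, and the assumption $\alpha > 2p$ then forces $M_{k'}^* \ge 2$. Consider the neighbour $\Mb'$ obtained by moving one player from $k'$ onto $k_0$; it lies in $\Mcal$, so optimality of $\Mb^*$ imposes $\langle \mub, g(\Mb') - g(\Mb^*) \rangle \le 0$. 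Using the elementary identity $g(m) - g(m-1) = p(1-p)^{m-2}(1-mp)$, this rewrites as
\[
\mu_{k_0} \;\le\; \mu_{k'}\, f(M_{k'}^*), \qquad f(m) := (1-p)^{m-2}(1-mp).
\]

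The next step is to make the right-hand side small. A log-derivative check shows $f$ is strictly decreasing on $(0, 1/p)$, so $f(M_{k'}^*) \le f(\alpha/p) = (1-p)^{\alpha/p - 2}(1-\alpha)$, and combining with $\mu_{k'} \le \mu_{k^*}$ yields $\mu_{k_0} \le \mu_{k^*}(1-p)^{\alpha/p-2}(1-\alpha)$. Finally, $\log(1-p) \le -p$ gives $(1-p)^{\alpha/p} \le e^{-\alpha}$, and the numerical bound $(1-p)^{-2} \le (0.9)^{-2} < 1.3$ for $p \le 0.1$ then produces the strict inequality
\[
\mu_{k_0} \;<\; 1.3\, e^{-\alpha}(1-\alpha)\, \mu_{k^*},
\]
which directly contradicts the hypothesis $\min_k \mu_k/\mu_{k^*} \ge 1.3\,e^{-\alpha}(1-\alpha)$. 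Hence $\nu^* = 0$.

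The argument is short and largely mechanical; the only delicate point is the bookkeeping of strict versus weak inequalities, because the final contradiction needs to be strict. The slack comes entirely from the numerical gap $(0.9)^{-2} \approx 1.2346 < 1.3$ (tight at $p=0.1$); the hypotheses $\alpha > 2p$ and $\alpha < 1$ are used respectively to ensure $M_{k'}^*$ lies in the interval $[2, 1/p)$ on which $f$ is a positive decreasing function, and to keep the factor $(1-\alpha)$ strictly positive in the final comparison. I do not foresee any nontrivial obstacle beyond these sanity checks.
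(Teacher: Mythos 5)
Your proof is correct and follows essentially the same route as the paper's: a one-player exchange onto an empty arm, the marginal-loss formula $g(m)-g(m-1)=p(1-p)^{m-2}(1-mp)$, and the same chain of bounds $(1-p)^{\alpha/p}\le e^{-\alpha}$, $(1-p)^{-2}<1.3$. The only (minor, and arguably cleaner) difference is that you take the player from the most-loaded arm and get $M_{k'}^*\ge M/(K-1)\ge \alpha/p$ by pigeonhole, whereas the paper takes it from $k^*$ and invokes the unproved structural fact that the best arm receives the most players in the optimal assignment; both yield the identical final inequality.
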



\begin{proof}[Proof of \Cref{lemma:greedy:necessary condition elimination}]

  From \cite{bonnefoiMultiArmedBanditLearning2017}, $g$ is concave if $x \leq \frac2{-\log(1 - p)}$ and so this is also the case for $x \leq \frac1{-\log(1 - p)}$.
  Therefore, we have that for any $x \leq \frac1{-\log(1 - p)}, g(x) - g(x-1) \leq g(y) - g(y-1)$ for any $y \leq x$.
  
  Assume $\nu^* > 0$ and consider the optimal policy $\Mb^*$. Then take an eliminated arm $i$ and consider $\Mb'$ constructed from $\Mb^*$ by taking one player from $k^*$ and putting it on the eliminated arm $i$.
  Using $\Mb'$ instead of $\Mb^*$ increase the utility by:
  $G = \mu_i p -\mu_k (g(M_{k^*}^*) - g(M_{k^*}^*-1))$.
  
  Note that $M_{k^*}^* \geq M_{k}$ for any $k \neq k^*$ since $k^*$ is the best arm. In particular $M_{k^*} \geq \frac{M}{K}$ and by the hypothesis on the range of $\alpha$, we have $\frac{M}{K} > 2$.
  Also note that by definition of $\Delta_{max}$, $\mu_i \geq \rho \mu_{k^*}$.

  We can then write :
  \begin{align*}
  G &=\mu_i p -\mu_k (g(M_{k^*}^*) - g(M_{k^*}^*-1))\\
    &\geq  \mu_{k^*}\bigg[\rho p - (g(M_{k^*}^*) - g(M_{k^*}^*-1)) \bigg] \tag{Since  $\mu_i \geq \rho \mu_{k^*}$} \\
  &\geq \mu_{k^*}\bigg[\rho p - (g(\frac{\alpha}{p}) - g(\frac{\alpha}{p} - 1)) \bigg] \tag{By concavity of $g$ and $M_{k^*}^* \geq \frac{M}{K} = \frac{\alpha}{p}$}\\
  &= \mu_{k^*}\bigg[\rho p - p(1-p)^{\frac{\alpha}{p} - 2}(1 - \alpha) \bigg] \\
  &= \mu_{k^*}\bigg[\rho p - \frac{p(1-p)^{\frac{\alpha}{p}}(1 - \alpha)}{(1-p)^2} \bigg] \\
  \end{align*}
  
  The gain is positive if $\rho \geq 1.3 \exp(-\alpha)(1 - \alpha)$ since $\exp(-\alpha) \geq \exp(-\frac{-\log(1-p)}{p}\alpha)=(1-p)^{\frac{\alpha}{p}}$ and $1.3 \geq \frac1{0.9^{2}} \geq \frac1{(1-p)^{2}}$.

Therefore, $\Mb^*$ cannot be an optimal policy. This shows that $\nu^* = 0$.
\end{proof}

\section{Centralized UCB}
\label{app:ucb}
\subsection{Description}
At time $t \in [T]$, for all $k \in [K]$, compute an estimate $\hat{\mu}_k(t)$ of $\mu_k$ using~\eqref{eq:mu_hat} and an upper bound using $\hat{\mu}_k^H(t) = \min(\hat{\mu}_k(t) + \zeta_{k}(t), 1)$
where $\zetab$ is given by \Cref{eq:zeta}
 and take
 \[
   \Mb(t+1) = \argmax_{\Mb \in \mathcal{M}} \langle \hat{\mub}^H(t), g(\Mb) \rangle
 \]
 where $\hat{\mub}^H[k] = \hat{\mu}^H_k$.

The code is given in \Cref{algo:ucb}.

\begin{algorithm}
	\caption{UCB}
  \label{algo:ucb}
	\begin{algorithmic}[1]
		\STATE {\bfseries Input :} $M$ (number of players), $K$ (number of arms), $p$ (probability that a player is active), $T$ (horizon)
    \STATE Initialize estimated rewards: $\hat{\mub}^H = \mathbf{1}$
    \FOR{$t$ from $1$ to $T$}
    \STATE Play $\argmax_{\Mb \in \mathcal{M}} \langle \hat{\mub}^H, g(\Mb) \rangle$
    \STATE Compute $\hat{\mub}$ according to~\eqref{eq:mu_hat}
    \STATE Compute $\zetab$ according to~\eqref{eq:zeta}
    \STATE Set $\hat{\mub}^H = \min(\hat{\mub} + \zetab, \mathbf{1})$
    \ENDFOR
	\end{algorithmic}
\end{algorithm}

\subsection{Analysis}
The next Lemma gives an upper bound on the regret of UCB:
\begin{lemma}[Regret of UCB]
  \label{lemma:regret:ucb}
  The regret of UCB satisfies
\begin{equation}
    \label{eq:regret:ucb2}
\EE[R_{UCB}] \leq  2 \sqrt{2K \log(2T^2K^2) T\min(K, Mp + \frac{K}{T})} + 2
\end{equation}
\end{lemma}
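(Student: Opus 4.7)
The plan is a standard UCB analysis tailored to AMMAB, where arm $k$ is observed at round $t$ only with probability $g(M_k(t))$.

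I would first invoke \Cref{lemma:concentration:mu}: on the complement of $\GOOD$ the regret is trivially at most $2KT\cdot\PP(\overline{\GOOD})\leq 2$, so it suffices to control $R\cdot\mathbf{1}_{\GOOD}$. Under $\GOOD$ the optimism principle yields
\[
\langle \mub, g(\Mb^*)\rangle \leq \langle \hat{\mub}^H(t-1), g(\Mb^*)\rangle \leq \langle \hat{\mub}^H(t-1), g(\Mb(t))\rangle,
\]
the second inequality being the definition of $\Mb(t)$ in \Cref{algo:ucb}. Subtracting $\langle \mub, g(\Mb(t))\rangle$ and using $\hat{\mub}^H(t-1)-\mub \leq 2\zetab(t-1)$ (again under $\GOOD$) gives the per-step bound $\langle \mub, g(\Mb^*)-g(\Mb(t))\rangle \leq 2\sum_k g(M_k(t))\zeta_k(t-1)$, hence
\[
R \cdot \mathbf{1}_{\GOOD} \leq 2\sum_{t=1}^T\sum_{k=1}^K g(M_k(t))\,\zeta_k(t-1).
\]

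The crucial step is to trade the predictable factor $g(M_k(t))$ for the actual observation indicator $\eta_k^t$. Since $M_k(t)$ and $T_k(t-1)$ are both $\mathcal{F}_{t-1}$-measurable while $\EE[\eta_k^t\mid\mathcal{F}_{t-1}]=g(M_k(t))$, one has $\EE[g(M_k(t))\,\phi(T_k(t-1))]=\EE[\eta_k^t\,\phi(T_k(t-1))]$ for any measurable $\phi$. Applied with $\phi(s)=1/\sqrt{s\vee 1}$, the pathwise sum telescopes over the successful pulls of arm $k$:
\[
\sum_{t=1}^T \eta_k^t/\sqrt{T_k(t-1)\vee 1} \leq 1 + \sum_{i=1}^{T_k(T)-1} 1/\sqrt{i} \leq 1 + 2\sqrt{T_k(T)},
\]
where the leading $1$ absorbs the first observation (for which $T_k(t-1)=0$). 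Summing over $k$ and applying Cauchy-Schwarz via $\sum_k \sqrt{T_k(T)} \leq \sqrt{K\sum_k T_k(T)} = \sqrt{KN}$ with $N:=\sum_k T_k(T)$, then using Jensen, gives
\[
\EE[R] \leq \sqrt{2\log(2T^2K^2)}\bigl(K + 2\sqrt{K\,\EE[N]}\bigr) + 2.
\]

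Finally I would bound $\EE[N]=\sum_{t,k} g(M_k(t)) \leq T\min(K, Mp)$, which follows from $g(M_k)\leq 1$ and $\sum_k g(M_k)\leq p\sum_k M_k = Mp$. The dominant term is $2\sqrt{2\log(2T^2K^2)\cdot KT\min(K,Mp)}$, while the boundary term $K\sqrt{2\log(2T^2K^2)}$ corresponds to the $K/T$ inside the $\min(K, Mp+K/T)$ of the statement (since it can be written as $\sqrt{2K\log(2T^2K^2)\cdot T\cdot K/T}$), and standard regrouping recovers the stated form.

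The main technical subtlety lies in the martingale exchange $\EE[g(M_k(t))\phi(T_k(t-1))]=\EE[\eta_k^t\phi(T_k(t-1))]$, which is what lets the analysis feel almost as if one were in the standard (non-asynchronous) setting, together with the careful handling of the $T_k(t-1)=0$ boundary which gives rise to the $K/T$ correction in the $\min$. The rest is routine UCB reasoning combined with the bounds $\sum_k g(M_k)\leq\min(K,Mp)$ and Cauchy-Schwarz.
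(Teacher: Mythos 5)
Your proposal follows essentially the same route as the paper's proof: optimism under the good event, exchanging the predictable factor $g(M_k(t))$ for the observation indicator $\eta_k^t$ (the paper phrases your martingale step as the zero-mean term $\sum_{k,t}(g(M_k(t))-\eta_k^t)$), telescoping over successful observations, then Cauchy--Schwarz, Jensen, and $\sum_k g(M_k)\leq \min(K,Mp)$. One small remark: your final form $\sqrt{2\log(2T^2K^2)}\bigl(K+2\sqrt{K\,\EE[N]}\bigr)$ can exceed the stated $2\sqrt{2K\log(2T^2K^2)(\EE[N]+K)}$ by a constant factor (up to about $\sqrt{5/4}$ when $\sqrt{KTMp}=2K$); to recover the exact constant, keep the boundary term inside the Cauchy--Schwarz as the paper does, bounding the per-arm sum by $2\sqrt{2\log(2T^2K^2)\max(T_k(T),1)}$ and using $\sum_k\sqrt{\max(T_k(T),1)}\leq\sqrt{K\sum_k(T_k(T)+\ind{T_k(T)=0})}$.
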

\begin{proof}
Define the GOOD event as in \Cref{lemma:concentration:mu}. 

From \Cref{lemma:cb:good}, we have
$\EE[R_{CUCB}] = \EE[R_{UCB} \ind{GOOD}] + 2$.

Then, under the GOOD event, we have:
\begin{align*}
  R_{CUCB} &= \sum_{t=1}^T \langle \mub, g(\Mb^*) \rangle -  \sum_{t=1}^T \langle \mub, g(\Mb(t)) \rangle\\
&= \sum_{t=1}^T \langle \mub - \hat{\mub}^H(t), g(\Mb^*) \rangle + \langle \hat{\mub}^H(t), g(\Mb^*) - g(\Mb(t)) \rangle +  \langle \hat{\mub}^H(t)- \mub, g(\Mb(t)) \rangle\\
&\leq \langle \hat{\mub}^H(t), g(\Mb^*) - g(\Mb(t)) \rangle +  \langle \hat{\mub}^H(t)- \mub, g(\Mb(t)) \rangle \tag{Since $\hat{\mub}^H \geq \mub$ by the GOOD event}\\
&\leq \sum_{t=1}^T \langle \hat{\mub}^H(t)- \mub, g(\Mb(t)) \rangle \tag{Since $\Mb(t) = \argmax_{\Mb \in \Mcal} \langle \hat{\mub}^H, g(\Mb) \rangle$}\\
  &=  \sum_{k=1}^K \sum_{t=1}^T \min(1, 2 \zeta_{k}(t)) g(M_k(t)) \tag{Since $\mub \geq \max(\hat{\mub} - \zetab, \mathbf{0})$ by the GOOD event}\\
  &=  \sum_{k=1}^K \sum_{t=1}^T \min(1,  \sqrt{2\frac{\log(2 T^2 K^2)}{ T_k(t)}}) (g(M_k(t)) - \eta_k(t) + \eta_k(t)) \tag{*} \\
  &\leq  \underbrace{\sum_{k=1}^K \sum_{t=1}^T (g(M_k(t)) - \eta_k(t))}_{(i)} +
\underbrace{\sum_{k=1}^K \sum_{t=1}^T  \sqrt{2\frac{\log(2 T^2 K^2)}{T_k(t)}} \eta_k(t)}_{(ii)}
\end{align*}
(*) Recall the convention that $\hat{\mu}_k = 1$ if $T_k(t) = 0$. In order to ease the notation, we do not make the distinction and write $\frac1{T_k(t)}$ instead of $\frac{\ind{T_k(t) \neq 0}}{T_k(t)} + \ind{T_k(t) = 0}$.

We have that  $\EE[(i)] = 0$ since 
\begin{align*}
 \EE[g(M_k(t)) - \eta_k(t)] &= \EE[g(M_k(t)] - \EE[\EE[\eta_k(t)|M_k(t)]] \\
 &= \EE[g(M_k(t)] - \EE[g(M_k(t)] \\
 &= 0
\end{align*}

and 
\begin{align*}
(ii) &= \sum_{k=1}^K \sum_{t=1}^T  \sqrt{2\frac{\log(2 T^2 K^2)\eta_k(t)}{T_k(t)}}  \tag{Since $\eta_k(t) = \sqrt{\eta_k(t)}$ as $\eta_k(t) \in \{ 0, 1 \}$} \\
&= \sum_{k=1}^K \sqrt{2 \log(2T^2K^2)} \sum_{t=1}^T \sqrt{\frac{\eta_k(t)}{\sum_{\rho=1}^t \eta_k(\rho)}} \\
&= \sum_{k=1}^K \sqrt{2 \log(2T^2K^2)} \sum_{i=1}^{\max(T_k(T), 1)} \frac1{\sqrt{i}} \tag{Since $\forall \rho \in [t], \eta_k(\rho) \in \{ 0, 1 \}$} \\
&\leq \sum_{k=1}^K 2 \sqrt{2 \log(2T^2K^2) \max(T_k(T), 1)}
\end{align*}

Then we have trivially:
\begin{equation}
\label{ucb:general:bound}
\EE[(ii)] \leq 2K \sqrt{2 \log(2T^2K^2) T}
\end{equation}

Otherwise, we write:
\begin{align*}
\EE[(ii)] &\leq \EE[2 \sqrt{2K \log(2T^2K^2) \sum_{k=1}^K(T_k(T) + \ind{T_k(T) = 0})}] \tag{Using $\sum_{i=1}^K \sqrt{a_i} \leq \sqrt{K \sum_{i=1}^K a_i}$} \\
&\leq 2 \sqrt{2K \log(2T^2K^2) \sum_{k=1}^K(\EE[T_k(T)] + \PP(T_k(T) = 0)}) \tag{By Jensen inequality} \\
&= 2 \sqrt{2K \log(2T^2K^2)\sum_{k=1}^K(\sum_{\rho=1}^T g(M_k(\rho)) + \prod_{\rho=1}^T (1 - g(M_k(\rho)))}) \\
&\leq 2 \sqrt{2K \log(2T^2K^2)\sum_{k=1}^K(\sum_{\rho=1}^T M_k p + 1}) \tag{Since $0 \leq g(M_k) \leq 1$ and $g(M_k) \leq M_k p$}\\ 
&\leq 2 \sqrt{2K \log(2T^2K^2) (T Mp + K)})
\end{align*}
and therefore
\[ 
\EE[(ii)] \leq 2 \sqrt{2K \log(2T^2K^2) T\min(K, Mp + \frac{K}{T})}
\]
so that
\[
 \EE[R_{UCB}] \leq  2 \sqrt{2K \log(2T^2K^2) T\min(K, Mp + \frac{K}{T})}
\]
\end{proof}

$\EE[R_{UCB}] \leq 2 K\sqrt{2\log(2T^2K^2)T}$ also holds in the case where players have different probability of activation $(p_i)_{i \ \in [M]}$. This is shown by following the same proof and stopping at \Cref{ucb:general:bound}.

\section{Solving $\argmax_{\Mcal_{\Ecal}} \langle g(\Mb), \vb \rangle$ via a sequential algorithm}
\label{app:optimal:max}

We want to solve
\begin{equation}
\label{max:problem}
\argmax_{\Mcal_{\Ecal}} \langle g(\Mb), \vb \rangle
\end{equation}

where $\Ecal \subset [K]$.

The sequential algorithm of~\citep[Algorithm 5]{dakdoukMassiveMultiplayerMultiarmed2022} is optimal if $\Ecal = \emptyset$ and $\frac{M p }{1 - p} \leq K$ (Th 4.2).
At each time step, the sequential algorithm chooses a new player to assign to an arm based on some arm-specific criterion that decreases with the number of players assigned to this arm (Lemma 4.2).

Call $a_1, \dots, a_M \in [K]$ the arms chosen by the sequential algorithm for players $1, \dots, M$. The first thing to note is that if the first player is assigned to $a_i$ and then the sequential algorithm is run. The resulting algorithm that we call $A$ reaches the same solution as the sequential algorithm (ignoring the order).

Indeed as adding a player to some arm can only decrease its criterion, the assignment chosen by $A$ is $a_i, a_1, \dots, a_k$ until $a_{k+1} = a_i$. Then everything happens as if the assignment chosen by $A$ was $a_1, \dots, a_{k+1}$ and therefore the rest of the run is the same as the sequential algorithm.

Consider $A^*$ is the algorithm that starts by assigning one player to every arm in $\Ecal$ and then follow the sequential algorithm. Cal $\Ecal'$ the set of arms in $\Ecal$ such that for any arm $k \in \Ecal'$ there exists an index $i$ such that $a_i = k$. Then from the previous argument $A^*$ behaves as if one player was assigned to every arm in $\Ecal'' = \Ecal \setminus \Ecal'$ and then the sequential algorithm is run. But since none of the arms in $\Ecal''$ are equal to $a_1, \dots, a_M$ and again because the arm specific criterion decreases with the number of players, the run of $A^*$ after arms in $\Ecal''$ are assigned one player is $a_1, \dots, a_{M - |\Ecal''|}$ which is the optimal solution with $M - |\Ecal''|$ players. This implies that $A^*$ produces the optimal solution.

\end{document}